\let\appendices\relax
\newcommand{\calC}{{\cal C}}
\newcommand{\calE}{{\cal E}}
\newcommand{\calF}{{\cal F}}
\newcommand{\calG}{{\cal G}}
\newcommand{\calL}{{\cal L}}
\newcommand{\calM}{{\cal M}}
\newcommand{\calN}{{\cal N}}
\newcommand{\calP}{{\cal P}}
\newcommand{\calS}{{\cal S}}
\newcommand{\calT}{{\cal T}}
\newcommand{\calV}{{\cal V}}
\newcommand{\calW}{{\cal W}}
\newcommand{\calX}{{\cal X}}
\newcommand{\bfc}{\mathbf{c}}
\newcommand{\bfe}{\mathbf{e}}
\newcommand{\bfg}{\mathbf{g}}
\newcommand{\bfh}{\mathbf{h}}
\newcommand{\bfm}{\mathbf{m}}
\newcommand{\bfp}{\mathbf{p}}
\newcommand{\bfq}{\mathbf{q}}
\newcommand{\bfs}{\mathbf{s}}
\newcommand{\bfx}{\mathbf{x}}
\newcommand{\bfy}{\mathbf{y}}
\newcommand{\bfz}{\mathbf{z}}
\newcommand{\bfbeta}{\boldsymbol{\beta}}
\newcommand{\bfgamma}{\boldsymbol{\gamma}}
\newcommand{\bftheta}{\boldsymbol{\theta}}
\newcommand{\bfrho}{\boldsymbol{\rho}}
\newcommand{\bfxi}{\boldsymbol{\xi}}
\newcommand{\bfE}{\mathbf{E}}
\newcommand{\bfI}{\mathbf{I}}
\newcommand{\bfQ}{\mathbf{Q}}
\newcommand{\bfR}{\mathbf{R}}
\newcommand{\bfU}{\mathbf{U}}
\newcommand{\bfV}{\mathbf{V}}
\newcommand{\bfX}{\mathbf{X}}
\newcommand{\bbE}{\mathbb{E}}
\newcommand{\bbR}{\mathbb{R}}
\newcommand{\bbS}{\mathbb{S}}
\newcommand{\bbZ}{\mathbb{Z}}
\newcommand{\crl}[1]{\left\{#1\right\}}
\newcommand{\SE}{\textit{SE}(3)}
\newcommand{\se}{\mathfrak{se}(3)}
\DeclareMathOperator{\grad}{grad}
\algnewcommand{\LineComment}[1]{\State \(\triangleright\) #1}
\algrenewcommand\algorithmicindent{0.75em}%
\DeclareMathOperator{\Exp}{Exp}
\newtheorem{theorem}{Theorem}
\newtheorem{lemma}{Lemma}
\theoremstyle{definition}
\newtheorem{definition}{Definition}
\newtheorem{assumption}{Assumption}
\newtheorem{step}{Step}
\newtheorem{problem}{Problem}
\theoremstyle{remark}
\newtheorem*{example}{Example}
\title{\LARGE \bf Riemannian Optimization for Active Mapping with Robot Teams%
}
\author{Arash~Asgharivaskasi$^{1}$,~\IEEEmembership{Student Member,~IEEE,} Fritz Girke$^{2}$, Nikolay~Atanasov$^{1}$,~\IEEEmembership{Senior Member,~IEEE}
\thanks{The authors gratefully acknowledge support from NSF FRR CAREER 2045945, ONR N00014-23-1-2353, and ARL DCIST W911NF-17-2-0181.}%
\thanks{$^{1}$A. Asgharivaskasi and N. Atanasov are with the Department of Electrical and Computer Engineering, University of California San Diego, La Jolla, CA 92093, USA (e-mails: {\tt\small \{aasghari,natanasov\}@ucsd.edu}).}
\thanks{$^{2}$F. Girke is with the School of Computation, Information and Technology, Technical University of Munich, Munich 80333, Germany (e-mail: {\tt\small fritz.girke@tum.de}).}}
\begin{document}

\markboth{IEEE Transactions on Robotics,~Vol.~X, No.~X, Month~20XX}%
{Asgharivaskasi, Girke, and Atanasov: Riemannian Optimization for Active Mapping with Robot Teams}
%

\maketitle


\begin{abstract}
Autonomous exploration of unknown environments using a team of mobile robots demands distributed perception and planning strategies to enable efficient and scalable performance. Ideally, each robot should update its map and plan its motion not only relying on its own observations, but also considering the observations of its peers. Centralized solutions to multi-robot coordination are susceptible to central node failure and require a sophisticated communication infrastructure for reliable operation. Current decentralized active mapping methods consider simplistic robot models with linear-Gaussian observations and Euclidean robot states. In this work, we present a distributed multi-robot mapping and planning method, called Riemannian Optimization for Active Mapping (ROAM). We formulate an optimization problem over a graph with node variables belonging to a Riemannian manifold and a consensus constraint requiring feasible solutions to agree on the node variables. We develop a distributed Riemannian optimization algorithm that relies only on one-hop communication to solve the problem with consensus and optimality guarantees. We show that multi-robot active mapping can be achieved via two applications of our distributed Riemannian optimization over different manifolds: distributed estimation of a 3-D semantic map and distributed planning of $\SE$ trajectories that minimize map uncertainty. We demonstrate the performance of ROAM in simulation and real-world experiments using a team of robots with RGB-D cameras.
\end{abstract}

\begin{IEEEkeywords}
Distributed Robot Systems, Reactive and Sensor-Based Planning, Mapping, Distributed Riemannian Optimization
\end{IEEEkeywords}

\IEEEpeerreviewmaketitle

\section{Introduction}
\label{sec:intro}

\begin{figure}[t]
    \begin{subfigure}[t]{\linewidth}
    \centering
    \includegraphics[width=\linewidth]{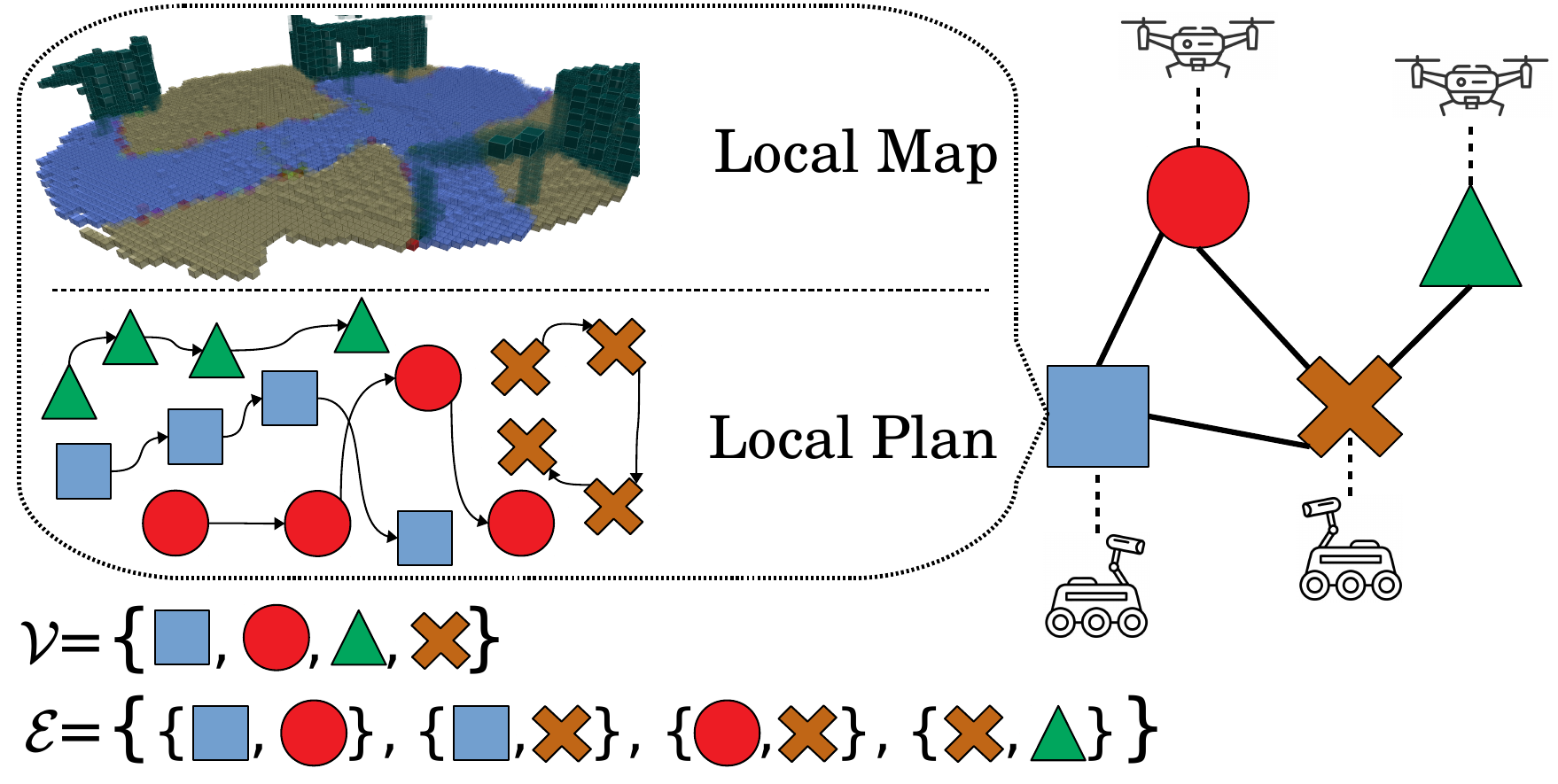}
    \captionsetup{justification=centering}
    \caption{Robot network and the communicated messages}
    \label{subfig:intro_a}
    \end{subfigure}\\
    \begin{subfigure}[t]{\linewidth}
    \centering
    \includegraphics[width=0.9\linewidth]{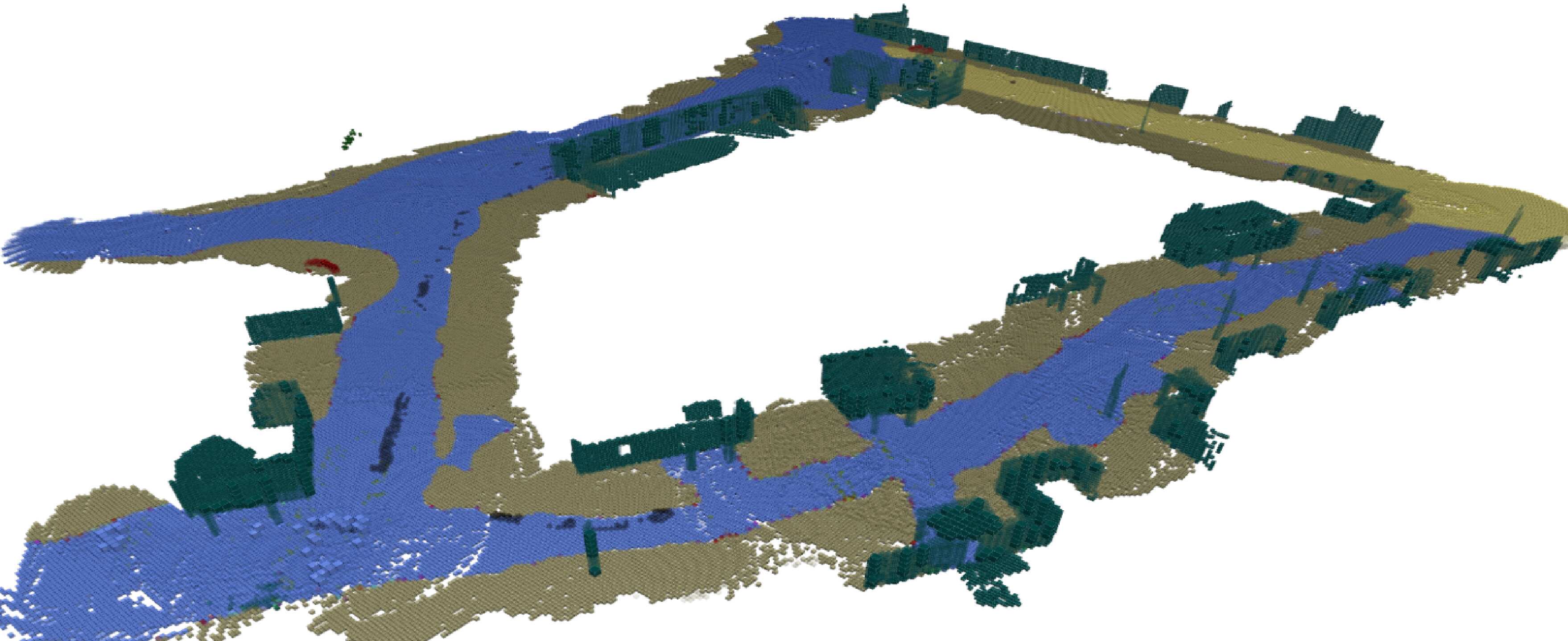}
    \captionsetup{justification=centering}
    \caption{Local maps become globally consistent over time}
    \label{subfig:intro_b}
    \end{subfigure}
    \caption{Overview of our distributed multi-robot active mapping approach. (a) A team of robots, denoted by vertex set $\calV$, collaboratively explores an unknown environment. Each robot builds a local map using onboard sensor measurements and computes a local plan for the team, with the goal of maximizing the collective information gathered by the team. The local maps and plans are communicated over a peer-to-peer network whose connectivity is represented by the edge set $\calE$. (b) As the robot team continues communication, the local maps stored by different robots become globally consistent in that they store similar information about the environment.}
    \label{fig:intro}
\end{figure}

The ability to explore unknown environments and discover objects of interest is a prerequisite for autonomous execution of complex tasks by mobile robots. Active mapping methods \cite{placed2023survey} consider joint optimization of the motion of a robot team and the fidelity of the map constructed by the team. The goal is to compute maximally informative robot trajectories under a limited exploration budget (e.g., time, energy, etc.).


Time-critical applications, such as search and rescue \cite{sar_survey, darpa_1} and security and surveillance \cite{coopertaive_surveillance}, as well as large-scale operations, such as environmental monitoring~\cite{multi_central_1}, substantially benefit if exploration is carried out by a team of coordinating robots. This is traditionally done via multi-robot systems relying on centralized estimation and control \cite{multi_central_1, multi_central_10, multi_central_11, multi_central_12, multi_central_coverage_survey}, where each robot receives local sensor observations, builds its own map, and sends it to a central node for map aggregation and team trajectory computation. The availability of powerful computation onboard small robot platforms makes it possible to develop autonomous exploration algorithms without the need for a central processing node \cite{liu2023active}. Removing the central unit improves the resilience of a multi-robot system with respect to communication-based faults and central node failures \cite{resilience} but brings up new challenges related to distributed storage, computation, and communication. How can one guarantee that the performance of decentralized active mapping would be on par with a centralized architecture in terms of global map consistency and team trajectory optimality?

To address this question, we propose ROAM: \textbf{R}iemannian \textbf{O}ptimization for \textbf{A}ctive \textbf{M}apping with robot teams. ROAM is a decentralized Riemannian optimization algorithm that operates on a communication graph with node variables belonging to a Riemannian manifold and ensures consensus among the node variables. The graph nodes correspond to different robots, while the graph edges model the communication among the robots. In the context of mapping, the node variables are categorical probability mass functions representing probabilistic maps with different semantic classes (e.g., building, vegetation, terrain) at each robot. The consensus constraint requires that the local maps of different robots agree with each other. In the context of planning, the node variables are trajectories of $\SE$ robot poses. Each robot plans trajectories for the whole team using its local information, while the consensus constraint requires that the team trajectories computed by different robots agree. See Fig.~\ref{fig:intro} for an overview of ROAM.



We demonstrate the performance of ROAM in a variety of simulation and real-world experiments using a team of wheeled robots with on-board sensing and processing hardware. Specifically, each robot gathers range and semantic segmentation measurements using an RGBD sensor, and incrementally builds a local 3-D semantic grid map of the environment, where each map cell maintains a probability distribution over object classes. To achieve memory and communication efficiency, an octree data structure is employed to represent the 3-D semantic maps \cite{vaskasi_TRO}. The robots cooperatively find the most informative set of $\SE$ paths for the team to efficiently improve the map and explore the unknown areas while avoiding obstacle collisions. Both multi-robot mapping and planning are performed in the absence of a central estimation and control node and only involve peer-to-peer communication among neighboring robots.


\subsection{Related Work}

Multi-robot active mapping is in essence an optimization problem, with the goal of finding maximally informative robot trajectories, while simultaneously maintaining globally consistent map estimates. Thus, we begin our literature review with identifying relevant works in distributed optimization. The algorithms introduced in \cite{Chen, Gharesifard, Nedic_1, Ram, Tsianos} provide a class of approaches for decentralized gradient-based optimization in the Euclidean space under a variety of constraints such as time variation or communication asymmetry between agents in the network. The survey by Halsted et al.~\cite{multi-robot_opt_survey} provides a comprehensive study of distributed optimization methods for multi-robot applications. In this work, we decompose the task of multi-robot active mapping to two consensus-constrained Riemannian optimization problems, i.e. distributed mapping and distributed path planning. However, naive utilization of the Euclidean optimization techniques in Riemannian manifolds might violate the structure of the optimization domain, leading to infeasible solutions. Therefore, it is required to employ a special family of distributed optimization methods specific to Riemannian manifolds.

Absil et al. \cite{absil2009optimization} presents the foundations of optimization over matrix manifolds, giving rise to many centralized and distributed algorithms in subsequent works. As examples, Chen et al. \cite{chen2021decentralized} and Wang et al. \cite{wang} devise decentralized optimization algorithms for Stiefel manifolds where a Lagrangian function is used to enforce consensus and maintain the manifold structure. Manifold optimization also allows designing efficient learning algorithms where model parameters can be learned using unconstrained manifold optimization as opposed to Euclidean space optimization with projection to the parameter manifold. Zhang et al.~\cite{zhang2016riemannian} and Li et al.~\cite{li2022federated} introduce stochastic learning algorithms for Riemannian manifolds in centralized and federated formats, respectively. Related to our work, Tian et al.~\cite{tian2020asynchronous} present a multi-robot pose-graph simultaneous localization and mapping (SLAM) algorithm which employs gradient-descent local to each robot directly over the $\SE$ space of poses. Our work is inspired by the distributed Riemannian gradient optimization method introduced by Shah \cite{shah2017distributed}. We develop a distributed gradient-descent optimization method for general Riemannian manifolds, and derive specific instantiations for two particular cases, namely the space of probability distributions over semantic maps and the space of $\SE$ robot pose trajectories.

Distributed mapping is a special case of distributed estimation, where a model of the environment is estimated via sensor measurements. Distributed estimation techniques are used in multi-robot localization \cite{atanasov2014joint}, multi-robot mapping \cite{zobeidi_gp}, or multi-robot (SLAM) \cite{kimera_multi}. Paritosh et al. \cite{Paritosh_dist_mapping} define Bayesian distributed estimation as maximizing sensor data likelihood from all agents, while enforcing consensus in the estimates. The present work follows a similar methodology in that we achieve multi-robot Bayesian semantic mapping via distributed maximization of local sensor observation log-likelihood with a consensus constraint on the estimated maps.

Regarding collaborative mapping, an important consideration is the communication of local map estimates among the robots. Corah et al. \cite{corah2019communication} propose distributed Gaussian mixture model (GMM) mapping, where a GMM map is globally estimated, and each robot uses this global map to extract occupancy maps for planning. The use of GMM environment representation for multi-robot exploration is motivated by its lower communication overhead compared to uniform resolution occupancy grid maps. Subsequent works in \cite{dong2022mr} and \cite{wu2022mr} have similarly used distributed GMM mapping for place recognition and relative localization alongside exploration. Alternative techniques for communication-efficient multi-robot mapping include distributed topological mapping \cite{zhang2022mr}, sub-map-based grid mapping \cite{yu2021smmr}, and distributed truncated signed distance field (TSDF) estimation \cite{duhautbout2019distributed}. More recently, the work in \cite{macim} extends neural implicit signed distance mapping to a distributed setting via formulating multi-robot map learning as a consensus-constrained minimization of the loss function. In this case, the robots need to share the neural network parameters to achieve consensus. In our work, we use a semantic octree data structure introduced in our prior work \cite{vaskasi_TRO} to alleviate the communication burden by using a lossless octree compression. Relevant to our work, the authors in \cite{octomap_multi} propose merging of two binary octree maps via summing the occupancy log-odds of corresponding octree leaves. Our work distinguishes itself from \cite{octomap_multi} through a different formulation of multi-robot mapping as a consensus-based Riemannian optimization problem, which enables \begin{enumerate*}[label=\itshape\alph*\upshape)] \item extension to multi-class octree representations, and \item combination of map merging with online map updates from local observations\end{enumerate*}.

Similar to multi-robot mapping, many multi-robot planning methods utilize distributed optimization techniques. The work in \cite{multi-robot_traj_opt_survey} outlines various trajectory planning methods used in multi-robot systems, including graph-based, sampling-based, model-based, and bio-inspired approaches. In particular, graph neural networks (GNNs) have been utilized in \cite{gosrich2022coverage, zhou2022graph} for learning to extract, communicate, and accumulate features from local observations in the context of collaborative multi-robot planning in a distributed way. Coordination and plan deconfliction for multi-robot cooperative tasks is discussed in \cite{deconf}, where robots are assigned priorities in a decentralized manner in order to reach a Pareto equilibrium. In our work, we introduce a decentralized gradient-based negotiation mechanism to resolve $\SE$ path conflicts.

Path planning for autonomous exploration has been extensively studied in the field of active SLAM. Atanasov et al. \cite{atanasov2015decentralized} propose a distributed active SLAM method for robots with linear-Gaussian observation models and a finite set of admissible controls. The authors exploit the conditional entropy formula for the Gaussian noise model to derive an open-loop control policy, called reduced value iteration (RVI), with the same performance guarantees as a closed-loop policy. An anytime version of RVI is proposed in \cite{schlotfeldt2018anytime} using a tree search that progressively reduces the suboptimality of the plan. In contrast to \cite{atanasov2015decentralized, schlotfeldt2018anytime}, we use a probabilistic range-category observation model that accounts for occlusion in sensing. Sampling-based solutions to multi-robot active SLAM have been presented in \cite{kantaros2019asymptotically} and \cite{tzes2021distributed}, with asymptotic optimality guarantees. Cai et al.~\cite{cai2023energy} consider collision safety and energy as additional factors in the cost function for active SLAM using a heterogeneous team of robots. Zhou and Kumar~\cite{zhou2023robust} propose robust multi-robot active target tracking with performance guarantees in regard to sensing and communication attacks, however, the estimation and control are carried out centrally. Tzes et al.~\cite{tzes2023graph} develop a learning-based approach for multi-robot target estimation and tracking, used a GNN to accumulate and process information communicated among one-hop neighbors. The works in \cite{liu2022decentralized, zahroof2023multi} aim to maintain multi-robot network connectivity and collision avoidance via control barrier functions. Another line of research \cite{corah2019distributed, best2019dec} uses decentralized Monte-Carlo tree search for multi-robot path planning for exploration. The interested reader is encouraged to refer to \cite{placed2023survey} for a comprehensive survey of active SLAM methods. Our work distinguishes itself by considering continuous-space planning on a Riemannian manifold, generalizing the previous works in terms of the finite number of controls and the Euclidean robot states.

Related to active SLAM with continuous-space planning, Koga et al.~\cite{icr, icr-lqr} introduce iterative covariance regulation, an $\SE$ trajectory optimization algorithm for single-robot active SLAM with a Gaussian observation model. Model-based \cite{yang2023policy} and model-free \cite{yang2023learning} deep reinforcement learning techniques have been applied to similar single-robot active SLAM problems. Extending to a team of robots, Hu et al.~\cite{hu2020voronoi} propose Voronoi-based decentralized exploration using reinforcement learning, where coordination among the robots takes place via distributed assignment of each Voronoi region to a robot, and the policy generates a 2-D vector of linear and angular velocities. In our work, we formulate multi-robot planning for exploration as a distributed optimization problem in $\SE$ space with a consensus constraint to enforce agreement among the robot plans.

\subsection{Contributions}


Our gradient-based distributed Riemannian optimization approach extends the scope of previous works in multi-robots estimation and planning to enable continuous non-Euclidean state and control spaces, as well as non-linear and non-Gaussian perception models. Our contributions include:
\begin{enumerate}
    \item a distributed Riemannian optimization algorithm for multi-robot systems using only one-hop communication, with consensus and optimality guarantees,
    
    \item a distributed semantic octree mapping approach utilizing local semantic point cloud observations at each robot, 
    
    \item a distributed collaborative planning algorithm for robot exploration, where the search domain is defined as the continuous space of $\SE$ robot pose trajectories,
    
    \item an open-source implementation,\footnote{Open-source software and videos supplementing this paper are available at \url{https://existentialrobotics.org/ROAM_webpage/}.} achieving real-time performance onboard resource-constrained robots in simulation and real-world experiments.
\end{enumerate}

We begin by formulating consensus-constrained Riemannian optimization for multi-agent systems in Sec.~\ref{sec:prob_state}. Next, in Sec.~\ref{sec:dist_Riemannian_opt}, we introduce a distributed Riemannian optimization algorithm with consensus and optimality guarantees. In Sec.~\ref{sec:dist_mapping}, we formulate distributed semantic octree mapping as a special case, where the optimization variables are probability mass functions over the set of possible semantic maps. Sec.~\ref{sec:dist_planning} formulates distributed collaborative planning for robot exploration as another application of distributed Riemannian optimization, where robot trajectories in the $\SE$ manifold are the optimization variables. Lastly, in Sec.~\ref{sec:exp} we evaluate the performance of our proposed distributed multi-robot exploration in several simulation and real-world experiments.

\section{Problem Statement}
\label{sec:prob_state}

Consider a network of agents represented by an undirected connected graph $\calG(\calV, \calE)$, where $\calV$ denotes the set of agents and $\calE \subseteq \calV \times \calV$ encodes the existence of communication links between pairs of agents. Each agent $i \in \calV$ has state $x^i$ which belongs to a compact Riemannian manifold $\calM$. Let $T_{x^i}\calM$ denote the tangent space of $\calM$ at $x^i$ and let $\langle v,u \rangle_{x^i} \in \bbR$ with $u,v \in T_{x^i}\calM$ be a Riemannian metric on $\calM$~\cite[Ch.3]{boumal2022intromanifolds}. The norm of a tangent vector $v \in T_{x^i}\calM$ is defined by the Riemannian metric as $\|v\|_{x^i} = \sqrt{\langle v, v \rangle_{x^i}}$. Additionally, let $\Exp_{x^i}{(\cdot)}: T_{x^i}\calM \rightarrow \calM$ denote the exponential map on $\calM$ at $x^i$, and denote its inverse as $\Exp^{-1}_{x^i}{(\cdot)}: \calM \rightarrow T_{x^i}\calM$. 


We associate a local objective function $f^i(\cdot): \calM \rightarrow \bbR$ with each agent $i \in \calV$. Our goal is to maximize the cumulative objective function over the joint agent state $\bfx = (x^1, \ldots, x^{|\calV|})$:
\begin{equation}\label{eq:gen_opt}
    F(\bfx) = \frac{1}{|\calV|} \sum_{i\in \calV} f^i(x^i).
\end{equation}
The global objective can be maximized using $|\calV|$ independent local optimizations. However, in many applications it is necessary to find a common solution among all agents. For example, in multi-robot mapping, the robots need to ensure that their local maps are consistent and take into account the observations from other robots. Therefore, the global optimization problem needs to be constrained such that the agents reach consensus on $\bfx$ during optimization. For this aim, we define an aggregate distance function $\phi(\bfx): \calM^{|\calV|} \rightarrow \bbR_{\geq 0}$:
\begin{equation}\label{eq:consensus_constraint}
    \phi(\bfx) = \sum_{\{i, j\} \in \calE} A_{ij} d^2(x^i, x^j),
\end{equation}
where $A$ is a symmetric weighted adjacency matrix corresponding to the graph $\calG$, and $d(\cdot): \calM \times \calM \rightarrow \bbR_{\geq 0}$ is a distance function on the Riemannian manifold $\calM$, i.e., computes the length of the geodesic (shortest path) between pairs of elements in $\calM$. The definition of the aggregate distance function in \eqref{eq:consensus_constraint} implies that consensus will be reached if and only if $\phi(\bfx) = 0$. Hence, adding $\phi(\bfx) = 0$ as a constraint to \eqref{eq:gen_opt} would require feasible joint states $\bfx = ({x^1}, \ldots, {x^{|\calV|}})$ to satisfy ${x^i} = {x^j}$ for all $i, j \in \calV$.

\begin{problem}\label{prob:consensus_constrained_optimization}
    Consider a connected graph $\calG = (\calV, \calE)$ where each node $i \in \calV$ represents an agent with state $x^i \in \calM$ and local objective function $f^i(x^i)$. Find a joint state $\bfx$ that maximizes the following objective function:
    \begin{equation}
    \begin{aligned}
        \max_{\bfx}& \quad F(\bfx) = \frac{1}{|\calV|} \sum_{i \in \calV} f^i(x^i),\\
        \text{s.t.}& \quad x^i \in \calM, \;\; \forall i \in \calV, \;\; \text{and} \;\; \phi(\bfx) = 0,
    \label{eq:dist_opt}
    \end{aligned}
    \end{equation}
    where $\phi(\bfx) = 0$ is the consensus constraint defined in \eqref{eq:consensus_constraint}.
\end{problem}

As we discuss in Sec.~\ref{sec:dist_mapping} and Sec.~\ref{sec:dist_planning}, both multi-robot mapping and multi-robot trajectory optimization can be formulated as consensus-constrained optimization problems as in \eqref{eq:dist_opt}. In the case of mapping, the manifold $\calM$ is the probability simplex capturing map density functions while the local objective $f^i(x^i)$ is the log-likelihood of the observations made by robot $i$. In the case of trajectory optimization, $\calM$ represents the space of 3-D pose (rotation and translation) trajectories in $\SE$, and $f^i(x^i)$ is a collision and perception-aware objective for the robot pose trajectories. In the next section, we develop a distributed gradient-based optimization algorithm to solve \eqref{eq:dist_opt} using only local computation and single-hop communication.



\section{Distributed Riemannian Optimization}
\label{sec:dist_Riemannian_opt}


\begin{algorithm}[t]
\caption{Distributed Riemannian Optimization}
\begin{algorithmic}[1]
\renewcommand{\algorithmicrequire}{\textbf{Input:}}
\renewcommand{\algorithmicensure}{\textbf{Output:}}
\Require Network $\calG(\calV, \calE)$ and initial state ${x^i}^{(0)}$
\Ensure Consensus optimal solution to \eqref{eq:dist_opt}
\For{$k \in \bbZ_{\geq 0}$}
    \For{each agent $i \in \calV$}
        \LineComment{Promote consensus with step size $\epsilon$:}
        \State ${\Tilde{x}^i}^{(k)} = \Exp_{{x^i}^{(k)}}{(- \epsilon \grad_{x^i}{\phi(\bfx)}|_{\bfx = \bfx^{(k)}})}$
        \label{alg_line:consensus}
        \LineComment{Optimize local objective with step size $\alpha^{(k)}$:}
        \State ${x^i}^{(k+1)} = \Exp_{{\Tilde{x}^i}^{(k)}}{(\alpha^{(k)} \grad{f^i(x^i)}|_{x^i = {\Tilde{x}^i}^{(k)}})}$
        \label{alg_line:optimize_local}
    \EndFor
\EndFor
\State \Return ${x^i}^{(k)}$
\end{algorithmic}
\label{alg:dist_opt}
\end{algorithm}

The problem in \eqref{eq:dist_opt} has a specific structure, maximizing a sum of local objectives subject to a consensus constraint among all $x^i$, $i \in \calV$. We develop a distributed gradient-based algorithm to solve \eqref{eq:dist_opt}. The idea is to interleave gradient updates for the local objectives with gradient updates for the consensus constraint at each agent. Alg.~\ref{alg:dist_opt} formalizes this idea. The update step in line~\ref{alg_line:consensus} guides the local state $x^i$ towards satisfaction of the consensus constraint, with a step size of $\epsilon$. The gradient of $\phi(\bfx)$ with respect to $x^i$, denoted as $\grad_{x^i}{\phi(\bfx)}$, lies in the tangent space $T_{x^i}\calM$. Hence, the exponential map is used to retract the gradient update $-\epsilon \grad_{x^i}{\phi(\bfx)}|_{\bfx = \bfx^{(k)}}$ to the manifold $\calM$. The gradient $\grad_{x^i}{\phi(\bfx)}$ can be expressed as a sum of gradients with respect to the neighbors $\calN_i = \{j | A_{ij} > 0\}$ of agent $i$:
\begin{equation}
\begin{aligned}
    \grad&_{x^i}{\phi(\bfx)} =\\
    &\sum_{j \in \calN_i} A_{ij} \grad_{x^i}{d^2(x^i, x^j)} = - 2 \sum_{j \in \calN_i} A_{ij} \Exp^{-1}_{x^i}{(x^j)}.\nonumber
\end{aligned}
\end{equation}
Therefore, line~\ref{alg_line:consensus} requires \emph{only single-hop communication} between agent $i$ and its neighbors $\calN_i$. Line~\ref{alg_line:optimize_local} carries out an update with step size $\alpha^{(k)}$ in the direction of the gradient of the local objective $f^i(\cdot)$, computed at the updated state ${\Tilde{x}^i}^{(k)}$. Similar to the consensus update step, the exponential map is used in to retract $\grad{f^i(x^i)}$ and apply it to the point ${\Tilde{x}^i}^{(k)}$. Line~\ref{alg_line:optimize_local} is local to each agent $i$ and does not require communication. The two update steps are continuously applied until a maximum number of iterations is reached or the update norm is smaller than a threshold.

\begin{example}
    Consider a sensor network where several agents gather data that is not supposed to be shared over the network, due to either privacy reasons or bandwidth limitations. Our Riemannian optimization algorithm enables distributed processing of the global data, accumulated over all agents, without actually sharing the data. As an example, Fig.~\ref{fig:roam_example} illustrates applying Alg.~\ref{alg:dist_opt} to compute the leading eigenvector of the covariance of data. Fig.~\ref{subfig:roam_example_a} depicts the global data distribution $Z = [Z_1^\top Z_2^\top]^\top$, such that different segments of the data $Z_1$ and $Z_2$ are known to agent $1$ and agent $2$, separately. This problem can be formulated as:
    \begin{equation}\label{eq:roam_example}
    \begin{aligned}
        \max_{x^1, x^2}& \quad \sum_{i \in \{1, 2\}} (Z_i x^i)^\top Z_i {x^i},\\
        \text{s.t.}& \quad x^1, x^2 \in \bbS^1 \;\; \text{and} \;\; \arccos({x^1}^\top x^2) = 0,
    \end{aligned}
    \end{equation}
    where the domain manifold is the unit circle $\bbS^1$, and cosine distance is used as the distance function. Note that for all $x^1$ and $x^2$ that satisfy the consensus constraint $\arccos({x^1}^\top x^2) = 0$, the objective function is equivalent to the one for the centralized leading eigenvector problem. Hence, we expect to find the eigenvector for the covariance of the global data matrix $Z$ by employing Alg.~\ref{alg:dist_opt} to \eqref{eq:roam_example}. Fig.~\ref{subfig:roam_example_b} shows an initialization of $x^1$ and $x^2$ over the unit circle $\bbS^1$. While the Riemannian gradients of $\phi(\cdot)$ and $f^i(\cdot)$ are tangent vectors to $\bbS^1$, the consensus and local objective function update steps keep the state on the $\bbS^1$ manifold thanks to the exponential map (see the circular arcs in Fig.~\ref{subfig:roam_example_c}):
    \begin{equation}
        \Exp_{x^i}(v) = \cos(\sqrt{v^\top v}) x^i + \sin(\sqrt{v^\top v}) \frac{v}{\sqrt{v^\top v}}.
    \end{equation}
    Note that the consensus update (green arc) acts in the direction of agreement between $x^1$ and $x^2$, whereas the local objective function gradient tries to steer the states $x^i$ towards the leading eigenvector of their respective data $Z_i$. Although each agent has only partial access to $Z$, both $x^1$ and $x^2$ eventually converge to $x^*$, namely the leading eigenvector of the covariance for the global data matrix $Z$, as Fig.~\ref{subfig:roam_example_d} suggests.\hfill$\bullet$
\end{example}

\begin{figure}[t]
    \begin{subfigure}[t]{0.5\linewidth}
    \centering
    \includegraphics[width=\linewidth]{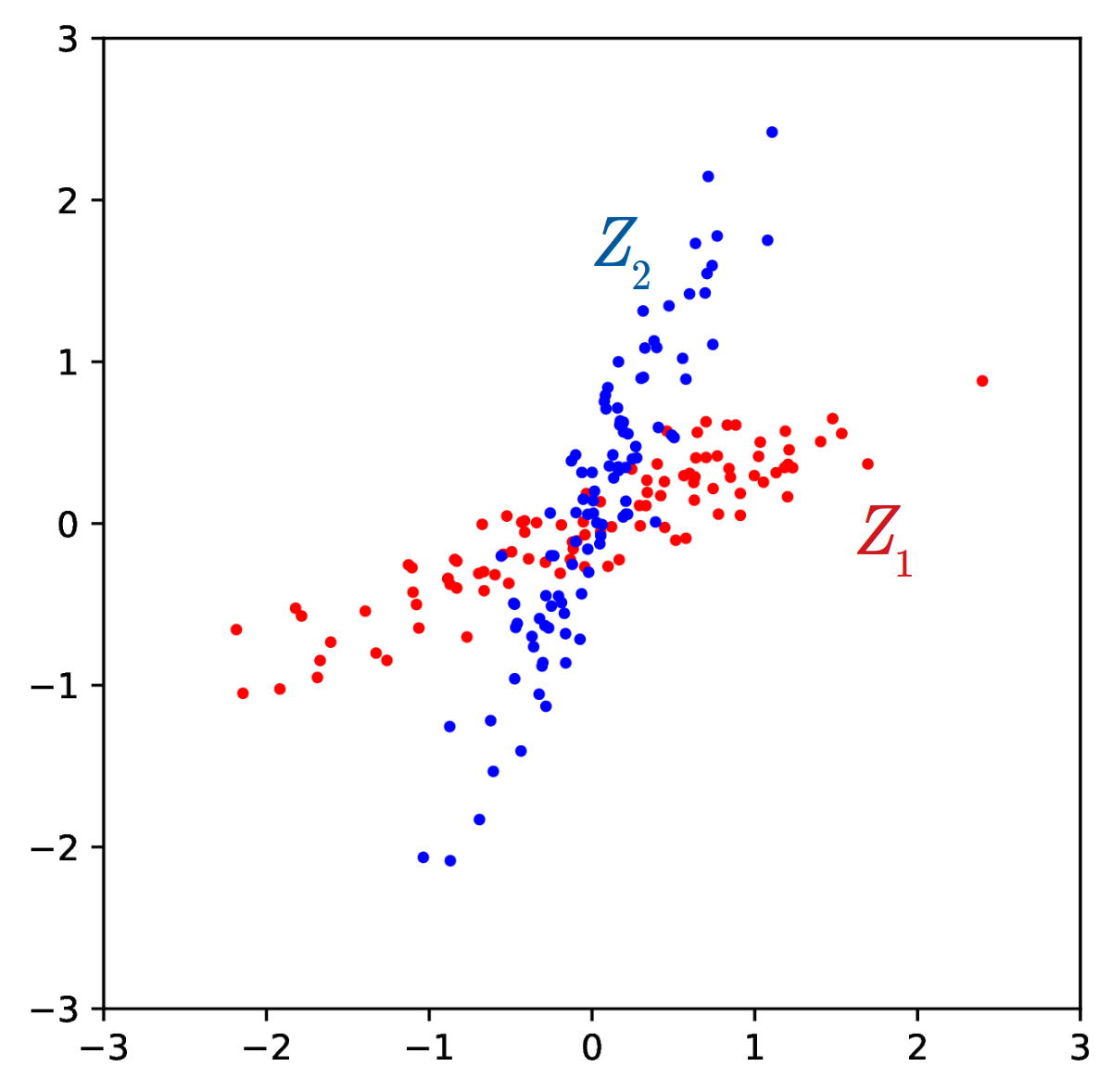}
    \captionsetup{justification=centering}
    \caption{Data distribution}
    \label{subfig:roam_example_a}
    \end{subfigure}%
    \hfill%
    \begin{subfigure}[t]{0.5\linewidth}
    \centering
    \includegraphics[width=\linewidth]{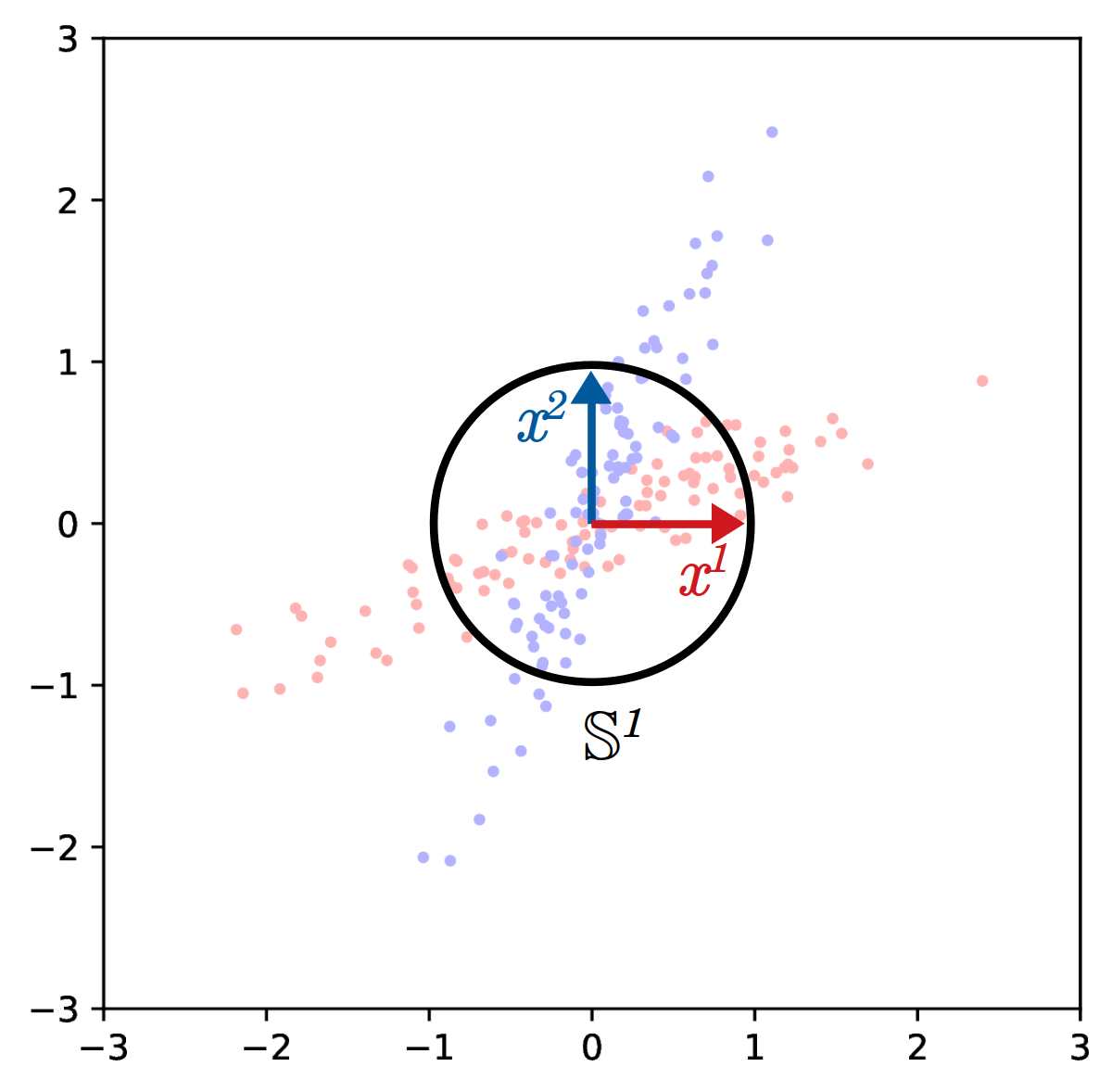}
    \captionsetup{justification=centering}
    \caption{State initialization}
    \label{subfig:roam_example_b}
    \end{subfigure}\\
    \begin{subfigure}[t]{0.5\linewidth}
    \centering
    \includegraphics[width=\linewidth]{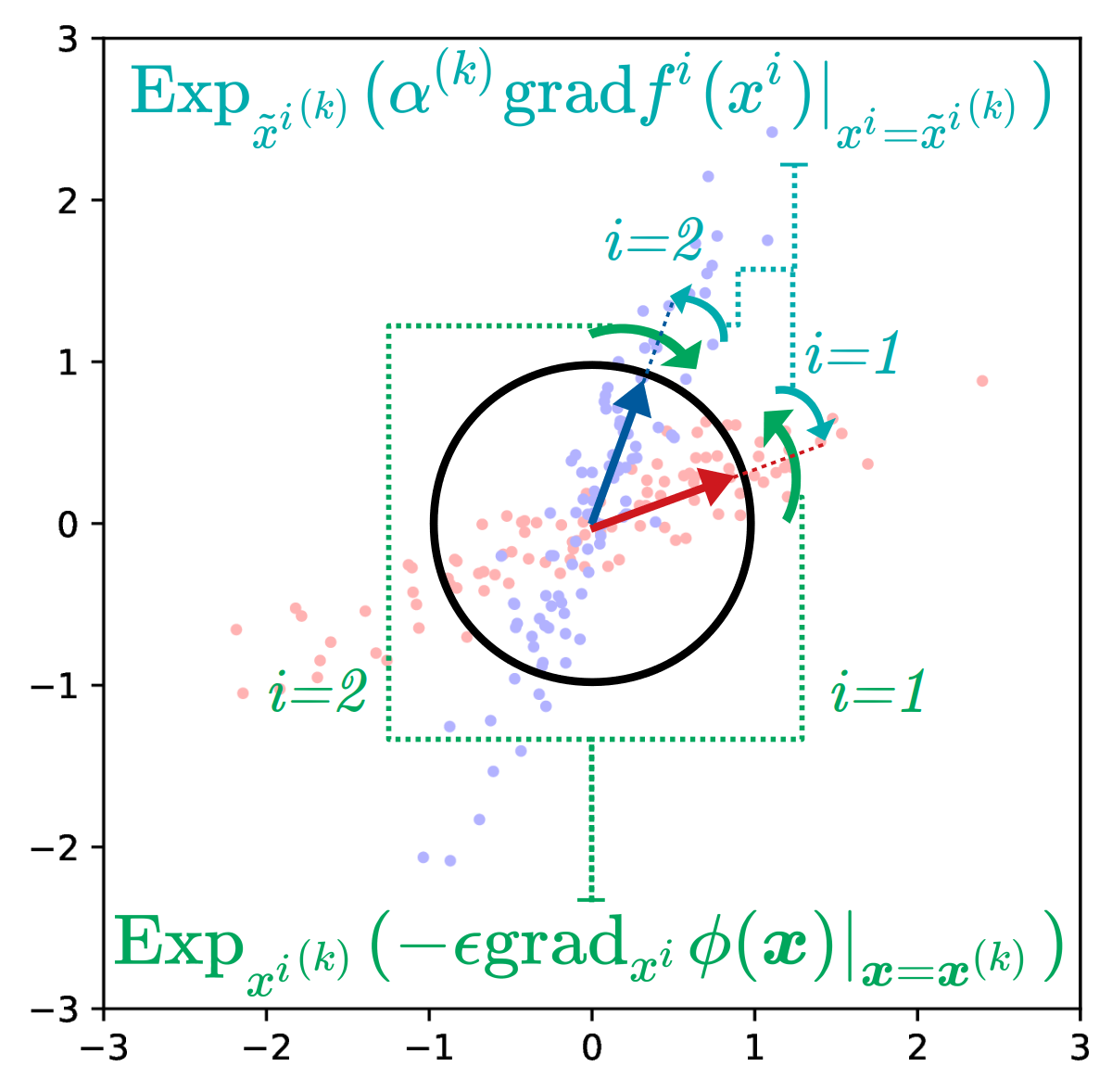}
    \captionsetup{justification=centering}
    \caption{Update steps}
    \label{subfig:roam_example_c}
    \end{subfigure}%
    \hfill%
    \begin{subfigure}[t]{0.5\linewidth}
    \centering
    \includegraphics[width=\linewidth]{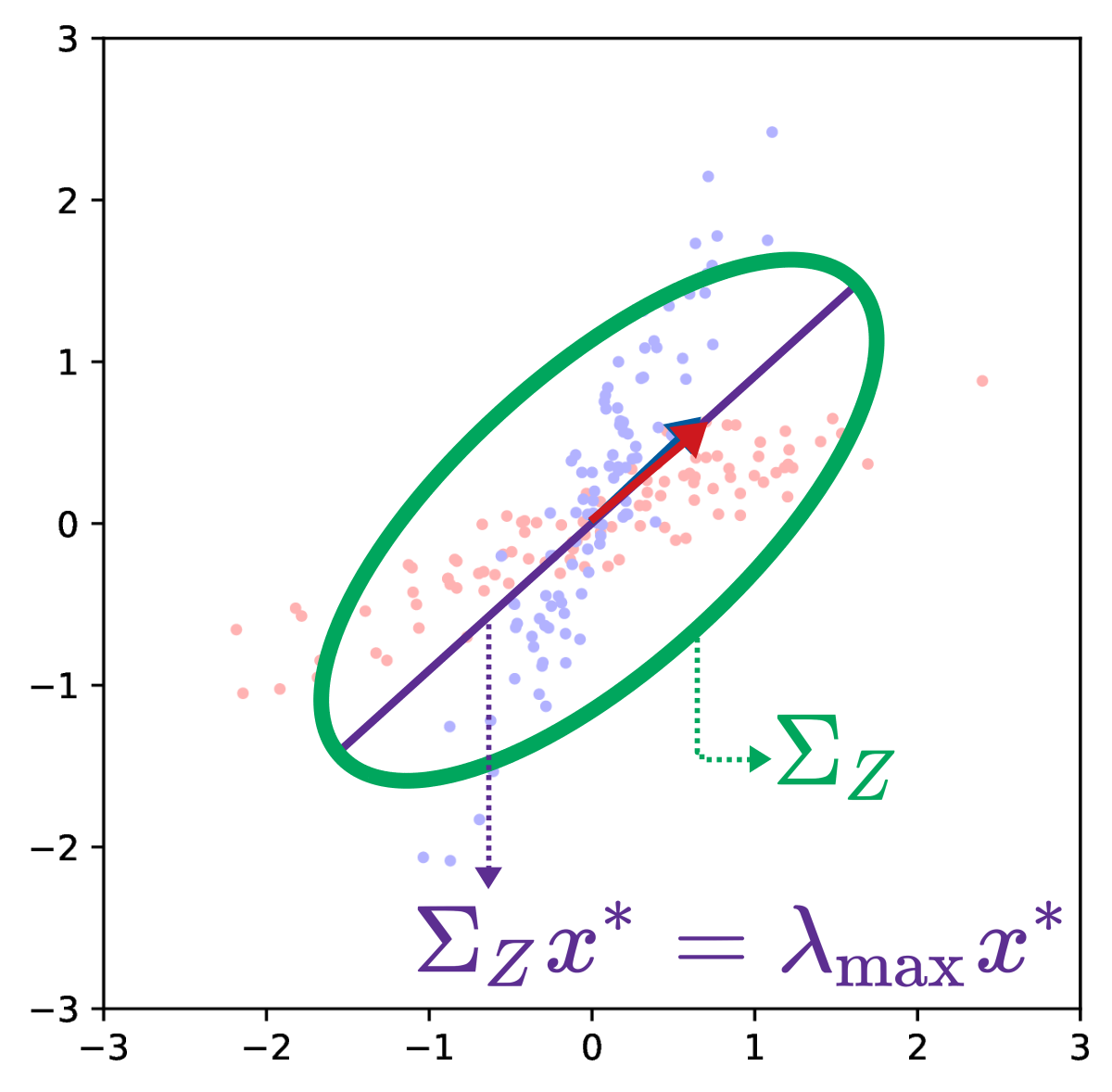}
    \captionsetup{justification=centering}
    \caption{Convergence to the global leading eigenvector}
    \label{subfig:roam_example_d}
    \end{subfigure}
    \caption{Application of Alg.~\ref{alg:dist_opt} to the leading eigenvector problem. (a) Data distribution $Z = [Z_1^\top Z_2^\top]^\top$, where $Z_1$ and $Z_2$ are separately known to agent $1$ and agent $2$. (b) State initialization, limited to the unit circle $\bbS^1$ since we are only interested in eigenvector directions. (c) Consensus and local objective function updates, shown in green and teal arcs, respectively. The exponential mapping of $\bbS^1$ maintains the manifold structure of the states throughout the update steps. (d) Level-set of the covariance matrix of the global data $Z$ is shown, alongside its leading eigenvector $x^*$.}
    \label{fig:roam_example}
\end{figure}

Next, we study whether Alg.~\ref{alg:dist_opt} achieves consensus and optimality. We make several assumptions to ensure that the problem is well-posed in accordence with prior work on distributed optimization \cite{Chen, Gharesifard, Li, Nedic_1, Nedic_2, Ram, Srivastava, Tsianos, SEPULCHRE201156}.

\begin{definition}\label{def:convexity}
    A differentiable function $f: \calM \rightarrow \bbR$ is geodesically convex if and only if for any $x, y \in \calM$:
    \begin{equation}
        f(x) \geq f(y) + \langle \grad{f(y)}, \Exp^{-1}_{y}{(x)} \rangle_{y}.\nonumber
    \end{equation}
    The function $f$ is geodesically concave if the above inequality is flipped.
\end{definition}

%

%
%
\begin{assumption}\label{assumption:general}
    Assume the following statements hold for Problem~\ref{prob:consensus_constrained_optimization} and Alg.~\ref{alg:dist_opt}.
    \begin{itemize}
        \item The Riemannian manifold $\calM$ is compact.

        \item The local objective functions $f^i$, $\forall i \in \calV$, are smooth, geodesically concave, and their Riemannian gradients are bounded by some constant $C$:
        \begin{equation}
            \|\grad f^i(x^i)\|_{x^i} \leq C, \; \forall x^i \in \calM, \; \forall i \in \calV.\nonumber
        \end{equation}
        
        \item The weighted adjacency matrix $A$ of the graph $\calG$ is row-stochastic, i.e., $\sum_{j \in \calV} A_{ij} = 1$.
                
        \item The squared distance function $d^2: \calM \times \calM \rightarrow \bbR_{\geq 0}$ is geodesically convex.

        \item The step sizes $\alpha^{(k)} > 0$ for the update step in line~\ref{alg_line:optimize_local} satisfy the Robbins-Monro conditions:
        \begin{equation}
            \sum_{k=0}^{\infty} \alpha^{(k)} = \infty,\qquad \sum_{k=0}^{\infty} {\alpha^{(k)}}^2 < \infty, \qquad \forall k \geq 0.\nonumber
        \end{equation}        
    \end{itemize}
\end{assumption}

In addition to the assumptions above, we require an additional condition to prove that Alg.~\ref{alg:dist_opt} achieves consensus, i.e., $\phi(\bfx) = 0$. Let $\calT_{y^i}^{x^i}: T_{y^i}\calM \rightarrow T_{x^i}\calM$ denote parallel transport \cite[Ch.10]{boumal2022intromanifolds} from the tangent space at $y^i$ to the tangent space at $x^i$. For points $x^i, x^j, y^j, y^i \in \calM$, consider the geodesic loop $x^i \rightarrow x^j \rightarrow y^j \rightarrow y^i \rightarrow x^i$ with corresponding tangent vectors $v_{x}^{ij}, v_{y}^{ij}, v_{xy}^{i}, v_{xy}^{j}$ defined as:
\begin{equation}
    v_{x}^{ij} = \Exp^{-1}_{x^i}{(x^j)},\; v_{xy}^{i} = \Exp^{-1}_{x^i}{(y^i)},\nonumber
\end{equation}
with similar definitions for $v_{y}^{ij}$ and $v_{xy}^{j}$. Let $v_{xy}^{ij} \in T_{x^i}\calM$ be the net tangent vector transported to $T_{x^i}\calM$:
\begin{equation}\label{eq:net_vec}
    v_{xy}^{ij} = v_{x}^{ij} + \calT_{x^j}^{x^i}v_{xy}^{j} - v_{xy}^{i} - \calT_{y^i}^{x^i}v_{y}^{ij}.
\end{equation}
\begin{assumption}\label{assumption:net_curve_bound}
    For a $\rho > 0$ and any $4$-tuple $(x^i, x^j, y^j, y^i) \in \calM$, assume the norm of the net tangent vector $v_{xy}^{ij}$ is bounded by the lengths of the opposite geodesics along the loop:
    \begin{equation}\label{eq:assumption_2}
        \|v_{xy}^{ij}\|_{x^i} \leq \rho \min\{\|v_{xy}^{i}\|_{x^i} + \|v_{xy}^{j}\|_{x^j}, \|v_{x}^{ij}\|_{x^i} + \|v_{y}^{ij}\|_{y^i}\}.
    \end{equation}
\end{assumption}

In Euclidean space, the net tangent vector $v_{xy}^{ij}$ is equivalent to zero linear displacement; hence, the assumption holds for any $\rho \geq 0$. Similarly, the manifold $\bbS^1$ of example \eqref{eq:roam_example} satisfies \eqref{eq:assumption_2} due to zero angular displacement. For a general case, $v_{xy}^{ij}$ can be non-zero due to the curvature of the manifold. This is dual to the fact that, for a zero net tangent vector $v_{xy}^{ij}$, the corresponding geodesics might not form a loop. The assumption in \eqref{eq:assumption_2} essentially imposes a condition over curvature of the manifold so that the norm of $v_{xy}^{ij}$ is limited by the length of the geodesic loop. Based on the above assumptions, we show consensus and optimality for Alg.~\ref{alg:dist_opt}.

\begin{theorem}
\label{thm:dist_opt_cons_opt}
Consider the consensus-constrained Riemannian optimization problem in \eqref{eq:dist_opt} and the distributed Riemannian optimization algorithm in Alg.~\ref{alg:dist_opt}. Suppose that Assumptions \ref{assumption:general} and \ref{assumption:net_curve_bound} hold and step size $\epsilon$ is chosen such that $\epsilon \in (0, 2/L)$ with $L = 4 (1 + \rho)$. Then, Alg.~\ref{alg:dist_opt} provides a solution to \eqref{eq:dist_opt} with the following properties.
    \begin{enumerate}
        \item The joint state $\bfx^{(k)}$ converges to $\bfx^{(\infty)} \in \calM^{|\calV|}$, where $\bfx^{(\infty)}$ is a consensus configuration, i.e., ${x^i}^{(\infty)} = {x^j}^{(\infty)}$ for all $i, j \in \calV$.
        
        \item Let $\bfx^*$ be an optimal solution to \eqref{eq:dist_opt}. The optimal value $F(\bfx^*)$ is a lower-bound for the maximum of $F(\bfx^{(k)})$ across all iterations:
        \begin{equation}\label{eq:opt_bound}
            F(\bfx^*) \leq \lim_{k_{\text{max}} \rightarrow \infty} \; \max_{0 \leq k \leq k_{\text{max}}} F(\bfx^{(k)}).
        \end{equation}
    \end{enumerate}
\end{theorem}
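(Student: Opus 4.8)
The plan is to prove the two claims separately, viewing Alg.~\ref{alg:dist_opt} as a perturbed gradient descent on the aggregate distance $\phi$ (for consensus) coupled with a Robbins--Monro gradient ascent on $F$ (for optimality). The first ingredient is to show that $\phi$ is geodesically $L$-smooth with $L = 4(1+\rho)$. Using the gradient formula $\grad_{x^i} d^2(x^i, x^j) = -2\Exp^{-1}_{x^i}(x^j)$, the net tangent vector $v_{xy}^{ij}$ in \eqref{eq:net_vec} is precisely the transport-compatible difference of $\grad_{x^i} d^2$ between the configurations $(x^i, x^j)$ and $(y^i, y^j)$, measured around the geodesic loop; Assumption~\ref{assumption:net_curve_bound} bounds $\|v_{xy}^{ij}\|_{x^i}$ by the configuration displacement $\|v_{xy}^i\|_{x^i} + \|v_{xy}^j\|_{x^j}$, which is exactly a Lipschitz estimate for $\grad d^2$. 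Summing this estimate over the edges produces the Lipschitz constant $L = 4(1+\rho)$ for $\grad\phi$, the constant absorbing the factor $2$ from the gradient formula and the $(1+\rho)$ from \eqref{eq:assumption_2}. With this smoothness in hand, line~\ref{alg_line:consensus} is one gradient-descent step with $\epsilon \in (0, 2/L)$, so the descent lemma gives the strict decrease
\[
\phi(\tilde{\bfx}^{(k)}) \leq \phi(\bfx^{(k)}) - \epsilon\left(1 - \tfrac{L\epsilon}{2}\right)\bigl\|\grad\phi(\bfx^{(k)})\bigr\|^2
\]
whenever $\grad\phi(\bfx^{(k)}) \neq 0$.

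Next I would bound the perturbation introduced by the local step (line~\ref{alg_line:optimize_local}). Since the gradients $\grad f^i$ are uniformly bounded by $C$ (Assumption~\ref{assumption:general}), each agent moves by at most $\alpha^{(k)} C$, and geodesic convexity of $d^2$ with a triangle-type inequality bounds the resulting increase of $\phi$ by a term of order $\alpha^{(k)}$ with square-summable tail. Adding this to the displayed descent inequality and summing over $k$, the telescoping of $\phi(\bfx^{(k)})$ together with $\sum_k (\alpha^{(k)})^2 < \infty$ yields $\sum_k \|\grad\phi(\bfx^{(k)})\|^2 < \infty$, hence $\phi(\bfx^{(k)}) \to 0$. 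Compactness of $\calM$ and the vanishing per-step displacement (both $\alpha^{(k)} \to 0$ and $\grad\phi \to 0$) then upgrade convergence of $\phi$ to convergence of the entire sequence to a single consensus configuration $\bfx^{(\infty)}$, establishing the first claim.

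For optimality, let $\bfx^* = (x^*, \ldots, x^*)$ be the consensus optimizer and track the distances $d^2({x^i}^{(k)}, x^*)$. Geodesic concavity (Definition~\ref{def:convexity}), applied at the base point ${\tilde{x}^i}^{(k)}$ where the gradient is evaluated, gives $f^i(x^*) - f^i({\tilde{x}^i}^{(k)}) \leq \langle \grad f^i({\tilde{x}^i}^{(k)}), \Exp^{-1}_{{\tilde{x}^i}^{(k)}}(x^*)\rangle$, so $F(\bfx^*) - F(\tilde{\bfx}^{(k)})$ is controlled by gradient inner products. A Riemannian law-of-cosines estimate for line~\ref{alg_line:optimize_local} relates each such inner product to the one-step change $d^2({\tilde{x}^i}^{(k)}, x^*) - d^2({x^i}^{(k+1)}, x^*)$ plus an $O((\alpha^{(k)})^2 C^2)$ remainder, while geodesic convexity of $d^2$ together with the doubly stochastic structure of the symmetric, row-stochastic $A$ makes the consensus step non-expansive toward $x^*$, so $\sum_i d^2({\tilde{x}^i}^{(k)}, x^*) \leq \sum_i d^2({x^i}^{(k)}, x^*)$. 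Combining these, multiplying by $\alpha^{(k)}$, and summing over $i$ and $k$: the distance terms telescope and stay bounded by compactness, the squared-step terms are summable, and the residual gap between $F(\tilde{\bfx}^{(k)})$ and $F(\bfx^{(k)})$---of order $\epsilon\|\grad\phi(\bfx^{(k)})\|$---is $\alpha^{(k)}$-weighted-summable by Cauchy--Schwarz, since both $\sum_k (\alpha^{(k)})^2$ and $\sum_k \|\grad\phi(\bfx^{(k)})\|^2$ are finite (the latter from the first claim). Hence $\sum_k \alpha^{(k)}\bigl(F(\bfx^*) - F(\bfx^{(k)})\bigr)$ is bounded above; since $\sum_k \alpha^{(k)} = \infty$, this forces $\liminf_k \bigl(F(\bfx^*) - F(\bfx^{(k)})\bigr) \leq 0$, i.e. $\sup_k F(\bfx^{(k)}) \geq F(\bfx^*)$, which is exactly \eqref{eq:opt_bound}.

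The main obstacle is the Riemannian curvature bookkeeping. In Euclidean space the descent lemma for $\phi$ and the law of cosines are exact identities; on a curved manifold each acquires correction terms governed by the holonomy around a geodesic loop, and Assumption~\ref{assumption:net_curve_bound} is precisely the device that controls them. The delicate step is to identify $v_{xy}^{ij}$ with the transport-compatible gradient difference and to verify that its bound propagates through the edge sum defining $\phi$ to produce exactly $L = 4(1+\rho)$. A secondary difficulty is disentangling the two interleaved updates---the constant-step consensus move and the diminishing-step local move---so that the Lyapunov argument for consensus and the telescoping argument for optimality hold simultaneously; here the coupling $\grad\phi \to 0$ from the first claim is what renders the consensus perturbation asymptotically negligible in the second via the Cauchy--Schwarz bound above.
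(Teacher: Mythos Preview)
Your proposal follows essentially the same three-step route as the paper: (i) show $\phi$ is geodesically $L$-smooth with $L=4(1+\rho)$ via Assumption~\ref{assumption:net_curve_bound}, (ii) combine the descent lemma for the consensus step with a bounded perturbation from the local step to obtain $\sum_k\|\grad\phi(\bfx^{(k)})\|^2<\infty$ and hence consensus, and (iii) use the Riemannian law of cosines together with geodesic concavity of the $f^i$ to telescope $\sum_i d^2({x^i}^{(k)},x^*)$ and extract the optimality bound via $\sum_k\alpha^{(k)}=\infty$.

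The one place where your argument does not go through as written is the claim that the consensus step is \emph{exactly} non-expansive toward $x^*$, i.e., $\sum_i d^2({\tilde x^i}^{(k)},x^*)\le\sum_i d^2({x^i}^{(k)},x^*)$. That holds in Euclidean space by Jensen, but on a curved manifold the weighted tangent average $2\epsilon\sum_j A_{ij}\Exp^{-1}_{x^i}(x^j)$ does not map under $\Exp$ to a convex combination of the $x^j$, so geodesic convexity of $d^2$ and double stochasticity of $A$ alone do not yield contraction. The paper instead applies the law of cosines to the consensus step as well (not only to the local step), obtaining
\[
d^2(\tilde\bfx^{(k)},\bfx^*)\le d^2(\bfx^{(k)},\bfx^*)+\zeta\epsilon^2\bigl\|\grad\phi(\bfx^{(k)})\bigr\|^2,
\]
where the cross term is discarded using geodesic convexity of $\phi$ and $\phi(\bfx^*)=0$. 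The curvature residual $\zeta\epsilon^2\|\grad\phi\|^2$ is then fed through the per-step descent inequality from part (ii), converting it into a telescoping of $\phi(\bfx^{(k)})-\phi(\bfx^{(k+1)})$ plus an $O((\alpha^{(k)})^2)$ remainder. Your own observation that $\sum_k\|\grad\phi(\bfx^{(k)})\|^2<\infty$ already makes this residual summable would equally close the gap; either route works, but exact non-expansiveness does not hold on a general curved $\calM$.
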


\begin{proof}
See Appendix~\ref{app:dist_opt_cons_opt}.
\end{proof}

We stress that, while the optimal solution $\bfx^*$ and the convergence point $\bfx^{(\infty)}$ of Alg.~\ref{alg:dist_opt} are both consensus configurations, the optimality bound of \eqref{eq:opt_bound} can potentially admit a solution $\bfx^{(k)}$ that does not satisfy the consensus constraint. For the Euclidean case, Nedi{\'c}~\cite[Ch.5]{Nedic_3} shows that $d^2(\bfx^*,\bfx^{(k)})$ is a Lyapunov function, and subsequently, $F(\bfx^*) = F(\bfx^{(\infty)})$ holds. However, a similar derivation for $d^2(\bfx^*,\bfx^{(k)})$ has not been found for a general Riemannian manifold, due to the complexity added by the curvature.

Alg.~\ref{alg:dist_opt} establishes consensus and an optimality bound without requiring identical initial states ${x^i}^{(0)} = x_0$ for all $i \in \calV$ or parallel transport of the gradients between neighboring agents. Hence, our distributed Riemannian optimization provides an approach to solve multi-robot problems with communication constraints. The main requirement to use Alg.~\ref{alg:dist_opt} is to express a multi-robot optimization problem in the form of \eqref{eq:dist_opt}, with local objectives $f^i(\cdot)$ and distance measure $\phi(\cdot)$ defined as smooth concave and convex functions in $\calM$ and $\calM^{|\calV|}$, respectively. In the absence of concavity for the objective functions or convexity for the consensus constraint, Alg.~\ref{alg:dist_opt} can still be utilized to obtain a solution with local consensus and optimality guarantees.

In the next two sections, we apply Alg.~\ref{alg:dist_opt} to achieve simultaneous multi-robot mapping and planning. We refer to our approach as \emph{Riemannian Optimization for Active Mapping (ROAM)}. In Sec.~\ref{sec:dist_mapping}, we apply Alg.~\ref{alg:dist_opt} to multi-robot estimation of semantic octree maps, while in Sec.~\ref{sec:dist_planning} we use Alg.~\ref{alg:dist_opt} to achieve multi-robot motion planning for exploration and active estimation of semantic octree maps.

\section{Multi-Robot Semantic Octree Mapping}
\label{sec:dist_mapping}

In this section, we design a decentralized multi-robot octree mapping algorithm using the results from Sec.~\ref{sec:dist_Riemannian_opt}. We consider a team of robots gathering local sensor measurements and communicating map estimates with one-hop neighbors in order to build a globally consistent common map. The robots are navigating in an environment consisting of disjoint sets $\calS_c \subset \mathbb{R}^3$, each associated with a semantic category $c \in \calC := \crl{0,1,\ldots,C}$. Let $\calS_0$ represent the free space and let each $\calS_c$ for $c >0$ represent a different category, such as building, vegetation, terrain. Each robot $i \in \calV$ is equipped with a mounted sensor that provides a stream of semantically-annotated point clouds in the sensor frame. Such information may be obtained by processing the measurements of an RGBD camera \cite{bonnet} or a LiDAR with a semantic segmentation algorithm \cite{rangenet}. We model a point cloud as a set $\bfz^i_t = \{(r^i_{t,b}, y^i_{t,b})\}_{b=1}^B$ of $B$ rays at time $t$, containing the distance $r^i_{t,b} \in \bbR_{\geq 0}$ from the sensor's position to the closest obstacle along the ray in addition to object category $y^i_{t,b} \in \calC$ of the obstacle (see Fig.~\ref{fig:mapping_setup}).

\begin{figure}[t]
    \centering
    \includegraphics[width=0.9\linewidth]{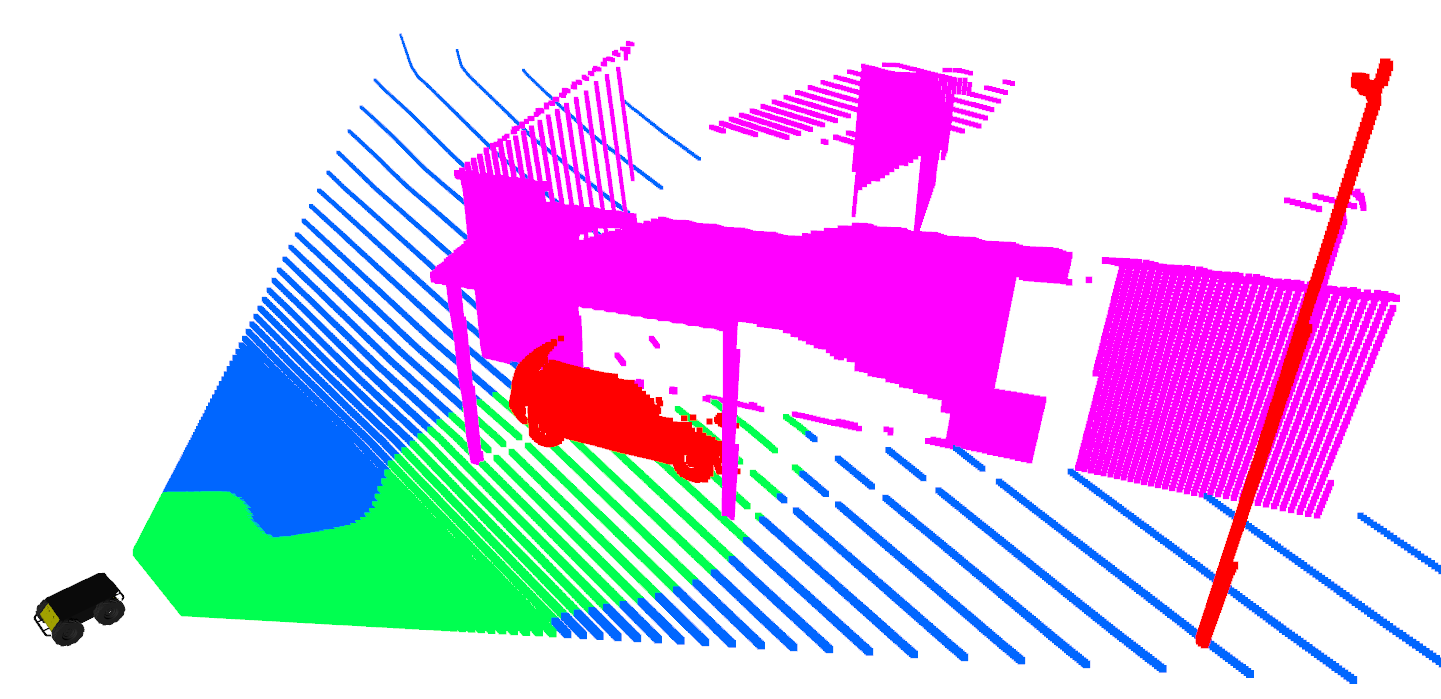}\\
    \includegraphics[width=0.9\linewidth]{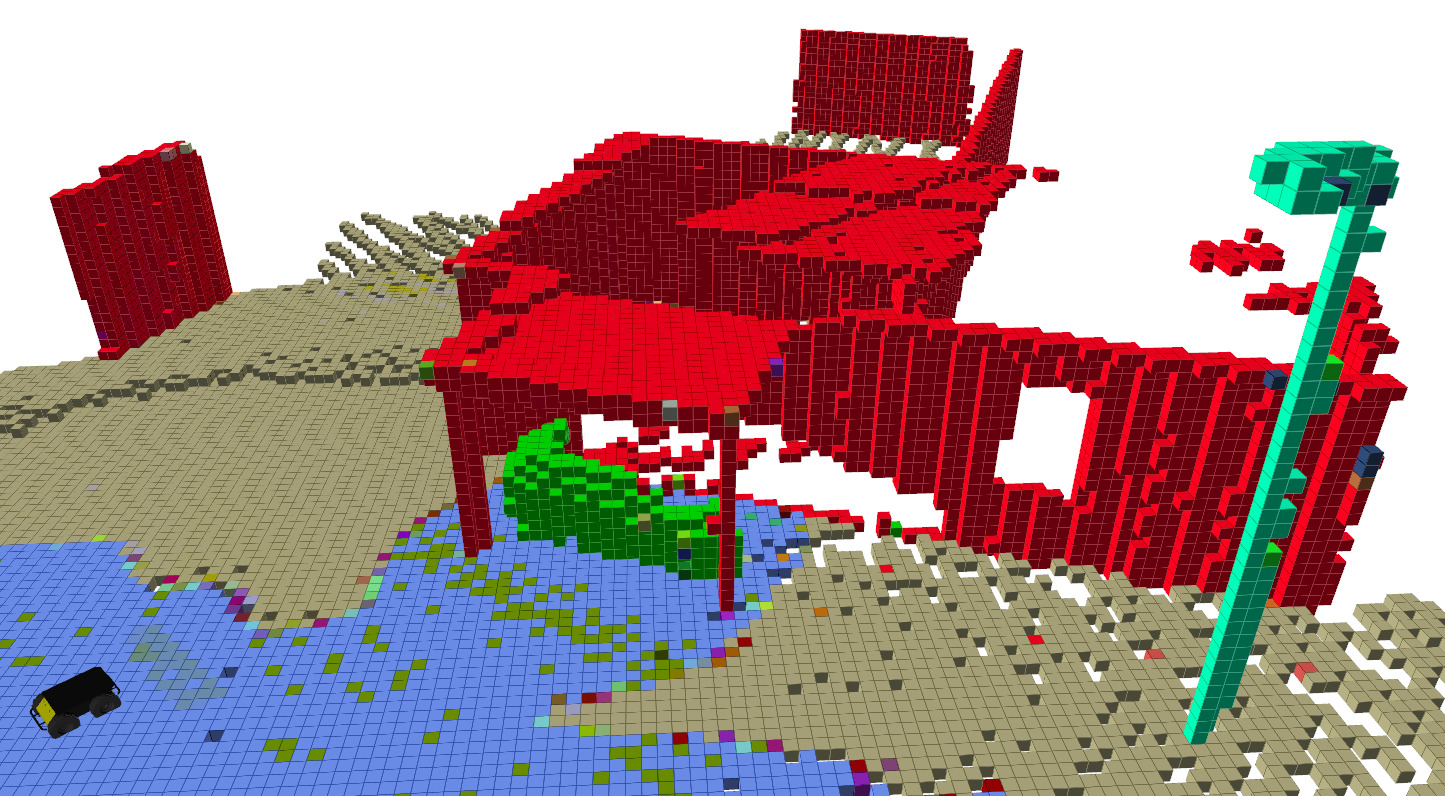}
    \caption{Semantically annotated point cloud (top) obtained from an RGBD sensor, where each object category is shown with a unique color and an octree map (bottom) obtained from the semantic point cloud.}
    \label{fig:mapping_setup}
\end{figure}

We represent the map $\bfm$ as a 3-D grid of $N$ independent cells, where each individual cell $m$ is labeled with a category in $\calC$. To model measurement noise, we use a probability density function (PDF) $q^i(\bfz^i_t \mid \bfm)$ as the observation model of each robot. The observation model $q^i(\bfz^i_t \mid \bfm)$ depends on the sensor pose as well but we assume that accurate sensor poses are available from localization and calibration between the robot body frame and the sensor frame. We intend to perform probabilistic mapping, which requires maintaining a PDF of the map, and updating it based on sensor observations. For this aim, we maximize the sum of expected log-likelihood of the measurements up to time $t$, i.e. local observations $\bfz_{1:t}^i$ collected from each robot $i \in \calV$\footnote{It can be shown that maximizing the sum of expected log-likelihood of the data is equivalent to minimizing the KL-divergence between the true and the evaluated observation models. See \cite{Paritosh_dist_mapping} for more details.}:
\begin{equation}
    \max_{p \in \calP} \sum_{i \in \calV} \bbE_{\bfm \sim p} \big[\log{q^i(\bfz_{1:t}^i|\bfm)}\big],
\label{eq:log_likelihood_obj}
\end{equation}
where $\calP$ is the space of all probability mass functions (PMF) over the set of possible maps:
\begin{equation}\label{eq:prob_manifold}
    \calP = \{p(\cdot) | \sum_{\bfm} p(\bfm) = 1, \; p(\bfm) \geq 0 \;\; \forall\bfm \in \calC^N\}
\end{equation}
The map cell independence assumption allows for decomposing the measurement log-likelihood as a sum over individual map cells $m$, as indicated in the following lemma.

\begin{lemma}\label{lemma:log_likelihood_obj_decomp}
The objective function in \eqref{eq:log_likelihood_obj} can be expressed as a sum over all map cells and all observations:
\begin{equation}
    \sum_{i \in \calV} \sum_{\tau = 1}^{t} \Big( \log{q^i(\bfz_{\tau}^i)} + \sum_{n =1}^N \bbE_{m \sim p_n} \big[\log{\frac{q^i(m | \bfz_{\tau}^i)}{p_n(m)}}\big] \Big),
\label{eq:log_likelihood_obj_decomp}
\end{equation}
where $q^i(\bfz_{\tau}^i)$ is the marginal density of the observation, $p_n(\cdot)$ denotes the PMF of the $n$-th map cell, and $q^i(m | \bfz_{\tau}^i)$ is an \emph{inverse observation model} that represents the sensor noise properties (see (10) in \cite{vaskasi_TRO}).
\end{lemma}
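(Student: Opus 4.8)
The plan is to establish the identity in two stages: first decompose the full-trajectory log-likelihood across time, then decompose each per-time-step term across map cells using Bayes' rule together with the cell-independence assumption. First I would exploit the conditional independence of the measurements given the map. Since the measurements $\bfz_{1:t}^i$ are independent conditioned on $\bfm$, the likelihood factorizes as $q^i(\bfz_{1:t}^i \mid \bfm) = \prod_{\tau=1}^t q^i(\bfz_\tau^i \mid \bfm)$, so that $\log q^i(\bfz_{1:t}^i \mid \bfm) = \sum_{\tau=1}^t \log q^i(\bfz_\tau^i \mid \bfm)$. Taking the expectation over $\bfm \sim p$ and summing over $i \in \calV$ reduces the objective in \eqref{eq:log_likelihood_obj} to $\sum_{i \in \calV} \sum_{\tau=1}^t \bbE_{\bfm \sim p}[\log q^i(\bfz_\tau^i \mid \bfm)]$, so it suffices to prove the claimed identity for a single term $\bbE_{\bfm \sim p}[\log q^i(\bfz_\tau^i \mid \bfm)]$.

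Next I would apply Bayes' rule to the single-time-step likelihood, taking the current belief $p$ as the prior over maps: $q^i(\bfz_\tau^i \mid \bfm) = q^i(\bfz_\tau^i)\, q^i(\bfm \mid \bfz_\tau^i)/p(\bfm)$, where $q^i(\bfz_\tau^i)$ is the marginal (evidence) and $q^i(\bfm \mid \bfz_\tau^i)$ is the posterior, i.e., the inverse observation model. Taking logarithms gives $\log q^i(\bfz_\tau^i \mid \bfm) = \log q^i(\bfz_\tau^i) + \log\bigl(q^i(\bfm\mid\bfz_\tau^i)/p(\bfm)\bigr)$. Invoking the cell-independence assumption, both the prior and the posterior factorize over the $N$ cells, $p(\bfm) = \prod_{n=1}^N p_n(m_n)$ and $q^i(\bfm \mid \bfz_\tau^i) = \prod_{n=1}^N q^i(m_n \mid \bfz_\tau^i)$, so the log-ratio collapses to $\sum_{n=1}^N \log\bigl(q^i(m_n \mid \bfz_\tau^i)/p_n(m_n)\bigr)$.

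Finally I would take the expectation $\bbE_{\bfm \sim p}$ term by term. The evidence $\log q^i(\bfz_\tau^i)$ is constant in $\bfm$, so it passes through unchanged. For the per-cell sum, independence of the cells under $p$ means the expectation of the $n$-th summand depends only on the marginal $p_n$, yielding $\sum_{n=1}^N \bbE_{m \sim p_n}[\log(q^i(m \mid \bfz_\tau^i)/p_n(m))]$. Substituting back and restoring the sums over $\tau$ and $i$ reproduces \eqref{eq:log_likelihood_obj_decomp}.

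I expect the main obstacle to be the factorization step in the middle paragraph: one must be careful that the quantity denoted $q^i(m \mid \bfz_\tau^i)$ is precisely the per-cell inverse observation model of \cite{vaskasi_TRO} and that the prior appearing in the Bayes relation is the current belief $p$ rather than a generic or uniform prior, since it is exactly this choice that produces $p_n(m)$ in the denominator and gives each per-cell term the interpretation of a negative KL divergence $-D_{\mathrm{KL}}(p_n \,\|\, q^i(\cdot \mid \bfz_\tau^i))$. Justifying that the posterior over the full map factorizes—and not merely the prior—relies on the ray-based, per-cell structure of the sensor likelihood, and is the step that most depends on importing the observation-model construction from the prior work.
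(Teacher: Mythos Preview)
Your proposal is correct and follows essentially the same route as the paper: decompose over time via conditional independence of the observations given $\bfm$, apply Bayes' rule to each $q^i(\bfz_\tau^i\mid\bfm)$, use cell independence to split the log-ratio into a sum over cells, and then reduce the full-map expectation to a per-cell expectation. Your treatment is in fact more careful than the paper's, which simply asserts the posterior factorization without the caveat you raise about importing the per-cell inverse observation model from \cite{vaskasi_TRO}.
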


\begin{proof}
See Appendix~\ref{app:log_likelihood_obj_decomp}.
\end{proof}

The log-density term $\log{q^i(\bfz_{\tau}^i)}$ in \eqref{eq:log_likelihood_obj_decomp} does not depend on any of the map probabilities $p_n(\cdot)$, $n \in \crl{1, \ldots, N}$; hence, it can be removed from the objective without affecting the solution. Moreover, each term in the innermost summation in \eqref{eq:log_likelihood_obj_decomp} only depends on a single map cell probability $p_n(\cdot)$. Therefore, the maximization of the objective can be carried out separately for each cell $m$:
\begin{equation}\label{eq:log_likelihood_obj_decomp_single}
    \max_{p_n \in \calP_{\calC}} \sum_{i \in \calV} \bbE_{m \sim p_n} \big[\log{\frac{q^i_t(m)}{p_n(m)}}\big],
\end{equation}
where $\calP_{\calC}$ is the space of categorical distributions over $\calC$ and
\begin{equation}\label{eq:qit}
    q^i_t(m) = \prod_{\tau=1}^t q^i(m | \bfz_{\tau}^i)^{1/t}.    
\end{equation}
In order to remove the constraint $p_n \in \calP_{\calC}$, we utilize a multi-class log-odds ratio representation of the categorical distribution \cite{vaskasi_TRO}:
\begin{equation}\label{eq:logodds}
    \bfh_n := \begin{bmatrix} \log \frac{p_n(m = 0)}{p_n(m = 0)} & \cdots & \log \frac{p_n(m = C)}{p_n(m = 0)} \end{bmatrix}^\top \in \bbR^{C+1}.
\end{equation}
A PMF and its log-odds representation have a one-to-one correspondence through the softmax function $\sigma:\mathbb{R}^{C+1} \rightarrow \mathbb{R}^{C+1}$:
\begin{equation}
    p_n(m = c) = \sigma_{c+1}(\bfh_n) := \frac{\bfe_{c+1}^\top \exp(\bfh_n)}{\mathbf{1}^\top \exp(\bfh_n)},\nonumber
\end{equation}
where $\bfe_c$ is the standard basis vector with $c$-th element equal to $1$ and $0$ elsewhere, $\mathbf{1}$ is the vector with all elements equal to $1$, and $\exp(\cdot)$ is applied element-wise to the vector $\bfh_n$. In order to enable distributed optimization of the objective \eqref{eq:log_likelihood_obj_decomp_single} via the framework of Sec.~\ref{sec:dist_Riemannian_opt}, we introduce a constraint that requires the robots to agree on a common map estimate using only one-hop communication.

\begin{problem}
    Let $\calG(\calV, \calE)$ be a network of robots, where each robot $i \in \calV$ collects semantic point cloud observations $\bfz_t^i$.
    Construct local estimates of the map log-odds $\bfh^i$ at each robot $i$ that are consistent among the robots via the following optimization:
    \begin{equation}
    \begin{aligned}  
        &\max_{\bfh^{1:|\calV|} \in \bbR^{(C+1) \times |\calV|}} \sum_{i \in \calV} f^i(\bfh^i),\\
        \text{s.t.} \quad &\phi(\bfh^{1:|\calV|}) = \sum_{\{i,j\} \in \calE} A_{ij} \|\bfh^{j} - \bfh^{i}\|_2^2 = 0,
    \label{eq:dist_mapping_prob}
    \end{aligned}
    \end{equation}
    where $\displaystyle{f^i(\bfh^i) = \sum_{c \in \calC} \sigma_{c+1}(\bfh^i) \log{\frac{q^i_t(c)}{\sigma_{c+1}(\bfh^i)}}}$ and $q^i_t(c)$ is defined in \eqref{eq:qit}.
\end{problem}

The multi-robot mapping problem in \eqref{eq:dist_mapping_prob} has the same structure as the general distributed optimization in \eqref{eq:dist_opt}. Therefore, the distributed Riemannian optimization algorithm (Alg.~\ref{alg:dist_opt}) can be employed to perform multi-robot semantic mapping. Note that $\phi(\cdot)$ is globally convex because of the flatness of Euclidean space. Thus, Theorem~\ref{thm:dist_opt_cons_opt} guarantees that Alg.~\ref{alg:dist_opt} can achieves consensus in the map estimates of all robots. The application of Alg.~\ref{alg:dist_opt} to solve \eqref{eq:dist_mapping_prob} in a distributed manner is presented in Alg.~\ref{alg:dist_mapping}.
The update step in line~\ref{alg_line:consensus_map} guides the local log-odds towards satisfaction of the consensus constraint, which only requires single-hop communication between neighboring robots $j \in \calN_i$. Line~\ref{alg_line:grad_comp_map} incorporates the local observations via $\bfgamma^i$ and $\bfbeta^i$, where $\odot$ is element-wise multiplication. This step is local to each robot $i$ and does not require communication. Note that lines \ref{alg_line:consensus_map} and \ref{alg_line:grad_apply_map} resemble the log-odds equivalent of Bayes rule for updating multi-class probabilities (see (8) in \cite{vaskasi_TRO}).

\begin{algorithm}[t]
\caption{Distributed Semantic Mapping}
\begin{algorithmic}[1]
\renewcommand{\algorithmicrequire}{\textbf{Input:}}
\renewcommand{\algorithmicensure}{\textbf{Output:}}
\Require Local observations $\bfz_{1:t}^i$ and initial multi-class map estimate ${\bfh^i}^{(0)}$
\Ensure Globally consistent semantic map
\For{$k \in \bbZ_{\geq 0}$}
    \For{each cell in $\bfm$}
        \LineComment{Promote consensus with step size $\epsilon_m$:}
        \State ${\Tilde{\bfh}^i}^{(k)} = {\bfh^i}^{(k)} + \epsilon_m \sum_{j \in \calN_i} A_{ij} ({\bfh^j}^{(k)} - {\bfh^i}^{(k)})$ \label{alg_line:consensus_map}
        \LineComment{Local gradient computation:}
        \State $\boldsymbol{\Delta}^i = {\Tilde{\bfh}^i}^{(k)} - \log{\bfq^i_t}$ \Comment{$\log{\bfq^i_t} = [\log{q^i_t(c)}]_{c=0}^{C}$}\label{alg_line:update_obs_start}
        \State $\bfgamma^i = (\exp({\Tilde{\bfh}^i}^{(k)})^{\top} \boldsymbol{\Delta}^i) \mathbf{1}$
        \State $\bfbeta^i = (\exp({\Tilde{\bfh}^i}^{(k)})^{\top} \mathbf{1}) \boldsymbol{\Delta}^i$
        \State $\bfg^i = (\bfgamma^i - \bfbeta^i) \odot \frac{\exp({\Tilde{\bfh}^i}^{(k)})}{(\exp({\Tilde{\bfh}^i}^{(k)})^{\top} \mathbf{1})^2}$ \label{alg_line:grad_comp_map}
        \LineComment{Apply gradient with step size $\alpha_m^{(k)}$:}
        \State ${\bfh^i}^{(k+1)} = {\Tilde{\bfh}^i}^{(k)} + \alpha_m^{(k)} \bfg^i$\label{alg_line:grad_apply_map}
        \State ${h^i_1}^{(k+1)} = 0$ \Comment{${h^i_1}^{(k+1)} = \log\frac{p(m=0)}{p(m=0)} = 0$}\label{alg_line:update_obs_end}
    \EndFor
\EndFor
\State \Return ${\bfh^i}^{(k)}$
\end{algorithmic}
\label{alg:dist_mapping}
\end{algorithm}

The distributed semantic mapping algorithm we developed assumes a regular grid representation of the environment. To reduce the storage and communication requirements, we may utilize a semantic octree data structure which provides a lossless compression of the original 3-D multi-class map. In this case, the update rules in Alg.~\ref{alg:dist_mapping} should be applied to all leaf nodes in the semantic octree map of each robot $i$. Refer to Alg.~3 in \cite{vaskasi_TRO} for the semantic octree equivalents of the update steps in lines \ref{alg_line:consensus_map} and \ref{alg_line:grad_apply_map}.

In this section, we presented the mapping component of ROAM as distributed construction of semantic octree maps given local semantic point cloud observations at each robot. In the next section, we introduce the multi-robot planning component of ROAM, where robots cooperatively find trajectories along which their observations are maximally informative. Employing ROAM for simultaneous distributed mapping and planning closes the loop for autonomously exploring an unknown environment with a team of robots.

\section{Multi-Robot Planning}
\label{sec:dist_planning}

We discussed the case where observations are collected passively along the robot trajectories and used for distributed mapping. In this section, we consider planning the motion of the robots to collect observations that reduce map uncertainty and uncover an unknown environment. This active mapping process prevents redundant observations that may not improve the map accuracy or increase the overall covered area.

Let $\bfX^i_t \in \SE$ be the pose of robot $i \in \calV$, at time $t$:
\begin{equation*}
    \bfX^i_t = \begin{bmatrix} \bfR^i_t & \bfp^i_t \\ \mathbf{0}^\top & 1 \end{bmatrix},
\end{equation*}
where $\bfR^i_t \in \textit{SO}(3)$ and $\bfp^i_t \in \bbR^3$ are the robot's orientation and position, respectively. The Lie algebra $\se$ corresponding to the Lie group $\SE$ is defined as follows:
\begin{equation}
    \se = \Big\{\hat{\bfxi} := \begin{bmatrix} \hat{\bftheta} & \bfrho \\ \mathbf{0}^\top & 0 \end{bmatrix} \in \bbR^{4 \times 4} \Big|\; \bfxi = \begin{bmatrix} \bfrho\\\bftheta \end{bmatrix} \in \bbR^6\Big\},\nonumber
\end{equation}
with $\hat{(\cdot)}$ used to denote the mapping from a vector $\bfxi \in \mathbb{R}^6$ to a $4 \times 4$ twist matrix in $\se$. The matrix exponential $\exp(\cdot): \se \rightarrow \SE$ relates a twist in $\se$ to a pose in $\SE$ via the Rodrigues' formula:
\begin{equation}
    \exp(\hat{\bfxi}) = I + \hat{\bfxi} + \frac{(1 - \cos{\|\theta\|})}{\|\theta\|^2} {\hat{\bfxi}}^2 + \frac{(\|\theta\| - \sin{\|\theta\|})}{\|\theta\|^3} {\hat{\bfxi}}^3.\nonumber
\end{equation}
The exponential mapping at an arbitrary pose $\bfX \in \SE$ with perturbation $\bfxi \in \bbR^6$ (in the robot frame) can be expressed as:
\begin{equation}
\Exp_{\bfX}{(\bfxi)} = \bfX \exp(\hat{\bfxi}).\nonumber
\end{equation}
The distance between two poses $\bfX^i_t$ and $\bfX^j_{t'}$ is defined as:
\begin{equation} 
    d^2(\bfX^i_t, \bfX^j_{t'}) = \bfxi^{\top}_{\bfX^i_t, \bfX^j_{t'}} \Gamma \bfxi_{\bfX^i_t, \bfX^j_{t'}}, \;\; \bfxi_{\bfX^i_t, \bfX^j_{t'}}\! =\! \log(\bfX^{i^{-1}}_t \bfX^j_{t'})^{\!\vee},\nonumber
\end{equation}
where the functions $\log(\cdot): \SE \rightarrow \se$ and $(\cdot)^\vee: \se \rightarrow \bbR^6$ denote the inverse mappings associated with $\exp(\cdot)$ and $\hat{(\cdot)}$, respectively. Also, $\Gamma \in \bbR^{6 \times 6}$ is a diagonal matrix with positive diagonal entries that account for the difference in scale between the linear and angular elements of $\bfxi_{\bfX^i_t, \bfX^j_{t'}}$. For more details, please refer to \cite[Ch.7]{BarfootBook}.

To enable gradient-based pose trajectory optimization, we introduce differentiable cost functions to quantify the safety and the informativeness of a pose trajectory. We use a distance field $D(\bfX^i_t, p^i_t(\bfm))$ as a measure of path safety derived from the map $p^i_t(\bfm)$ of robot $i$ given observations up to time $t$. To obtain the distance field, we extract a maximum likelihood occupancy map from $p^i_t(\bfm)$ and compute the distance transform. Furthermore, we use the Shannon mutual information $I(\bfm; \bfz | \bfX^i_t, p^i_t(\bfm))$~\cite[Eq.(4)]{vaskasi_TRO} to quantify the informativeness of an observation $\bfz$ made from pose $\bfX^i_t$ with respect to the current map $p^i_t(\bfm)$ of robot $i$.
In the case of semantic octree mapping with a range sensor, mutual information is not differentiable with respect to the pose $\bfX^i_t$. As a solution, we use the approach in \cite{vaskasi_iros22} to obtain a differentiable approximation of mutual information by interpolating its values at several nearby poses $\bfV \in \SE$. Specifically, the collision and informativeness score of a pose $\bfX_t^i$ is expressed as a convex combination of poses $\bfV$ on a grid $\calX(\bfX_t^i)$ inside a geodesic ball centered around $\bfX_t^i$ with radius $\xi_{\text{max}}$:
\begin{equation}
\begin{aligned}
    \mathfrak{f}(\bfX_t^i, p^i_t(\bfm)) &= \sum_{\bfV \in \calX(\bfX_t^i)} \lambda_{\bfV}(\bfX_t^i) s^i(\bfV),\\
    s^i(\bfV) &= I(\bfm; \bfz | \bfV, p^i_t(\bfm)) + \gamma_c \log{D(\bfV, p^i_t(\bfm))},\nonumber
\end{aligned}
\end{equation}
%
where the safety constant $\gamma_c > 0$ trades off informativeness with collision avoidance and the convex combination coefficients $\lambda_{\bfV}(\bfX_t^i)$ adjust the influence of the terms corresponding to $\bfV$ based on distance to $\bfX_t^i$:
\begin{equation}
\begin{aligned}
    \lambda_{\bfV}(\bfX_t^i) &= \frac{1 + \cos(\Bar{d}(\bfX_t^i, \bfV))}{\sum_{\bfU \in \calX(\bfX_t^i)} (1 + \cos(\Bar{d}(\bfX_t^i, \bfU)))},\\
    \Bar{d}(\bfX_t^i, \bfV) &= \frac{\pi}{\xi_{\text{max}}} d(\bfX_t^i, \bfV).\nonumber
\end{aligned}
\end{equation}
Fig.~\ref{fig:diff_info} illustrates the collision and informativeness score $\mathfrak{f}$. 

\begin{figure}[t]
    \centering
    \includegraphics[width=0.9\linewidth]{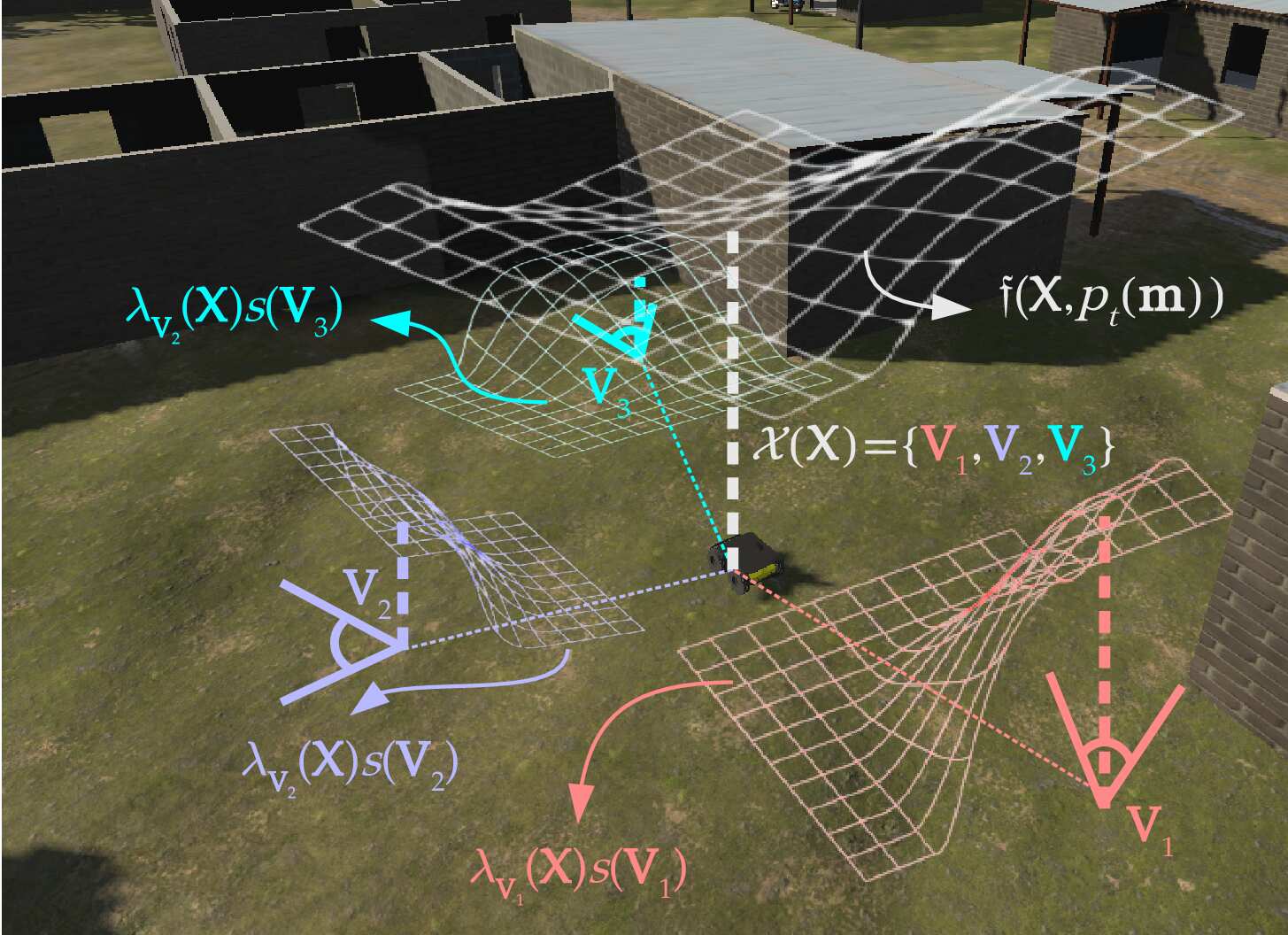}
    \caption{Collision and informativeness score for robot pose $\bfX$. Each sampled viewpoint $\bfV_l \in \calX(\bfX)$
    is colored differently. For each viewpoint, the field of view and the distance from the nearest obstacle determine the Shannon mutual information $I(\bfm; \bfz \mid \bfV_l, p_t(\bfm))$ and the log-distance $\log{D(\bfV_l, p_t(\bfm))}$, respectively. The weight $\lambda_{\bfV_l}(\bfX)$ dictates the contribution of $\bfV_l$ to the total score function $\mathfrak{f}(\bfX, p_t(\bfm))$, colored white.}
    \label{fig:diff_info}
\end{figure}

Cooperative planning requires the robots to take into account the plans of their peers in order to avoid actions that provide redundant information. Let $\mathfrak{X} = [\bfX^1_{t+1:t+T}, \ldots, \bfX^{|\calV|}_{t+1:t+T}]^{\top} \in \SE^{|\calV| \times T}$ be the concatenated $T$-length trajectories of all robots in $\calV$, where $T$ is the planning horizon. In the remainder of this section, we use $\mathfrak{X}_{i, \tau}$ as an alternative notation for $\bfX_{t+\tau}^i$, namely the $\SE$ pose of robot $i$ at time $t+\tau$. The function $\mathfrak{q}(\mathfrak{X}_{i,\tau}, \mathfrak{X}_{j,\tau'})$ quantifies the observation redundancy as the overlap between sensor field of views (FoVs) for two poses $\mathfrak{X}_{i,\tau}$ and $\mathfrak{X}_{j,\tau'}$:
\begin{equation}
    \mathfrak{q}(\mathfrak{X}_{i,\tau}, \mathfrak{X}_{j,\tau'}) = \max\crl{0, 2 d_{\mathfrak{q}} - \|\bfQ(\mathfrak{X}_{i,\tau} - \mathfrak{X}_{j,\tau'})\bfe\|_2}^2,\nonumber
\end{equation}
where:
\begin{equation}
    d_{\mathfrak{q}} = |\calF| + \xi_{\text{max}},\quad \bfQ = \begin{bmatrix} \bfI_{3 \times 3} & \boldsymbol{0}_{3 \times 1} \\ \boldsymbol{0}_{1 \times 3} & 0 \end{bmatrix},\quad \bfe = \begin{bmatrix} \boldsymbol{0}_{3 \times 1} \\ 1 \end{bmatrix},\nonumber
\end{equation}
and $|\calF|$ is the diameter of the sensor FoV. 

The local objective function for robot $i$ is defined using the collision and informativeness score $\mathfrak{f}$ and the FoV overlap $\mathfrak{q}$:
\begin{equation}\label{eq:traj_score}
\begin{aligned}
    f^i(&\mathfrak{X}, p^i_t(\bfm)) = \sum_{\tau = 1}^T \Big[\mathfrak{f}(\mathfrak{X}_{i,\tau}, p^i_t(\bfm))\\
    &- \gamma_{\mathfrak{q}} \sum_{j \in \calV} \sum_{\tau'=1}^T [1 - \delta_{ij}\delta_{\tau\tau'}][1 - \frac{\delta_{ij}}{2}] \mathfrak{q}(\mathfrak{X}_{i,\tau}, \mathfrak{X}_{j,\tau'})\Big],
\end{aligned}
\end{equation}
where $\delta_{ij}$ is the Kronecker delta which takes value $1$ if and only if $i = j$, and $0$ otherwise. Also, the constant $\gamma_{\mathfrak{q}} > 0$ trades off trajectory collision avoidance and informativeness with sensor FoV overlap.

The goal of multi-robot planning is to maximize the sum of local objective functions $f^i$ over $\calV$. Since we intend to perform the maximization in a distributed manner, we consider \emph{local plans} $\mathfrak{X}^i \in \SE^{|\calV| \times T}$ for each robot $i \in \calV$, representing an individual robot's plan for the collective trajectories of the team. Eventually, these local plans should reach consensus so that the team members act in agreement. To quantify the disagreement among the robot plans, we define an aggregate distance function $\phi(\cdot): \SE^{|\calV| \times T \times |\calV|} \rightarrow \bbR_{\geq 0}$ that accumulates the pairwise distances between all local plans $\mathfrak{X}^i$, $i \in \calV$:
\begin{equation}\label{eq:traj_constraint}
    \phi(\mathfrak{X}^{1:|\calV|}) = \sum_{\{i,j\} \in \calE} A_{ij} d^2(\mathfrak{X}^i, \mathfrak{X}^j),
\end{equation}
where $d: \SE^{|\calV| \times T \times 2} \rightarrow \bbR_{\geq 0}$ is defined via extension of the distance in $\SE$ to the product manifold $\SE^{|\calV| \times T}$.

\begin{problem}\label{problem:planning}
    Let $\calG(\calV, \calE)$ be a network of robots, where each robot $i \in \calV$ maintains a local map $p^i_t(\bfm)$ obtained by solving \eqref{eq:dist_mapping_prob}. Determine $\SE$ pose trajectories for all robots that maximize the cost function in \eqref{eq:traj_score} subject to the consensus constraint in \eqref{eq:traj_constraint}:
    \begin{equation}
    \begin{aligned}  
        \max_{\mathfrak{X}^{1:|\calV|} \in \SE^{|\calV| \times T \times |\calV|}} &\sum_{i \in \calV} f^i(\mathfrak{X}^i, p^i_t(\bfm)),\\
        \text{s.t.} \quad\qquad &\phi(\mathfrak{X}^{1:|\calV|}) = 0.
    \label{eq:dist_plan_prob}
    \end{aligned}
    \end{equation}
\end{problem}

The structure of the planning problem in \eqref{eq:dist_plan_prob} is compatible with the distributed Riemannian optimization method of Sec.~\ref{sec:dist_Riemannian_opt}. We formulate a version of Alg.~\ref{alg:dist_opt} specialized for the $\SE$ manifold. Due to the positive curvature of the $\SE$ manifold, the aggregate distance function $\phi(\cdot)$ has local minima (see Appendix A.3 in \cite{tedrake}). Thus, if the initial trajectories ${\mathfrak{X}^i}^{(0)}$, $i \in \calV$, are not similar, the algorithm may converge to a local optimum of the consensus constraint \eqref{eq:traj_constraint}. Furthermore, the local objective functions $f^i$, $i \in \calV$, are only locally concave \cite{julian}. Hence, Theorem~\ref{thm:dist_opt_cons_opt} guarantees only a locally optimal consensus solution.

\begin{algorithm}[t]
\caption{Distributed Planning for Exploration}
\begin{algorithmic}[1]
\renewcommand{\algorithmicrequire}{\textbf{Input:}}
\renewcommand{\algorithmicensure}{\textbf{Output:}}
\Require Local map $p_t^i(\bfm)$ of robot $i$
\Ensure Collaborative robot team plan for exploration
\State $\mathfrak{X}^{i^{(0)}} = \Call{frontier}{p_t^i(\bfm)} \quad \forall i \in \calV$ \Comment{Initialization}
\For{$k \in \bbZ_{\geq 0}$}
    \LineComment{Promote consensus with step size $\epsilon_p$:}
    \For{every $l \in \calV$ and $\tau \in \crl{1, \ldots, T}$}
        \State\vspace*{-1.4\baselineskip}
        \begin{fleqn}[\dimexpr\leftmargini-1.0\labelsep]
        \begin{equation}
        \begin{aligned}
            \Tilde{\mathfrak{X}}^{i^{(k)}}_{l, \tau} = \mathfrak{X}^{i^{(k)}}_{l, \tau} &\exp\Big(\epsilon_p \sum_{j \in \calN_i} A_{ij}\\
            &\times \big(J_L^{-\top}\!(\bfxi_{\mathfrak{X}_{l,\tau}^{i^{(k)}}, \mathfrak{X}_{l,\tau}^{j^{(k)}}}) \Gamma \bfxi_{\mathfrak{X}_{l,\tau}^{i^{(k)}}, \mathfrak{X}_{l,\tau}^{j^{(k)}}} \big)^{\wedge} \Big)\nonumber
        \end{aligned}
        \end{equation}
        \end{fleqn} \label{alg_line:consensus_plan}%
    \EndFor
    \LineComment{Local gradient computation:}
    \State $\bfg_{l, \tau} = 0$ \Comment{Initialize for all $l \in \calV$, $\tau \in \crl{1, \ldots, T}$}\label{alg_line:grad_init_plan}
    \For{every $\tau' \in \crl{1, \ldots, T}$}
        \State $c_{\text{set}} = |\calX(\Tilde{\mathfrak{X}}^{i^{(k)}}_{i,\tau'})| + \sum_{\bfV \in \calX(\Tilde{\mathfrak{X}}^{i^{(k)}}_{i,\tau'})} \cos{(\Bar{d}(\Tilde{\mathfrak{X}}^{i^{(k)}}_{i, \tau'}, \bfV))}$\label{alg_line:begin_info_grad_plan}
        \State\vspace*{-1.3\baselineskip}
        \begin{fleqn}[\dimexpr\leftmargini-1.0\labelsep]
        \begin{equation}
        \begin{aligned}
            \bfg_{i, \tau'} &\mathrel{+}= \sum_{\bfV \in \calX(\Tilde{\mathfrak{X}}^{i^{(k)}}_{i,\tau'})} [(s(\bfX) - \mathfrak{f}(\Tilde{\mathfrak{X}}^{i^{(k)}}_{i, \tau'}, p^i_t(\bfm)))\\
            &\times \frac{\sin{(\Bar{d}(\Tilde{\mathfrak{X}}^{i^{(k)}}_{i, \tau'}, \bfV))}}{c_{\text{set}} \Bar{d}(\Tilde{\mathfrak{X}}^{i^{(k)}}_{i, \tau'}, \bfV)} J_L^{-\top}\!(\bfxi_{\Tilde{\mathfrak{X}}^{i^{(k)}}_{i, \tau'}, \bfV}) \Gamma \bfxi_{\Tilde{\mathfrak{X}}^{i^{(k)}}_{i, \tau'}, \bfV} ]\nonumber
        \end{aligned}
        \end{equation}
        \end{fleqn}\label{alg_lin:info_grad_plan}%
        \For{every $l \in \calV$ and $\tau \in \crl{1, \ldots, T}$}
            \State $\bfp_{i, \tau'} = \bfQ \Tilde{\mathfrak{X}}^{i^{(k)}}_{i, \tau'} \bfe$,\quad $\bfp_{l, \tau} = \bfQ \Tilde{\mathfrak{X}}^{i^{(k)}}_{l, \tau} \bfe$\label{alg_line:begin_overlap_grad_plan}
            \State $\bfR_{i, \tau'} = \bfQ \Tilde{\mathfrak{X}}^{i^{(k)}}_{i, \tau'} \bfE$,\quad $\bfR_{l, \tau} = \bfQ \Tilde{\mathfrak{X}}^{i^{(k)}}_{l, \tau} \bfE$
            \State $\bfc_{\text{disp}} = [1 - \delta_{il}\delta_{\tau\tau'}][1 - \frac{\delta_{il}}{2}] (\bfp_{i, \tau'} - \bfp_{l, \tau})$
            \State $\bfc_{\text{tot}} = \gamma_{\mathfrak{q}} \bfc_{\text{disp}} \max\crl{0, \frac{2 d_{\mathfrak{q}}}{\|\bfp_{i, \tau'} - \bfp_{l, \tau}\|_2} - 1}$
            \State $\bfg_{i, \tau'} \mathrel{+}= \begin{bmatrix} {\bfR^{\top}_{i, \tau'}} \bfc_{\text{tot}}  \\ \boldsymbol{0}_{3 \times 1} \end{bmatrix}$,\quad $\bfg_{l, \tau} \mathrel{-}= \begin{bmatrix} {\bfR^{\top}_{l, \tau}} \bfc_{\text{tot}} \\ \boldsymbol{0}_{3 \times 1} \end{bmatrix}$\label{alg_lin:overlap_grad_plan}
        \EndFor
    \EndFor
    \LineComment{Apply gradient with step size $\alpha_p^{(k)}$}
    \For{every $l \in \calV$ and $\tau \in \crl{1, \ldots, T}$}
        \State $\mathfrak{X}^{i^{(k+1)}}_{l, \tau} = \Tilde{\mathfrak{X}}^{i^{(k)}}_{l, \tau} \exp( \alpha_p^{(k)} \Hat{\bfg}_{l, \tau})$\label{alg_line:grad_apply_plan}
    \EndFor
\EndFor
\State \Return $\mathfrak{X}^{i^{(k)}}$
\end{algorithmic}
\label{alg:dist_planning}
\end{algorithm}

Our distributed planning algorithm for solving \eqref{eq:dist_plan_prob} is presented in Alg.~\ref{alg:dist_planning}.
Given its current local map $p_t^i(\bfm)$, each robot $i \in \calV$ computes an initial plan $\mathfrak{X}^{i^{(0)}}$ for the whole team using frontier-based exploration \cite{frontier}.
In line~\ref{alg_line:consensus_plan}, each pose in the local plan $\mathfrak{X}^{i^{(k)}}$ is guided towards consensus with the plans of neighboring robots $j \in \calN_i$. The update in this line is carried out via a perturbation in the robot frame, where $J_L(\cdot)$ denotes the left Jacobian of $\SE$~\cite[Ch.7]{BarfootBook}, and only involves communication between neighbors. To compute the local objective function gradients with respect to each pose in the local plan, we first initialize the gradients with zero in line~\ref{alg_line:grad_init_plan}, and then populate them with proper values in lines~\ref{alg_lin:info_grad_plan} and \ref{alg_lin:overlap_grad_plan}. The gradient of the collision and informativeness score $\mathfrak{f}(\Tilde{\mathfrak{X}}^{i^{(k)}}_{i, \tau'}, p^i_t(\bfm))$ is computed in lines~\ref{alg_line:begin_info_grad_plan}-\ref{alg_lin:info_grad_plan}, while lines~\ref{alg_line:begin_overlap_grad_plan}-\ref{alg_lin:overlap_grad_plan} derive the gradient of the sensor overlap $\mathfrak{q}(\Tilde{\mathfrak{X}}^{i^{(k)}}_{i, \tau'}, \Tilde{\mathfrak{X}}^{i^{(k)}}_{l, \tau})$ with respect to both inputs, where $\bfE = [\bfI_{3 \times 3} \; \boldsymbol{0}_{3 \times 1}]^{\top}$. Note that, for the $\mathfrak{f}$ terms, we only need to compute the gradient with respect to robot $i$'s own trajectory $\Tilde{\mathfrak{X}}^{i^{(k)}}_{i, \tau'}$, $\tau' \in \crl{1, \ldots, T}$, whereas for the $\mathfrak{q}$ terms, robot $i$ should locally obtain gradients with respect to both its own trajectory as well as the trajectories of all other robots in $\calV$. Since each robot stores the trajectory of the whole team, the computation for the gradients of the $\mathfrak{q}$ terms does not require any communication among the robots. Lastly, in line~\ref{alg_line:grad_apply_plan}, we apply the computed gradients to each pose in the local plan, using a right perturbation in the robot frame.

Solving \eqref{eq:dist_plan_prob} via Alg.~\ref{alg:dist_planning} leads to two types of behaviors.
\begin{enumerate}
    \item Locally, the robots attempt to maximize information and distance from obstacles along their trajectories. This encourages each robot to visit unvisited parts of the environment, and corresponds to the $\mathfrak{f}$ terms of \eqref{eq:traj_score}.
    
    \item Within each neighborhood, the robots \emph{negotiate} with their peers to minimize redundant observations. This prevents the trajectories to amass at certain regions of the map, and corresponds to the $\mathfrak{q}$ terms of \eqref{eq:traj_score}.
\end{enumerate}
We emphasize that each local plan $\mathfrak{X}^i$ stores the paths for all robots in $\calV$, instead of only robot $i$'s and its immediate neighbors. This is because storing all $|\calV|$ paths in each robot allows propagation of the mentioned behaviors on a global scale, due to the consensus constraint of \ref{eq:dist_plan_prob}. Therefore, the global solution of \eqref{eq:dist_plan_prob} corresponds to a Pareto optimum where agents find an optimal trade-off between their own information and safety maximization on one hand and avoiding observation overlap with their peers on the other hand.

In this section, we developed the distributed planning component of ROAM. The robot trajectories are chosen to maximize information and safety for cooperative estimation of a semantic octree map. Combined with the distributed mapping method of Sec.~\ref{sec:dist_mapping}, the overall system can be utilized for efficient multi-robot exploration of an unknown environment. In the next section, we demonstrate the performance of ROAM in a variety of simulation and real-world experiments.

\section{Experiments}
\label{sec:exp}

\begin{figure}[t]
    \centering
    \includegraphics[width=\linewidth]{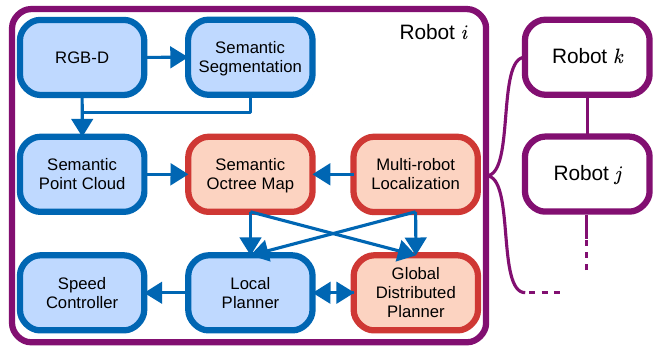}
    \caption{Software stack for multi-robot distributed active mapping. The blue blocks are local to each robot, whereas the red blocks require communication with neighboring robots. The communication links between pairs of robots are represented by violet lines.}
    \label{fig:system_arch}
\end{figure}

This section describes the implementation of ROAM on multi-robot systems. Next, we evaluate the performance of ROAM using several measures that quantify optimality, convergence to consensus, and communication overhead. The evaluations are done in both simulation and real-world.

\subsection{Implementation of ROAM for Distributed Active Mapping}
\label{subsec:exp_impl}


We deploy our approach on a team of ground wheeled robots, each equipped with an RGB-D sensor. Fig.~\ref{fig:system_arch} shows an overview of the software stack, implemented using the \textit{Robot Operating System} (ROS) \cite{ros}. The RGB-D sensor provides synchronized RGB and depth images. The RGB image is processed with a semantic segmentation algorithm to label each pixel with an object category. The segmented image is fused with the depth image to obtain a 3-D semantically annotated point cloud in the sensor frame of robot $i$.

\paragraph*{Multi-robot localization}
    It is required to perform multi-robot localization in order to find the transformation \begin{enumerate*}[label=\itshape\alph*\upshape)] \item from robot $i$'s sensor frame to a static \textit{world} frame $\calW_i$ for point cloud registration, and \item from $\calW_i$ to $\calW_j$ for distributed multi-robot mapping and planning\end{enumerate*}. Our implementation of multi-robot localization in the simulation and real-world experiments is explained in Sec.~\ref{subsec:exp_sim} and Sec.~\ref{subsec:exp_real}, respectively.

\paragraph*{Multi-robot mapping}
    The semantic point cloud is used to build and update a semantic octree map for each robot $i \in \calV$ via lines \ref{alg_line:update_obs_start}-\ref{alg_line:update_obs_end} of Alg.~\ref{alg:dist_mapping}. The semantic map of each robot $i$ is broadcasted to its neighboring robots $\calN_i$ once every $t^{\textit{pub}}_m$ seconds. Moreover, each robot pushes any newly received map to a local buffer memory, and performs line~\ref{alg_line:consensus_map} of Alg.~\ref{alg:dist_mapping} every $t^{\textit{int}}_m$ seconds to integrate neighbors' maps into its local map. The buffer is cleared after each successful iteration of Alg.~\ref{alg:dist_mapping}.

\paragraph*{Multi-robot viewpoint planning}
    To decouple low-frequency informative planning from high-frequency planning for collision-avoidance, we perform two separate planning stages, namely on global viewpoint level and on local trajectory level. On the viewpoint level, the distributed collaborative planning in Alg.~\ref{alg:dist_planning} is employed to find informative viewpoints for each robot in $\calV$. To coordinate viewpoint planning across all robots, every robot $i \in \calV$ maintains a ledger $\calL$ composed of $|\calV|$ binary values each indicating whether the corresponding robot in the team is ready for planning. Due to the decentralized nature of our method, each robot sends its own copy of the ledger $\calL_{i}$ to its neighbors every $t^{\textit{pub}}_{p}$ second, and updates $\calL_{i}$ using the incoming ledgers, as well as its status with respect to the current plan. Alg.~\ref{alg:dist_ledger} details the process of decentralized ledger synchronization for each robot $i$. In line~\ref{alg_line:copy_ledger} robot $i$ makes a copy of the incoming ledger $\calL_{\textit{inc}}$. Then, in line~\ref{alg_line:check_ready}, the function $\Call{checkReady()}{}$ determines whether or not the robot is ready to compute a new plan. A robot would declare ready to plan only when it has finished its previous plan and also it is currently not planning. The global distributed planning of Alg.~\ref{alg:dist_planning} would start only after a minimum fraction of robots, denoted by $\textit{thresh}_p$, are ready to plan. Line~\ref{alg_line:stable} is used to stabilize the ledger synchronization process. During the global distributed planning of Alg.~\ref{alg:dist_planning}, each robot $i$ broadcasts its local plan $\mathfrak{X}^i$ after each optimization iteration. Incoming local plans $\mathfrak{X}^j$, $j \in \calN_i$, are pushed to a local buffer memory to be used during the consensus step (line~\ref{alg_line:consensus_plan} of Alg.~\ref{alg:dist_planning}). The buffer is cleared after each optimization iteration. Lastly, the planning terminates after reaching $k_p$ iterations.

\begin{algorithm}[t]
\caption{Distributed Ledger Synchronization}
\begin{algorithmic}[1]
\renewcommand{\algorithmicrequire}{\textbf{Input:}}
\renewcommand{\algorithmicensure}{\textbf{Output:}}
\Require Incoming ledger $\calL_{\textit{inc}}$
\Ensure Synchronized ledger
\State $\calL_{i} = \calL_{\textit{inc}}$\label{alg_line:copy_ledger}
\If{$\Call{checkReady()}{}$}\label{alg_line:check_ready}
    \State $\calL_{i}[i] = 1$
    \If{$\Call{mean}{\calL_{i}} \geq \textit{thresh}_p$}
        \State $\Call{startPlanning()}{}$ \Comment{Viewpoint planning via Alg.~\ref{alg:dist_planning}}
    \EndIf
\Else
    \State $\calL_{i}[i] = \Call{isPlanning()}{}$\label{alg_line:stable}
\EndIf
\State \Return $\calL_{i}$
\end{algorithmic}
\label{alg:dist_ledger}
\end{algorithm}

\paragraph*{Local trajectory optimization}
    After computing a sequence of viewpoints $\mathfrak{X}^{1:|\calV|}$, each robot $i$ locally computes a trajectory to visit its portion of the viewpoints $\mathfrak{X}^i_{i, 1:T}$. The separation of the viewpoint planning from the trajectory optimization allows the robots to rapidly react to environment changes or mapping errors via local path re-planning, without the need to coordinate with their peers in viewpoint planning via Alg.~\ref{alg:dist_planning}. Furthermore, the two stage planning allows accounting for dynamical constraints of each robot in heterogeneous robot teams, such that the low-level trajectory optimizer takes the viewpoint set $\mathfrak{X}^i_{i, 1:T}$ and computes a dynamically feasible path. In our experiments, each robot $i$ projects its own semantic octree map onto a 2-D plane to obtain an occupancy grid map of the environment. Given its viewpoint set $\mathfrak{X}^i_{i, 1:T}$, the robot computes a sequence of collision-free positions and orientations that connect its current pose to $\mathfrak{X}^i_{i, 1}$, and each $\mathfrak{X}^i_{i, \tau}$ to $\mathfrak{X}^i_{i, \tau + 1}$ for $\tau \in \{1, \ldots, T - 1\}$. For this purpose, the trajectory optimizer uses $A^*$ graph search over the 2-D occupancy map. If a collision is detected during execution of the path, the corresponding path segment is re-planned using another $A^*$ call. The local trajectory is then used by a low-level speed controller to generate velocity commands.

An open-source implementation of ROAM is available on GitHub\footnote{\url{https://github.com/ExistentialRobotics/ROAM}.}. The rest of this section describes the simulation and real-world experiments. Table~\ref{tbl:exp_params} summarizes the parameters used across all experiments.

\bgroup
\def\arraystretch{1.25}
\begin{table}[t]
    \centering
    \caption{Parameter set for multi-robot exploration.}
    \small
    \label{tbl:exp_params}
    \begin{tabular}{|cc|cc|cc|}
        \hline
        \multicolumn{4}{|c|}{Planning} & \multicolumn{2}{c|}{Mapping}\\
        \hline
        $\epsilon_p$ & $0.1$ & $\alpha_p^{(k)}$ & $\frac{0.1}{k+1}$ & $\epsilon_m$ & $0.1$\\
        $d_{\mathfrak{q}}$ & $20$m & $\xi_{\text{max}}$ & $16$m & $\alpha_m^{(k)}$ & $\frac{1}{k+1}$\\
        $\gamma_c$ & $10^{-3}$ & $\gamma_{\mathfrak{q}}$ & $10^{-2}$ & $t^{\textit{pub}}_m$ & $5$\\
        $T$ & $5$ & $k_p$ & $20$ & $t^{\textit{int}}_m$ & $5$\\
        $t^{\textit{pub}}_{p}$ & $1$sec & $\textit{thresh}_p$ & $0.4$ &  Voxel  & \multirow{2}{*}{$0.2$m}\\\cline{1-4}
        \multicolumn{2}{|c}{$\Gamma$} & \multicolumn{2}{c|}{$\textit{diag}(1,1,1,0.1,0.1,0.1)$} & side length &\\
        \hline
    \end{tabular}
\end{table}
\egroup

\subsection{Simulation Experiments}
\label{subsec:exp_sim}

\begin{figure}[t]
    \centering
    \includegraphics[width=0.9\linewidth]{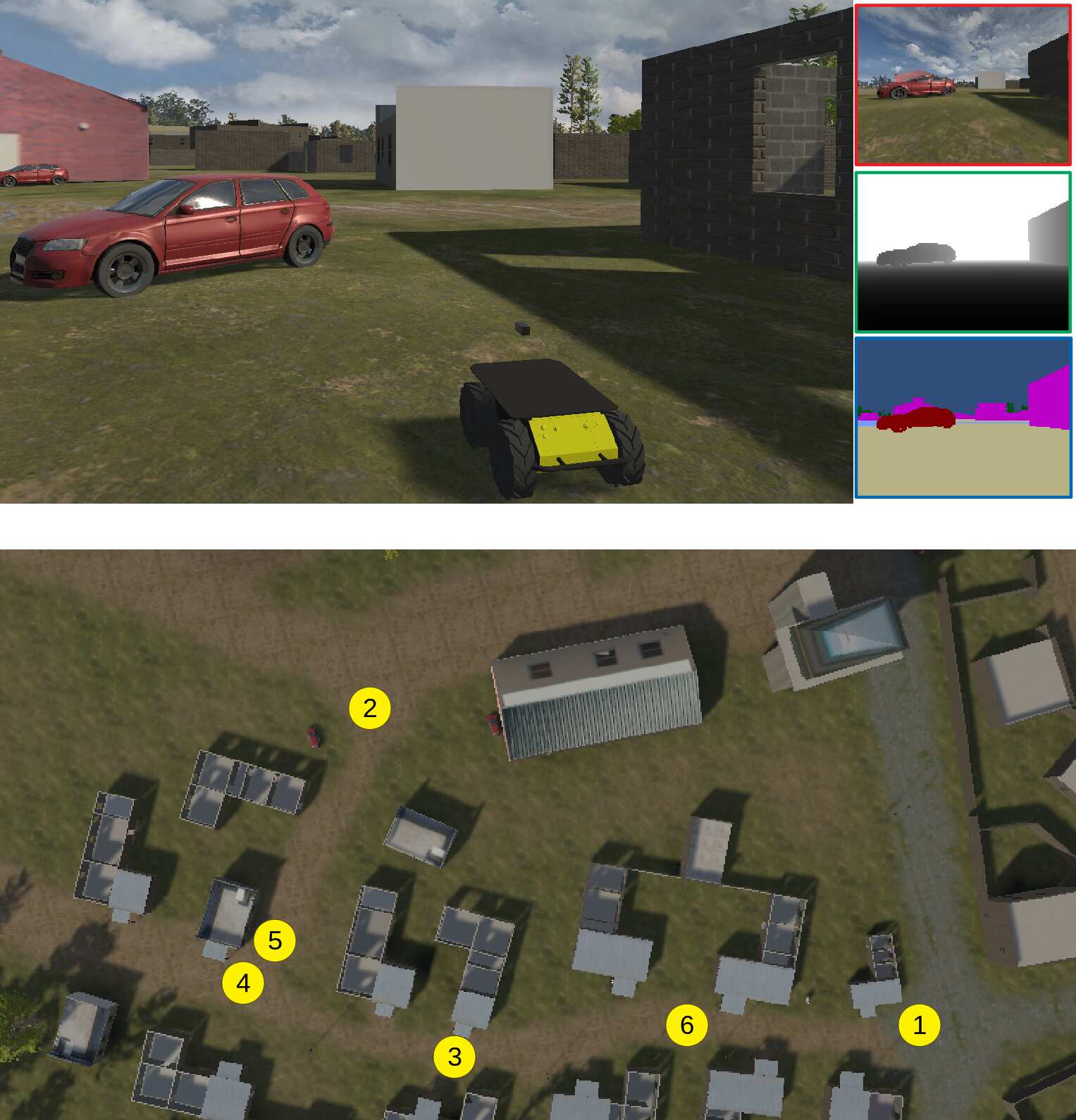}
    \caption{Simulation environment for multi-robot distributed active mapping. Top: A Husky robot receiving RGB, depth, semantic segmentation images. Bottom: A top-down view of the simulated environment, where the numbered circles show the starting positions of six robots.}
    \label{fig:sim_env}
\end{figure}

We carry out experiments in a photo-realistic 3-D simulation powered by the Unity engine~\cite{unity}. The environment resembles an outdoor village area with various types of terrain (e.g., grass, dirt road, asphalt, etc.) and object classes, such as buildings, cars, and street lighting. Our experiments utilize $|\calV| = 6$ ClearPath Husky wheeled robots, each equipped with an RGB-D sensor. We assume known robot poses and perfect semantic segmentation over the RGB input in the simulation experiments. Fig.~\ref{fig:sim_env} shows the simulation setup.

\begin{figure*}[t]
    \includegraphics[width=\linewidth]{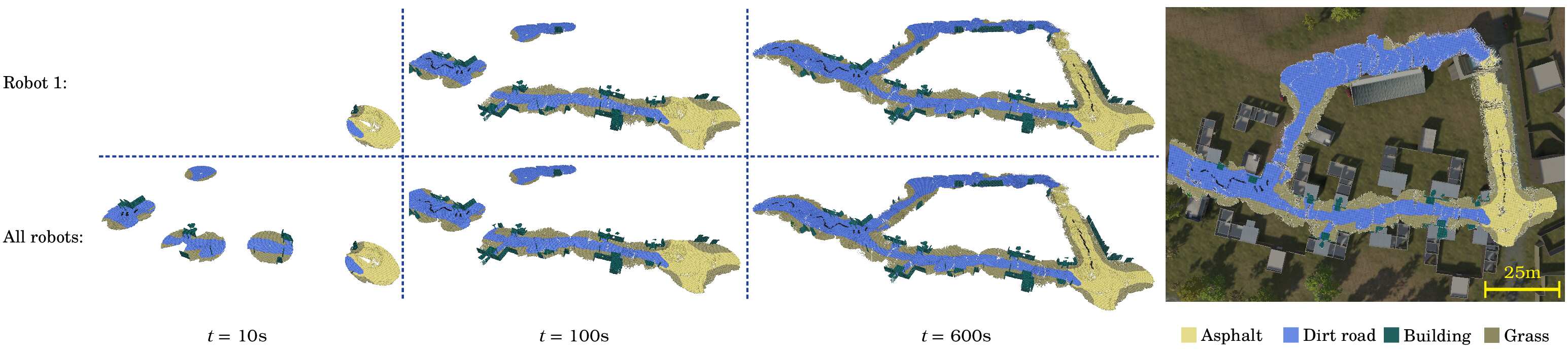}
    \caption{Time lapse of the multi-robot active mapping experiment. The local map of robot 1 (in Fig.~\ref{fig:sim_env}) is compared against the combined maps of all robots. The right sub-figure shows the estimated semantic octree map of robot 1 overlayed on the ground-truth simulation environment. The exploration is carried out using a fully-connected network of robots.}
    \label{fig:exp_timelapse}
\end{figure*}

Each robot uses its local semantic octree map to extract traversable regions, while other object and terrain classes are considered as obstacles. In particular, \textit{Asphalt} and \textit{Dirt road} classes are selected as traversable terrain classes. Fig.~\ref{fig:exp_timelapse} visualizes a time lapse of the distributed multi-robot active mapping experiment.
The consistency between the local map of robot 1 and the combined map of all robots can be seen as a qualitative example of the map consensus achieved by the distributed mapping method in Alg.~\ref{alg:dist_mapping}. Analogously, Fig.~\ref{fig:path_timelapse} illustrates consensus achieved by the distributed multi-robot planning in Alg.~\ref{alg:dist_planning}.
As described in Sec.~\ref{sec:dist_planning}, each robot computes its local plan based on its local map. Hence, differences in the local maps can cause variation across the local plans, as seen in Fig.~\ref{subfig:path_timelapse_a}. However, during each iteration of distributed planning, line~\ref{alg_line:consensus_plan} in Alg.\ref{alg:dist_planning} steers the local plans towards a consensus plan, as is evident in Fig.~\ref{subfig:path_timelapse_d}.

\begin{figure}[t]
    \begin{subfigure}[t]{0.5\linewidth}
    \centering
    \includegraphics[width=\linewidth]{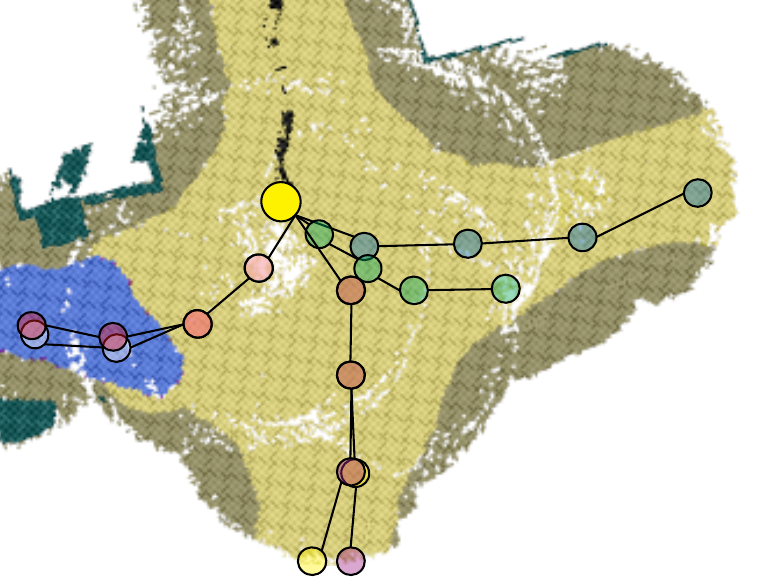}
    \captionsetup{justification=centering}
    \caption{Initial local plans}
    \label{subfig:path_timelapse_a}
    \end{subfigure}%
    \hfill%
    \begin{subfigure}[t]{0.5\linewidth}
    \centering
    \includegraphics[width=\linewidth]{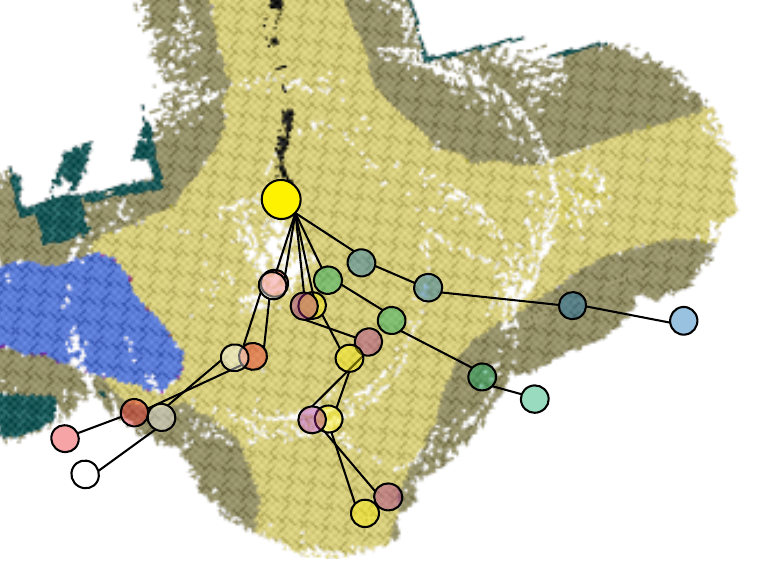}
    \captionsetup{justification=centering}
    \caption{$5^{\text{th}}$ iteration}
    \end{subfigure}\\
    \begin{subfigure}[t]{0.5\linewidth}
    \centering
    \includegraphics[width=\linewidth]{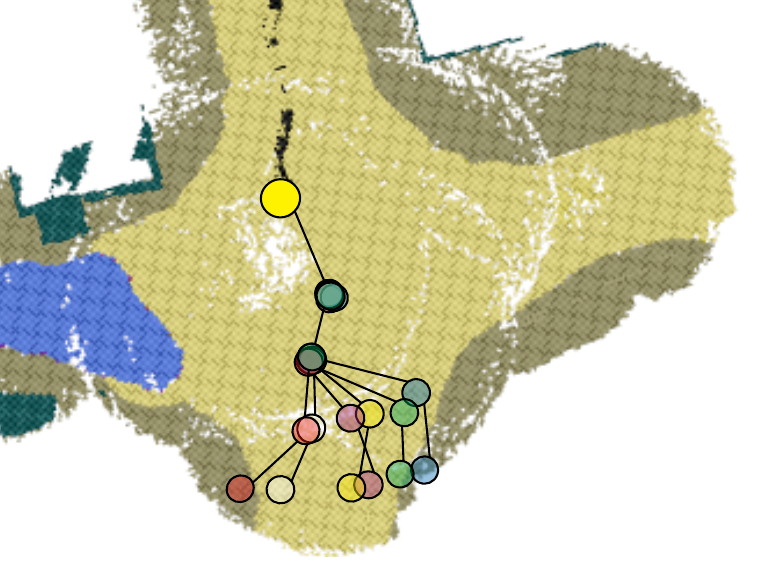}
    \captionsetup{justification=centering}
    \caption{$15^{\text{th}}$ iteration}
    \end{subfigure}%
    \hfill%
    \begin{subfigure}[t]{0.5\linewidth}
    \centering
    \includegraphics[width=\linewidth]{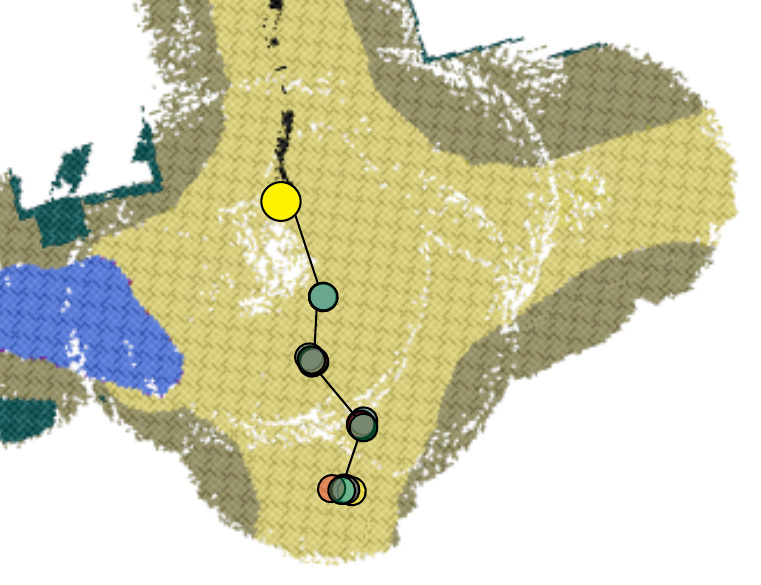}
    \captionsetup{justification=centering}
    \caption{$20^{\text{th}}$ iteration}
    \label{subfig:path_timelapse_d}
    \end{subfigure}
    \caption{Time lapse of viewpoint planning for robot 1 from Fig.~\ref{fig:sim_env}. Each color corresponds to a robot $j \in \calV$ computing a local plan $\mathfrak{X}^j_{1,1:T}$ for robot 1. The planned trajectories contain both position and orientation, however only the positions are visualized for clarity. The planning is carried out over a fully-connected network of robots.}
    \label{fig:path_timelapse}
\end{figure}


The performance of ROAM is evaluated quantitatively under various robot network configurations and planning parameters. We consider $3$ different network topologies: \begin{enumerate*} \item \textit{Full}, where all robots can communicate with each other in a fully-connected network, \item \textit{Hierarchical}, where robots can only communicate with their team leaders, and \item \textit{Ring}, where each robot has exactly $2$ neighbors\end{enumerate*}. Fig.~\ref{fig:network_confs} depicts the $3$ network configurations. For each network configuration, we perform exploration under $3$ variants of Alg.~\ref{alg:dist_planning}: \begin{enumerate*} \item \textit{Collaborative}, which is the original version of Alg.~\ref{alg:dist_planning}, \item \textit{Egocentric}, where each robot only maximizes its own path informativeness and safety by choosing $\epsilon_p = \gamma_{\mathfrak{q}} = 0$, and \item \textit{Frontier}, where robots perform frontier-based exploration by choosing $k_p = 0$\end{enumerate*}.

\begin{figure}[t]
    \centering
    \includegraphics[width=0.9\linewidth]{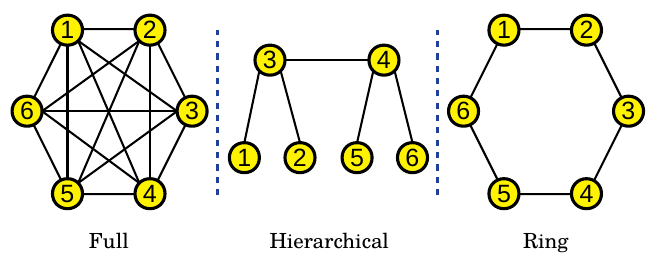}
    \caption{Network topologies used in the simulation experiments.}
    \label{fig:network_confs}
\end{figure}

Fig.~\ref{fig:cov_vs_time_dist} quantifies the coverage achieved by each network topology and planning parameter set. For \textit{Collaborative} and \textit{Egocentric} planning configurations, \textit{Full} network configuration leads to faster coverage while traveling less distance compared to \textit{Hierarchical} and \textit{Ring} topologies. This is expected since \textit{Full} is the only network topology that allows one-hop exchange of information between any pair of robots. On the other hand, the network configuration does not play a significant role for \textit{Frontier} exploration in terms of total covered area, since robots usually choose a frontier that is nearby their current position, and do not utilize information coming from their peers' local maps. The most interesting takeaway from Fig.~\ref{fig:cov_vs_time_dist} is the similar performance of \textit{Collaborative} planning with \textit{Hierarchical} and \textit{Ring} topologies, compared to \textit{Egocentric} planning with \textit{Full} topology. This observation suggests that effective coordination among agents via \textit{Collaborative} planning can alleviate the longer multi-hop communication routes caused by the sparse connectivity of \textit{Hierarchical} and \textit{Ring} topologies.

\begin{figure*}[t]
    \includegraphics[width=\linewidth]{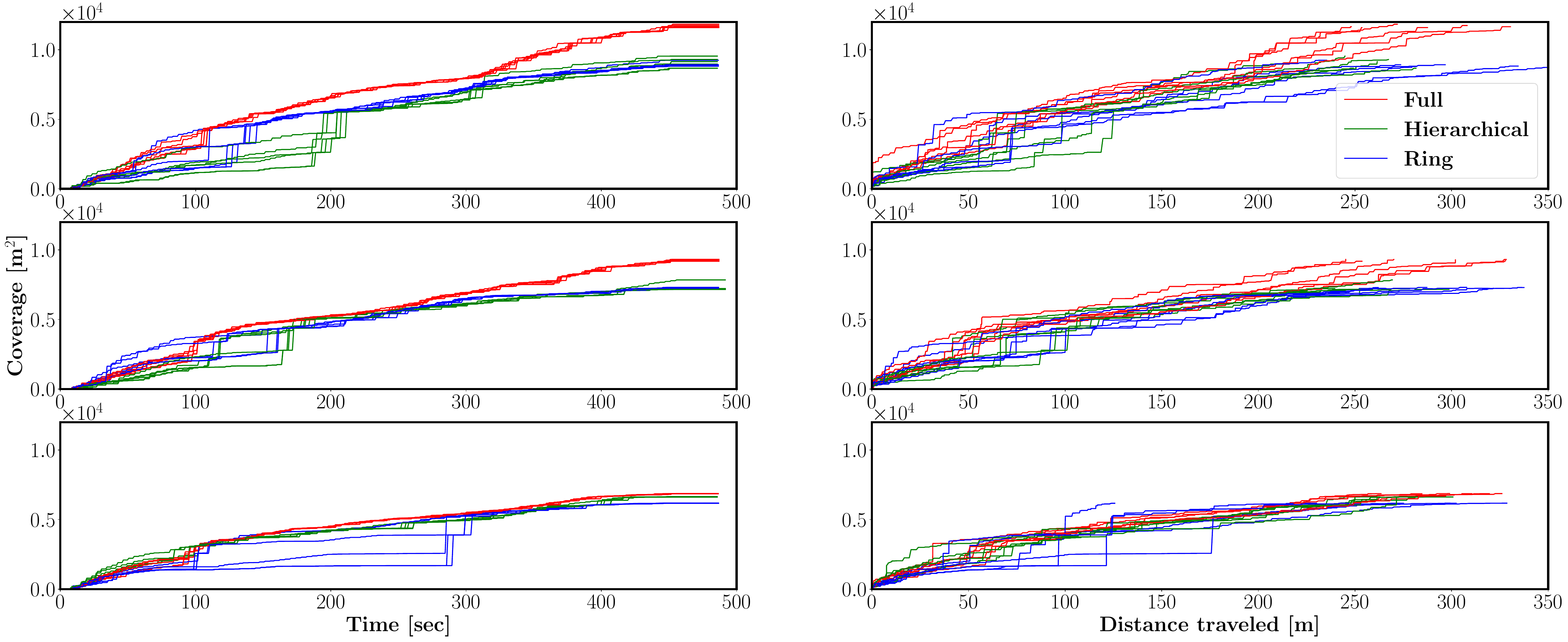}
    \caption{Coverage versus time (left column) and distance traveled (right column) for the simulation experiments. The top, middle, and bottom rows show the results for \textit{Collaborative}, \textit{Egocentric}, and \textit{Frontier} modes of planning, respectively. In each plot, lines with the same color correspond to robots participating in the same multi-robot exploration experiment, while the experiments are separated by the type of network topology in Fig.~\ref{fig:network_confs}.}
    \label{fig:cov_vs_time_dist}
\end{figure*}

\begin{figure*}[t]
    \includegraphics[width=\linewidth]{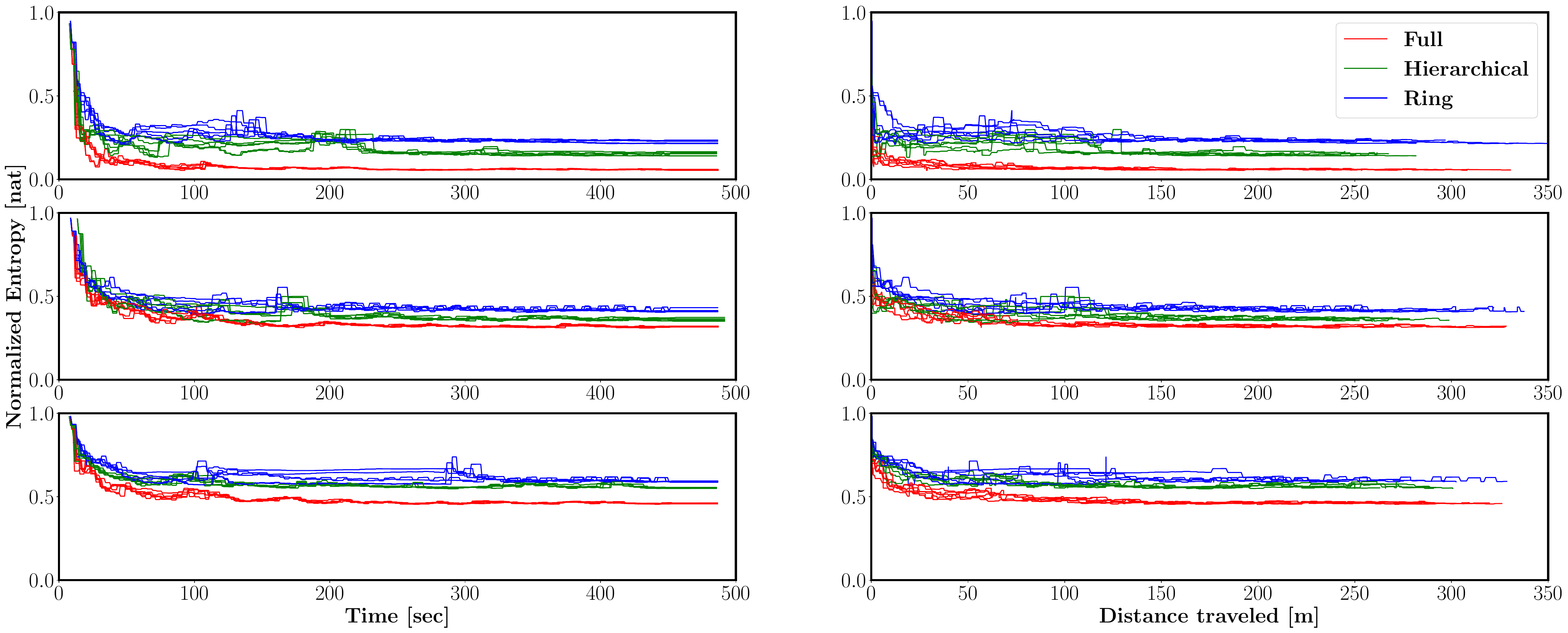}
    \caption{Normalized map entropy versus time (left column) and distance traveled (right column) for the simulation experiments. The top, middle, and bottom rows show the results for \textit{Collaborative}, \textit{Egocentric}, and \textit{Frontier} modes of planning, respectively. In each plot, lines with the same color correspond to robots participating in the same multi-robot exploration experiment, while the experiments are separated by the type of network topology in Fig.~\ref{fig:network_confs}.}
    \label{fig:ent_vs_time_dist}
\end{figure*}

Similar insights can be obtained from Fig.~\ref{fig:ent_vs_time_dist}, where normalized map entropy is measured against elapsed time and distance traveled, for each network topology and planning mode. Normalized map entropy for robot $i \in \calV$ is defined as the sum of Shannon entropies of all map voxels divided by the number of voxels:
\begin{equation}
    H^i_{\textit{norm}} = \frac{-1}{N^i} \sum_{n=1}^{N^i} \sum_{c \in \calC} p^i_n(m = c) \log{p^i_n(m = c)},\nonumber
\end{equation}
where $N^i$ denotes the number of voxels in the local map of robot $i$, and $\calC$ as well as $p^i_n(m)$ are defined in the previous sections. Note that, unlike total map entropy, normalized entropy can increase as the robots register unvisited voxels into their map. As Fig.~\ref{fig:ent_vs_time_dist} shows, for each planning mode, \textit{Full} network topology outperforms \textit{Hierarchical} and \textit{Ring} configurations. Also, \textit{Collaborative} planning with \textit{Hierarchical} and \textit{Ring} configurations have similar performance to \textit{Egocentric} planning with \textit{Full} network topology. The same reasoning used for Fig.~\ref{fig:cov_vs_time_dist} can be utilized to justify these observations. However, unlike coverage, network topology plays a more significant role in terms of normalized map entropy for \textit{Frontier} planning mode. This is due to the relatively more distributed mapping consensus steps for the \textit{Full} topology that lead to more certainty in the map estimation and, hence, smaller entropy compared to \textit{Hierarchical} and \textit{Ring}. Since coverage does not take map uncertainty into account, such behavior is only noticeable in the bottom row of Fig.~\ref{fig:ent_vs_time_dist} but not in Fig.~\ref{fig:cov_vs_time_dist}.

\begin{figure*}[t]
    \includegraphics[width=\linewidth]{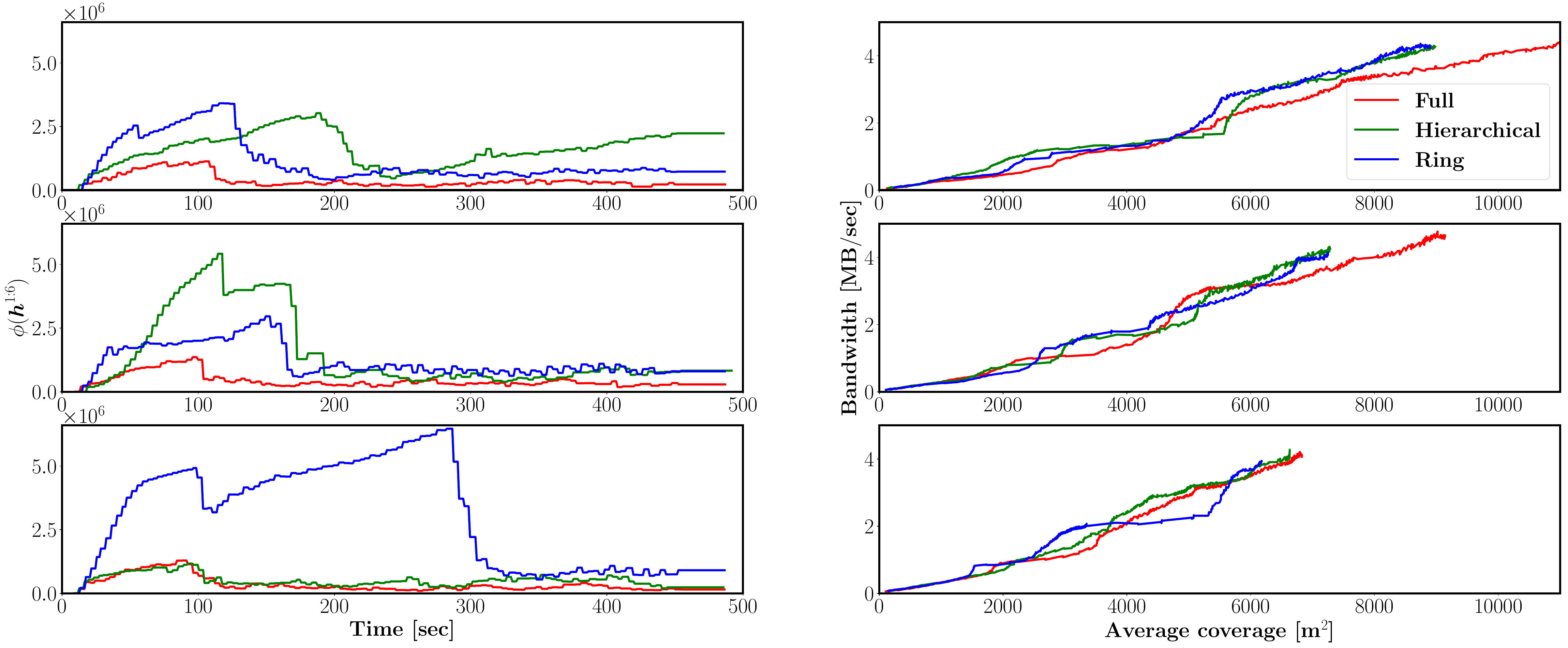}
    \caption{Multi-robot exploration performance metrics in the simulation experiments. The left column shows evolution of the map discrepancy $\phi(\bfh^{1:6})$ across the robot networks $\calG$ in Fig.~\ref{fig:network_confs} over time, and the right column displays bandwidth requirements for the distributed mapping with respect to average coverage. The top, middle, and bottom rows show the results for \textit{Collaborative}, \textit{Egocentric}, and \textit{Frontier} modes of planning, respectively. The average coverage is computed by averaging the area covered at each timestamp over all robots participating in an experiment.}
    \label{fig:map_phi_bw}
\end{figure*}

Additional quantitative metrics specific to multi-robot exploration are reported in Fig.~\ref{fig:map_phi_bw}. The left column of Fig.~\ref{fig:map_phi_bw} shows the aggregate distance $\phi(\bfh^{1:6})$, which represents the total discrepancy across all local maps. Despite robots discovering distinct unexplored regions during exploration, which can increase the difference among the local maps, it can be seen that the map discrepancy tends to decrease overall. The long-term value of the map discrepancy depends on the ratio of exploration rate and information exchange rate. Hence, the \textit{Full} topology yields the closest performance to map consensus due to its relatively faster rate of exchanging the local maps amongst the robots. The right column of Fig.~\ref{fig:map_phi_bw} displays the bandwidth required for communicating the local maps within the robot network. Since the simulation environment uses a centralized network scheme to register the broadcasted local maps, there is no significant variation in terms of bandwidth use across different network topologies and planning modes. Nevertheless, the results show scalability of the semantic octree mapping for multi-robot applications, where an average $97$ $\text{Bytes}/\text{sec}$ of bandwidth is needed for each $1$ $\text{m}^2$ of covered area for voxel dimensions of $0.2 \times 0.2 \times 0.2$ $\text{m}^3$.

The quantitative results of Fig.~\ref{fig:cov_vs_time_dist} and Fig.~\ref{fig:ent_vs_time_dist} demonstrate the effective performance of ROAM, while Fig.~\ref{fig:map_phi_bw} showcases the consensus and communication properties of our method. In the next subsection, we evaluate the distributed multi-robot active mapping in real-world experiments.

\subsection{Real-World Experiments}
\label{subsec:exp_real}

\begin{figure}[t]
    \begin{subfigure}[t]{0.32\linewidth}
    \centering
    \includegraphics[width=0.9\linewidth]{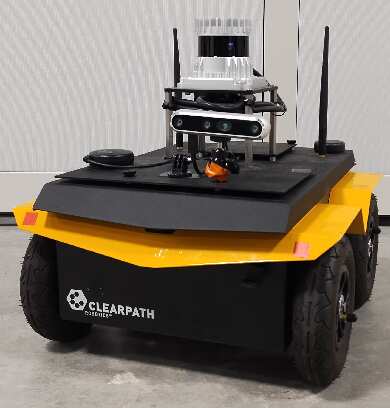}
    \captionsetup{justification=centering}
    \caption{Robot 1}
    \end{subfigure}%
    \hfill%
    \begin{subfigure}[t]{0.32\linewidth}
    \centering
    \includegraphics[width=0.9\linewidth]{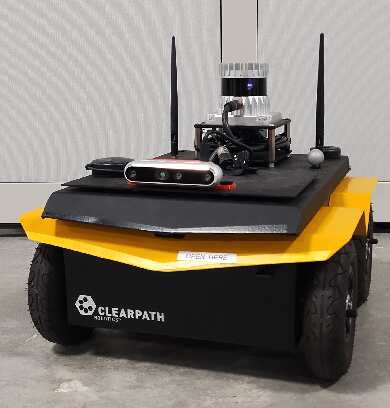}
    \captionsetup{justification=centering}
    \caption{Robot 2}
    \end{subfigure}%
    \hfill%
    \begin{subfigure}[t]{0.32\linewidth}
    \centering
    \includegraphics[width=0.9\linewidth]{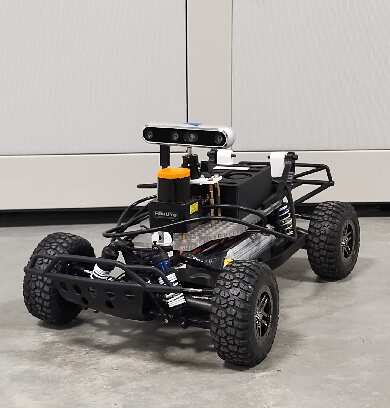}
    \captionsetup{justification=centering}
    \caption{Robot 3}
    \end{subfigure}
    \caption{Ground robot team used in our real-world multi-robot active mapping experiments.}
    \label{fig:real_exp_team}
\end{figure}

\begin{figure}[t]
    \begin{subfigure}[t]{0.40\linewidth}
    \centering
    \includegraphics[height=3.1cm]{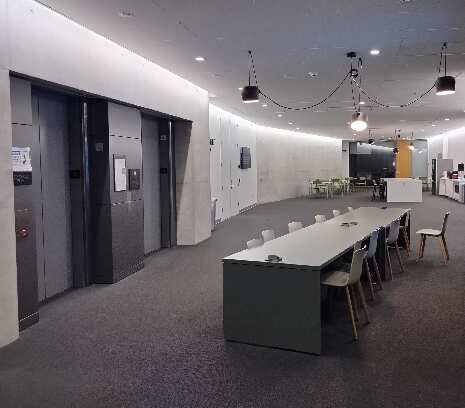}
    \captionsetup{justification=centering}
    \caption{Lobby}
    \end{subfigure}%
    \hspace{0.012\linewidth}%
    \begin{subfigure}[t]{0.29\linewidth}
    \centering
    \includegraphics[height=3.1cm]{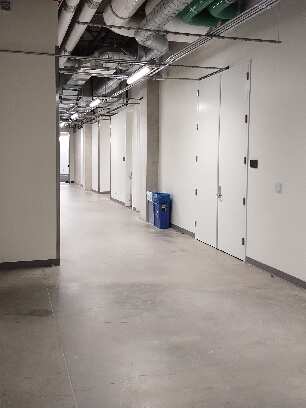}
    \captionsetup{justification=centering}
    \caption{Corridor}
    \end{subfigure}%
    \begin{subfigure}[t]{0.29\linewidth}
    \centering
    \includegraphics[height=3.1cm]{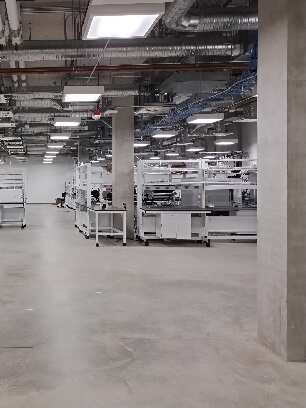}
    \captionsetup{justification=centering}
    \caption{Laboratory}
    \end{subfigure}%
    \caption{Indoor environment used in our real-world multi-robot active mapping experiments.}
    \label{fig:real_exp_env}
\end{figure}

We deployed ROAM on a team of ground robots to achieve autonomous exploration and mapping of an unknown indoor area. The robot team was comprised of two ClearPath Jackal robots (robot 1 and robot 2), and an F1/10 race car robot (robot 3). The Jackals were each equipped with an Ouster OS1-32 3D LiDAR, an Intel RealSense D455 RGB-D camera, and an NVIDIA GTX 1650 GPU. The F1/10 race car was equipped with a Hokuyo UST-10LX 2D LiDAR, an Intel RealSense D455 RGB-D camera, and an NVIDIA Xavier NX computer. Fig.~\ref{fig:real_exp_team} shows the three robots participating in the experiment. We utilize a ResNet18 \cite{resnet} neural network architecture pre-trained on the SUN RGB-D dataset \cite{sun_rgbd} for semantic segmentation. To achieve real-time segmentation, we employed the deep learning inference ROS nodes provided by NVIDIA~\cite{ros_deep_learning}, which are optimized for NVIDIA GPUs via TensorRT acceleration. The semantic segmentation module processes the RGB image stream from the D455 camera, and fuses the segmentation results with the depth image stream, to publish semantic point cloud ROS topics. For localization, the Jackal robots used the direct LiDAR odometry of \cite{dlo}, while the F1/10 race car used iterative closest point (ICP) scan matching \cite{icp}. In order to align the world frames $\calW_i$, $i \in \{1,2,3\}$, we used AprilTag detection \cite{apriltag}, where the estimated $\SE$ transformation between the RGB sensor frame of a robot and the detected tag is defined as the \textit{world-to-sensor} transformation. Communication was handled via a Wi-Fi network and multi-master ROS architecture, such that each robot $i$ runs its own ROS master, and shares its planning ledger $\calL_i$, local plan $\mathfrak{X}_i$, local semantic octree map $\bfm^i$, and estimated pose with respect to $\calW_i$. With all the mapping and planning computations carried out using the on-board robot computers, we obtain an average frame rate of $2.44$Hz for distributed semantic octree mapping, and an average distributed planning iteration time of $0.014$s.


\begin{figure*}[t]
    \begin{subfigure}[t]{0.24\linewidth}
    \centering
    \includegraphics[width=\linewidth]{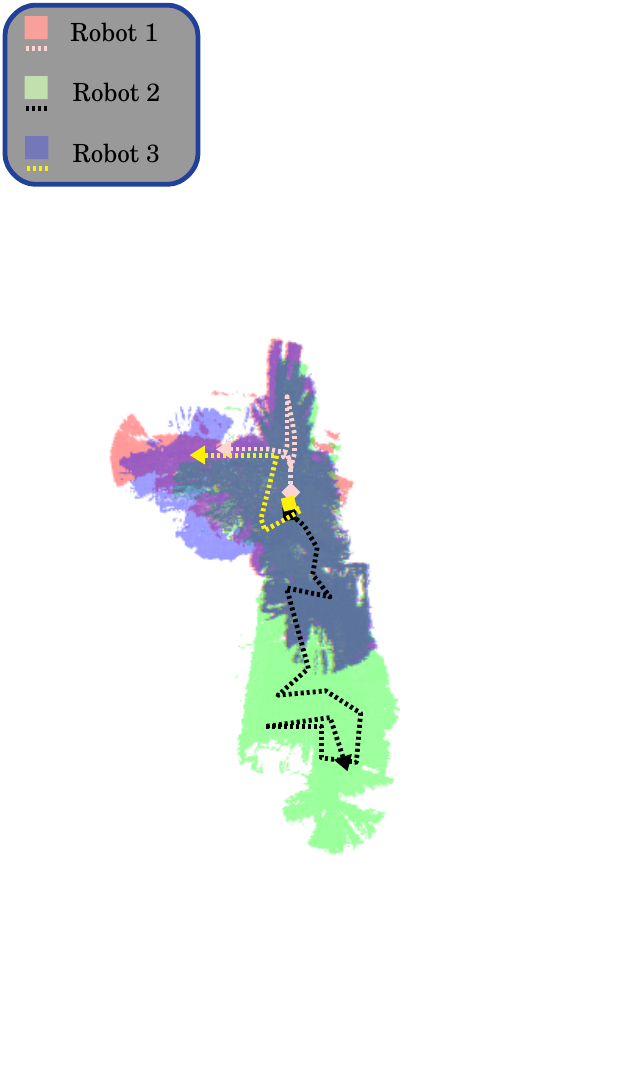}
    \caption{At $t=400$s, the robots effectively choose different sections of the environment to explore.}
    \label{subfig:real_exp_timelapse_a}
    \end{subfigure}%
    \hfill%
    \begin{subfigure}[t]{0.24\linewidth}
    \centering
    \includegraphics[width=\linewidth]{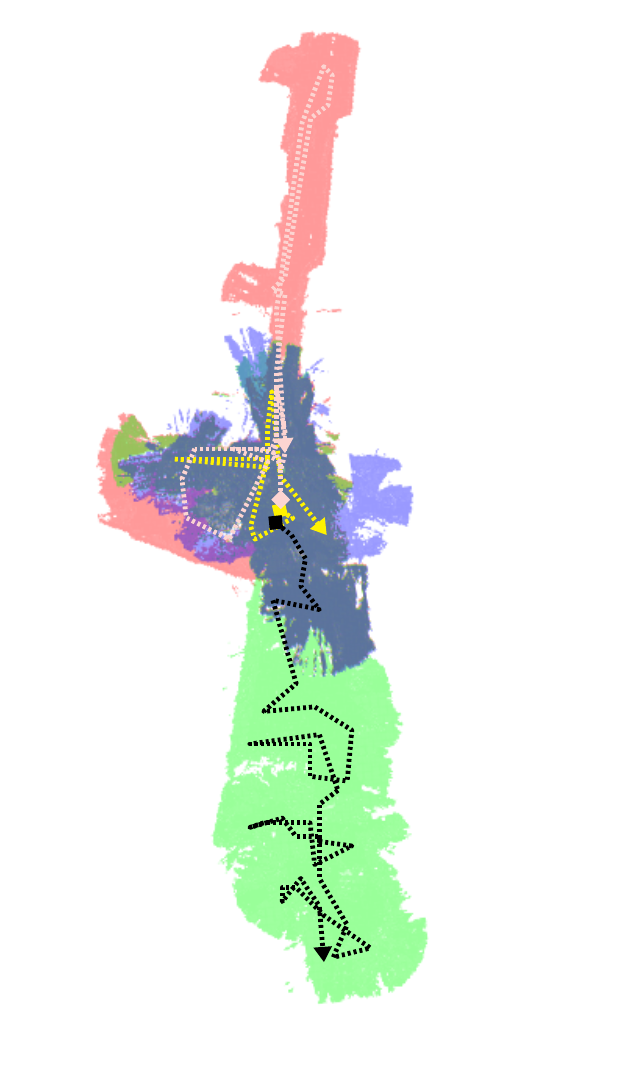}
    \caption{At $t=900$s, the robots continue to explore the environment.}
    \label{subfig:real_exp_timelapse_b}
    \end{subfigure}%
    \hfill%
    \begin{subfigure}[t]{0.24\linewidth}
    \centering
    \includegraphics[width=\linewidth]{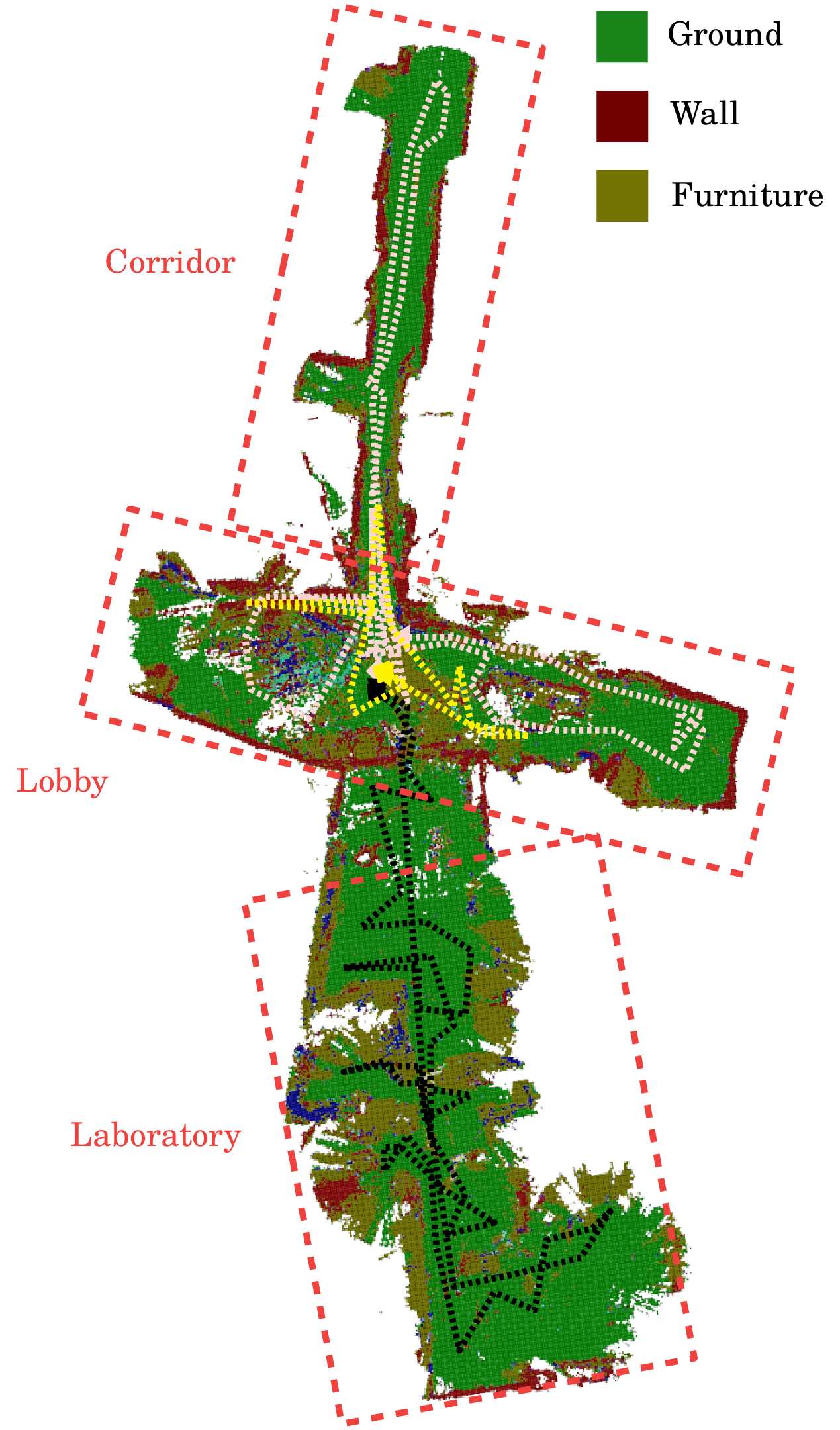}
    \caption{At $t=1450$s, the robots are back to their initial locations and perform a final map exchange to ensure consensus.}
    \label{subfig:real_exp_timelapse_c}
    \end{subfigure}%
    \hfill%
    \begin{subfigure}[t]{0.24\linewidth}
    \centering
    \includegraphics[width=\linewidth]{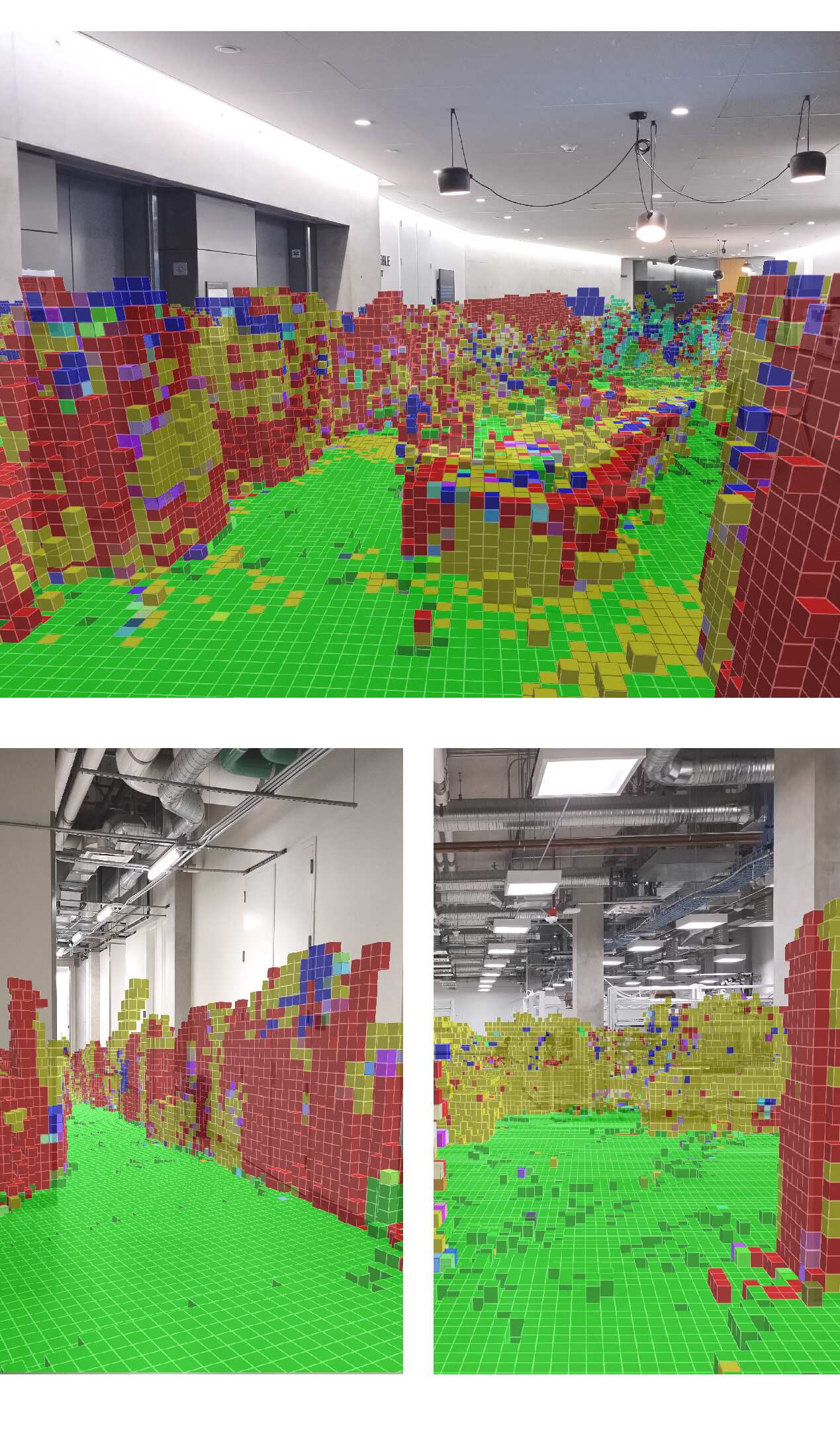}
    \caption{First-person view of the semantic octree map of robot 1.}
    \label{subfig:real_exp_timelapse_d}
    \end{subfigure}%
    \caption{Qualitative results from a real-world multi-robot active mapping experiment. (a)-(b): Snapshots of exploration at $t=400$s and $t=900$s. The explored region and the path of each robot are identified by a distinct color (legend at the top left). (c): The final semantic octree map of robot 1 at $t=1450$s. The three sections of the environment (i.e. lobby, laboratory, and corridor) have been marked on the map. (d) First-person views of the map overlaying the ground-truth environment.}
    \label{fig:real_exp_timelapse}
\end{figure*}

\begin{figure*}[t]
    \includegraphics[width=\linewidth]{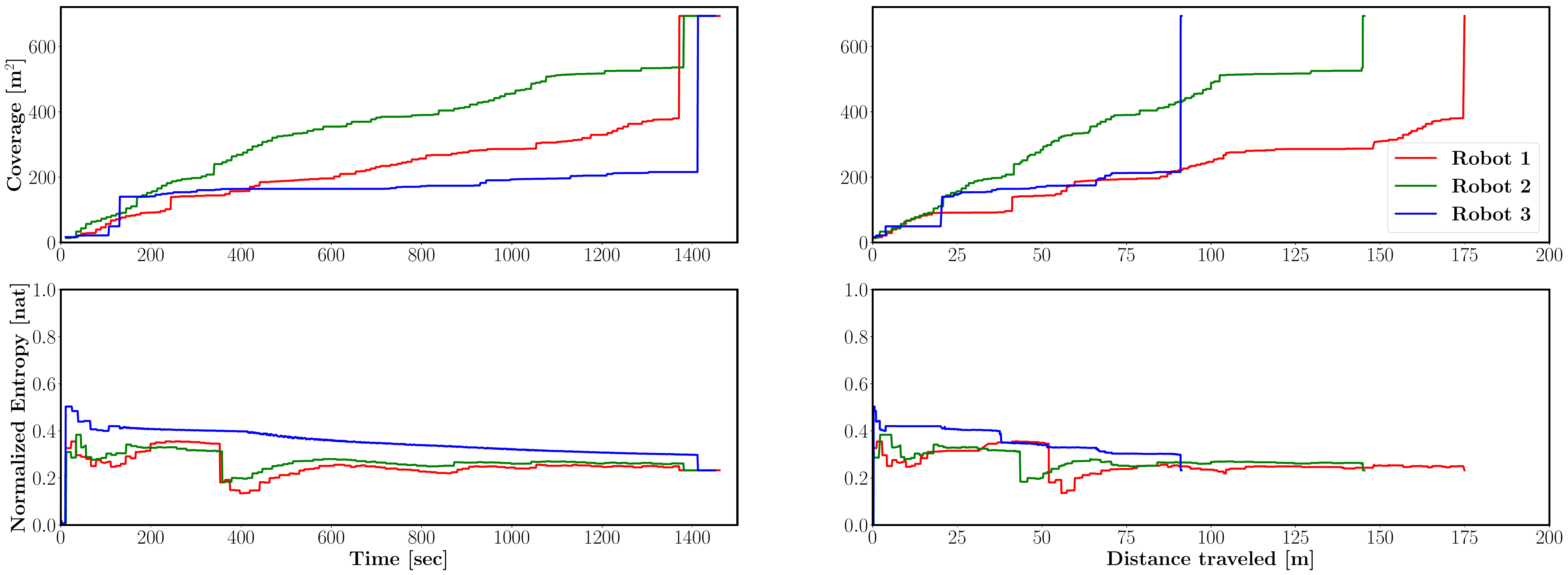}
    \caption{Coverage (top row) and normalized map entropy (bottom row) versus time (left column) and distance traveled (right column) for the real-world multi-robot active mapping experiment. Each color corresponds to one robot in the team.}
    \label{fig:real_quant_res_1}
\end{figure*}

\begin{figure}[t]
    \centering
    \includegraphics[width=\linewidth]{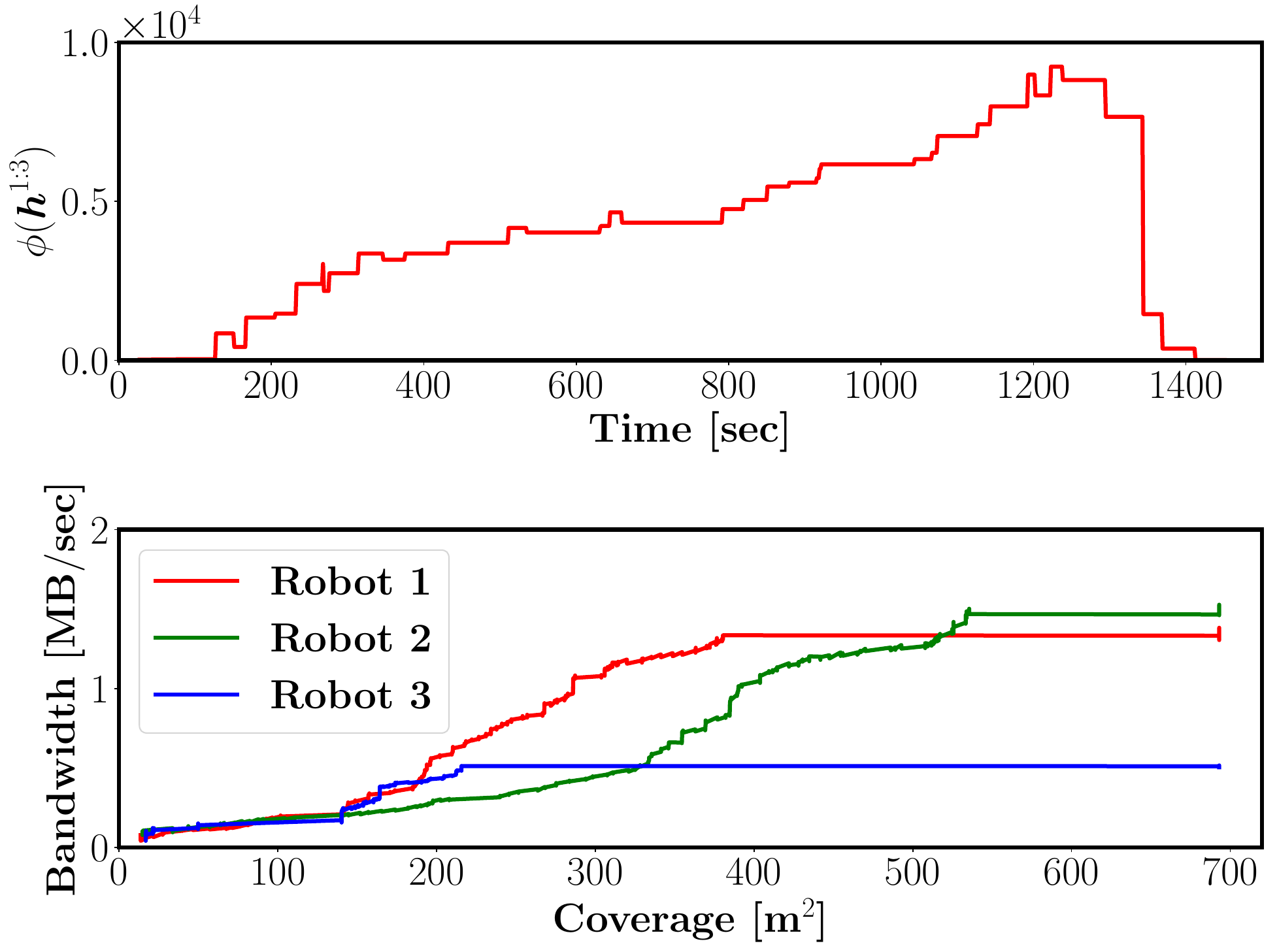}
    \caption{Multi-robot exploration performance metrics for the real-world experiment. Top: Evolution of the map discrepancy $\phi(\bfh^{1:3})$ across the robot network $\calG$ over time. Bottom: Bandwidth requirement of each robot for the distributed mapping with respect to coverage.}
    \label{fig:real_quant_res_2}
\end{figure}


Our experiments took place in a basement area, consisting of a lobby room connected to a corridor and a large laboratory, shown in Fig.~\ref{fig:real_exp_env}. Similar to the simulation experiments, the local semantic octree maps were utilized to analyze the terrain traversability, where in this case the \textit{Ground} object class was selected as the traversable region. Exploration was performed using the \textit{Collaborative} planning parameter set. The communication network topology was \textit{Full}, however, there were intermittent disconnections due to signal attenuation and occlusion by the walls. Fig.~\ref{fig:real_exp_timelapse} shows a time lapse of the real-world multi-robot active mapping experiment. The robots started at nearby positions, all facing the same AprilTag to align their world frames. During the first $400$s of exploration, the robots mostly explored the same areas in their immediate vicinity. Each robot gradually separated from the others after $t=400$s, and focused on a specific part of the environment, as it can be seen in Fig.~\ref{subfig:real_exp_timelapse_a}. In particular, robot 1 explored the corridor area, while robot 2 and robot 3 visited the laboratory and the lobby, respectively. Around $t=500$s, as the robots got farther away from each other, they temporarily lost communication. During the disconnection period, the robots could not plan collaboratively and relied only on their local maps for planning. Nonetheless, this did not deteriorate the exploration performance since the robots were so far away from each other that they would not revisit a region which had already been explored by a peer. Fig.~\ref{subfig:real_exp_timelapse_b} is a good example of such situations. At $t=1200$s, the team was ordered to return to base, where the robots individually planed paths from their current positions back to their initial positions. Communication was automatically re-established as soon as the robots arrived near the starting locations, and distributed mapping was resumed leading to agreement in the semantic octree maps, as shown in Fig.~\ref{subfig:real_exp_timelapse_c}. First-person views of the semantic octree map of robot 1 overlaid on the real-world environment is illustrated in Fig.~\ref{subfig:real_exp_timelapse_d}.

Fig.~\ref{fig:real_quant_res_1} and Fig.~\ref{fig:real_quant_res_2} show the quantitative results of the same real-world experiment. Note the sudden jump in the map coverage plot of Fig.~\ref{fig:real_quant_res_1}, corresponding to reaching map consensus amongst the robots when network connection was re-established. This can be clearly observed in Fig.~\ref{fig:real_quant_res_2} where the map discrepancy $\phi(\bfh^{1:3})$ shrinks to zero as the robots reach consensus. Unlike the simulation experiments, the bandwidth use of the robots can be measured separately by probing the communication packages sent to/from each robot, reported in Fig.~\ref{fig:real_quant_res_2}. The flat lines represent the jump in map coverage due to the eventual connection after a period of intermittent disconnections during the exploration. Overall, the real-world experiments show the practicality of ROAM for autonomous mapping of large unstructured areas using a team of robots and consumer-grade communication infrastructure.

\section{Conclusions}
\label{sec:conclusions}

We developed a distributed Riemannian optimization method that achieves consensus among the variables estimated by different nodes in a communication graph. We used this method to formulate distributed techniques for multi-robot semantic mapping and information-theoretic viewpoint planning. The resulting Riemannian Optimization for Active Mapping (ROAM) enables fully distributed collaborative active mapping of an unknown environment without the need for central estimation and control. Our experiments demonstrated scalability and efficient performance even with sparse communication, and corroborated the theoretical guarantees of ROAM to achieve consensus and convergence to an optimal solution. ROAM offers the possibility to generalize many single-robot non-Euclidean problems to distributed multi-robot applications, for example formulating multi-robot control for $\SE$ robot dynamics~\cite{duong2024port}. An important future research direction involves analyzing the distance to consensus as well as sub-optimality bounds for ROAM in the case of non-convex distance measures and non-concave objective functions. Additionally, further research effort is needed to study faster variants of ROAM using Nesterov accelerated \cite{assran2020convergence} and second-order \cite{pierrot2021first} gradient methods.



\appendices
\section{Proof of Theorem~\ref{thm:dist_opt_cons_opt}}
\label{app:dist_opt_cons_opt}

We organize the proof into three main steps. First, we show that the aggregate distance function $\phi(\cdot)$ is geodesically $L$-smooth. Second, we show that Alg.~\ref{alg:dist_opt} converges to a consensus configuration. Last, optimality properties of Alg.~\ref{alg:dist_opt} are derived.

\begin{step}\label{step:phi_smoothness}
    We begin the proof by showing that the aggregate distance function $\phi(\cdot)$ is geodesically $L$-smooth. Namely, for any pair of joint states $\bfx, \bfy \in \calM^{|\calV|}$ we prove:
    \begin{equation}
        \|g_{\phi}(\bfx) - \calT_{\bfy}^{\bfx}g_{\phi}(\bfy)\|_{\bfx} \leq L d(\bfx, \bfy)\nonumber,
    \end{equation}
    with $g_{\phi}(\bfx)$ as a shorthand notation for $\grad{\phi(\bfx)}$.
    
    For two joint states $\bfx,\bfy \in \calM^{|\calV|}$ we have:
    \begin{equation}
        \grad_{x^i}{\phi(\bfx)} - \calT_{y^i}^{x^i}\grad_{y^i}{\phi(\bfy)} = -2 \sum_{j \in \calV} A_{ij} (v_x^{ij} - \calT_{y^i}^{x^i}v_y^{ij}),\nonumber
    \end{equation}
    where the vectors $v_x^{ij}$ and $v_y^{ij}$ follow the notation in Assumption~\ref{assumption:net_curve_bound}. Using row-stochasticity of $A$ and the fact that $\calM^{|\calV|}$ is the product manifold of $\calM$ leads to:
    \begin{equation} \label{eq:g_phi_ineq}
        \|g_{\phi}(\bfx) - \calT_{\bfy}^{\bfx}g_{\phi}(\bfy)\|^2_{\bfx} \leq 4 \sum_{(i,j) \in \calV^2} A_{ij} \|v_x^{ij} - \calT_{y^i}^{x^i}v_y^{ij}\|^2_{x^i},
    \end{equation}
    Adding and subtracting tangent terms $v_{xy}^i$ and $v_{xy}^j$ as well as utilizing the Cauchy-Schwarz inequality in $T_{x^i}\calM$ results in the following decomposition:
    \begin{equation}\label{eq:v_x/v_y}
    \begin{aligned}
        \|v_x^{ij} - \calT_{y^i}^{x^i}v_y^{ij}\|^2_{x^i} \leq \|v_{xy}^i - \calT_{x^j}^{x^i}v_{xy}^{j}\|_{x^i}^2 + &\|v_{xy}^{ij}\|^2_{x^i} +\\
        2 \|v_{xy}^i - \calT_{x^j}^{x^i}v_{xy}^{j}\|_{x^i} &\|v_{xy}^{ij}\|_{x^i},
    \end{aligned}
    \end{equation}
    such that the vector $v_{xy}^{ij} \in T_{x^i}\calM$ is defined in \eqref{eq:net_vec}. The vector $v_{xy}^{ij}$ contains sum of $4$ vectors corresponding to a geodesic loop. Assumption~\ref{assumption:net_curve_bound}, in addition to applying the triangle inequality, allows finding an upper bound for \eqref{eq:v_x/v_y} that does not involve $v_{x}^{ij}$ and $v_{y}^{ij}$ terms:
    \begin{equation}
    \begin{aligned}
        \|v_x^{ij} - \calT_{y^i}^{x^i}v_y^{ij}\|^2_{x^i} \leq (\rho + 1)^2 (\|v_{xy}^i\|_{x^i} + \|v_{xy}^{j}\|_{x^j})^2.\nonumber
    \end{aligned}
    \end{equation}
    Plugging into \eqref{eq:g_phi_ineq} and summing over $(i, j) \in \calV^2$ yields:
    \begin{equation}
    \begin{aligned}
        \|g_{\phi}(\bfx) - &\calT_{\bfy}^{\bfx}g_{\phi}(\bfy)\|^2_{\bfx} \leq \\
        &8 (\rho + 1)^2 \Big(d^2(\bfx, \bfy) + \sum_{(i,j) \in \calV^2} A_{ij} \|v_{xy}^{i}\|_{x^i} \|v_{xy}^{j}\|_{x^j}\Big),\nonumber
    \end{aligned}
    \end{equation}
    The summation term in the above inequality is upper-bounded by the induced norm of $A$, i.e. its largest eigenvalue. Using the fact that the largest eigenvalue of row-stochastic matrices is $1$, we derive the following:
    \begin{equation}\label{eq:phi_smoothness}
        \frac{\|g_{\phi}(\bfx) - \calT_{\bfy}^{\bfx}g_{\phi}(\bfy)\|_{\bfx}}{d(\bfx, \bfy)} \leq 4 (1 + \rho)  := L.
    \end{equation}
    Therefore, the aggregate distance function $\phi(\bfx)$ is geodesically $L$-smooth, with $L = 4 (1 + \rho)$.
\end{step}

\begin{step}\label{step:consensus}
    This step proves convergence of Alg.~\ref{alg:dist_opt} to a consensus configuration. We use the $L$-smoothness of $\phi(\cdot)$ to find a bound for the values of $\phi(\bfx)$. Consider two points $\bfx$ and $\bfy$ in $\calM^{|\calV|}$, and the geodesic $\bfs(\cdot): [0, 1] \rightarrow \calM^{|\calV|}$ connecting $\bfx$ to $\bfy$, i.e. $\bfs(0) = \bfx$ and $\bfs(1) = \bfy$. Using the fundamental theorem of calculus for line integrals we have:
    \begin{equation}
    \begin{aligned}
        \phi(\bfy) - \phi(\bfx) - &\langle g_{\phi}(\bfx), \Exp^{-1}_{\bfx}{(\bfy)} \rangle_{\bfx} =\nonumber\\
        \int_0^1 &\langle \calT_{\bfs(t)}^{\bfx}g_{\phi}(\bfs(t)) - g_{\phi}(\bfx), \Exp^{-1}_{\bfx}{(\bfy)} \rangle_{\bfx}\,dt.\nonumber
    \end{aligned}
    \end{equation}
    Applying the Cauchy-Schwarz inequality and using the $L$-smoothness of $\phi(\bfx)$ results in:
    \begin{equation}\label{eq:first_order_ineq}
    \begin{aligned}
        \phi(\bfy) - \phi(\bfx) - \langle &g_{\phi}(\bfx), \Exp^{-1}_{\bfx}{(\bfy)} \rangle_{\bfx} \leq\\
        &L d^2(\bfx,\bfy)\int_0^1 t\,dt = \frac{L}{2} d^2(\bfx,\bfy).
    \end{aligned}
    \end{equation}
    The above bound helps to analyze the dynamics of the joint state $\bfx$ over the iterations of Alg.~\ref{alg:dist_opt}. Consider line~\ref{alg_line:consensus} of Alg.~\ref{alg:dist_opt}. Using \eqref{eq:first_order_ineq} leads to:
    \begin{equation}\label{eq:consensus_ineq}
    \begin{aligned}
        \phi(\Tilde{\bfx}^{(k)}) \leq \phi(\bfx^{(k)}) &+ \langle g_{\phi}(\bfx^{(k)}), -\epsilon g_{\phi}(\bfx^{(k)}) \rangle_{\bfx^{(k)}}\\
        &+ \frac{L \epsilon^2}{2} \|g_{\phi}(\bfx^{(k)})\|^2_{\bfx^{(k)}}.
    \end{aligned}
    \end{equation}
    Similarly, for line~\ref{alg_line:optimize_local} of Alg.~\ref{alg:dist_opt} we have:
    \begin{equation}
    \begin{aligned}
        \phi(\bfx^{(k+1)}) \leq \phi(\Tilde{\bfx}^{(k)}) &+ \langle g_{\phi}(\Tilde{\bfx}^{(k)}), \alpha^{(k)} g_{F}(\Tilde{\bfx}^{(k)}) \rangle_{\Tilde{\bfx}^{(k)}}\nonumber\\
        &+ \frac{L {\alpha^{(k)}}^2}{2} \|g_{F}(\Tilde{\bfx}^{(k)})\|^2_{\Tilde{\bfx}^{(k)}},\nonumber
    \end{aligned}
    \end{equation}
    where, analogous to $g_{\phi}(\bfx^{(k)})$, $g_{F}(\Tilde{\bfx}^{(k)})$ is shorthand notation for $\grad{F(\bfx)}|_{\bfx = \Tilde{\bfx}^{(k)}}$. Using the positive definiteness of the Riemannian metric $\langle v,u \rangle_{\bfx}$, we have $2\langle v,u \rangle_{\bfx} \leq \eta \|v\|^2_{\bfx} + \|u\|^2_{\bfx}/{\eta}$ for any $v, u \in T_{\bfx}\calM^{|\calV|}$ and $\eta > 0$. Hence:
    \begin{equation}
    \begin{aligned}
        \phi(\bfx^{(k+1)}) \leq \phi(\Tilde{\bfx}^{(k)}) &+ \frac{\eta}{2} \|g_{\phi}(\Tilde{\bfx}^{(k)})\|^2_{\Tilde{\bfx}^{(k)}}\\
        &+ \frac{{\alpha^{(k)}}^2}{2} (L + \frac{1}{\eta}) \|g_{F}(\Tilde{\bfx}^{(k)})\|_{\Tilde{\bfx}^{(k)}}^2.\nonumber
    \label{eq:inequality_k+1}
    \end{aligned}
    \end{equation}
    By adding and subtracting $\calT_{\bfx^{(k)}}^{\Tilde{\bfx}^{(k)}}g_{\phi}(\bfx^{(k)})$ from $g_{\phi}(\Tilde{\bfx}^{(k)})$, and using the fact that $\|v + u\|^2_{\bfx}/2 \leq \|v\|^2_{\bfx} + \|u\|^2_{\bfx}$, we have:
    \begin{equation}
    \begin{aligned}
        \phi(&\bfx^{(k+1)}) \leq \phi(\Tilde{\bfx}^{(k)}) + \frac{{\alpha^{(k)}}^2}{2} (L + \frac{1}{\eta}) \|g_{F}(\Tilde{\bfx}^{(k)})\|_{\Tilde{\bfx}^{(k)}}^2\\
        &+ \eta \|g_{\phi}(\Tilde{\bfx}^{(k)}) - \calT_{\bfx^{(k)}}^{\Tilde{\bfx}^{(k)}}g_{\phi}(\bfx^{(k)})\|^2_{\Tilde{\bfx}^{(k)}} + \eta \|g_{\phi}(\bfx^{(k)})\|^2_{\bfx^{(k)}}.\nonumber
    \end{aligned}
    \end{equation}
    Using the bound for gradients of the local objective functions $f^i(x^i)$ alongside utilizing the $L$-smoothness of $\phi(\bfx)$ and plugging \eqref{eq:consensus_ineq} into the above inequality, yields:
    \begin{equation}
    \begin{aligned}
        \phi(\bfx^{(k+1)}) &\leq  \phi(\bfx^{(k)}) + \frac{{\alpha^{(k)}}^2}{2 |\calV|} (L + \frac{1}{\eta}) C^2\\
        &+ [\epsilon (\frac{L \epsilon}{2} - 1) + \eta (1 + L^2 \epsilon^2)] \|g_{\phi}(\bfx^{(k)})\|^2_{\bfx^{(k)}}.\nonumber
    \end{aligned}
    \end{equation}
    Choosing $\eta = \frac{\epsilon (2 - L \epsilon)}{4 (1 + L^2 \epsilon^2)}$ we have:
    \begin{equation}\label{eq:per_step_inequality}
    \begin{aligned}
        \frac{\epsilon}{2}(1 - \frac{L \epsilon}{2}) \|g_{\phi}(\bfx^{(k)})&\|^2_{\bfx^{(k)}} \leq \phi(\bfx^{(k)}) - \phi(\bfx^{(k+1)})\\
        &+ {\alpha^{(k)}}^2 (\frac{1 + \epsilon^2 L^2}{\epsilon (1 - \frac{L \epsilon}{2})} + \frac{L}{2}) \frac{C^2}{|\calV|}.
    \end{aligned}
    \end{equation}
    Summing over $k_{\text{max}}$ first iterations of Alg.~\ref{alg:dist_opt} yields:
    \begin{equation}
    \begin{aligned}
        \frac{\epsilon}{2}(1 - \frac{L \epsilon}{2}) \sum_{k=0}^{k_{\text{max}}} \|g_{\phi}(\bfx^{(k)})&\|^2_{\bfx^{(k)}} \leq \phi(\bfx^{(0)}) - \phi(\bfx^{(k_{\text{max}}+1)})\\
        &+ \frac{C^2}{|\calV|} (\frac{1 + \epsilon^2 L^2}{\epsilon (1 - \frac{L \epsilon}{2})} + \frac{L}{2}) \sum_{k=0}^{k_{\text{max}}} {\alpha^{(k)}}^2.\nonumber
    \end{aligned}
    \end{equation}
    Because $\phi(\bfx^{(k_{\text{max}}+1)})$ is always non-negative, we have:
    \begin{equation}
    \begin{aligned}
        \frac{\epsilon}{2}(1 - \frac{L \epsilon}{2}) \sum_{k=0}^{k_{\text{max}}} \|g_{\phi}(\bfx^{(k)})&\|^2_{\bfx^{(k)}} \leq \phi(\bfx^{(0)})\\
        &+ \frac{C^2}{|\calV|} (\frac{1 + \epsilon^2 L^2}{\epsilon (1 - \frac{L \epsilon}{2})} + \frac{L}{2}) \sum_{k=0}^{k_{\text{max}}} {\alpha^{(k)}}^2.\nonumber
    \end{aligned}
    \end{equation}
    As a consequence of the compactness of $\calM$, $\phi(\bfx^{(0)})$ will be bounded for any choice of $\bfx^{(0)}$. Moreover, setting $k_{\text{max}} \rightarrow \infty$, due to the convergence property for the sum of the squared step sizes ${\alpha^{(k)}}^2$, we have:
    \begin{equation}\label{eq:phi_convergence}
    \begin{aligned}
        \frac{\epsilon}{2}&\biggl(1 - \frac{L \epsilon}{2}\biggr) \sum_{k=0}^{\infty} \|g_{\phi}(\bfx^{(k)})\|^2_{\bfx^{(k)}} \leq\\
        &\phi(\bfx^{(0)}) + \frac{C^2}{|\calV|} \biggl(\frac{1 + \epsilon^2 L^2}{\epsilon (1 - \frac{L \epsilon}{2})} + \frac{L}{2}\biggr) \sum_{k=0}^{\infty} {\alpha^{(k)}}^2 < \infty.
    \end{aligned}
    \end{equation}
    Therefore, for $\epsilon \in (0, 2/L)$, $g_{\phi}(\bfx^{(k)})$ shrinks to zero as $k$ goes to infinity. Since $\alpha^{(k)}$ is a decaying sequence and $\|g_{F}(\Tilde{\bfx}^{(k)})\|_{\Tilde{\bfx}^{(k)}} \leq \frac{C}{\sqrt{|\calV|}}$ for all $k \geq 0$, Alg.~\ref{alg:dist_opt} converges to a first-order critical point of $\phi(\bfx)$.

    Let $\bfx^{(\infty)}$ be the joint state that Alg.~\ref{alg:dist_opt} converges to. Also, let $\bfx_c$ be a consensus state, which can be constructed by setting all $x^i$, $i \in \calV$, to an arbitrary state $x_c \in \calM$. Since the squared distance $d^2(\cdot)$ is geodesically convex, and the adjacency matrix $A$ is symmetric and row stochastic, it is possible to show that $\phi(\cdot)$ is also geodesically convex. Thus we have:
    \begin{equation}
        \phi(\bfx_c) \geq \phi(\bfx^{(\infty)}) + \langle g_{\phi}(\bfx^{(\infty)}), \Exp^{-1}_{\bfx^{(\infty)}}{(\bfx_c)} \rangle_{\bfx^{(\infty)}}.\nonumber
    \end{equation}
    Since $\phi(\bfx_c)$ is zero, $\phi(\bfx^{(\infty)})$ is non-negative, and \eqref{eq:phi_convergence} indicates that $\bfx^{(\infty)}$ is a first-order critical point of $\phi(\cdot)$, we arrive at the following for Alg.~\ref{alg:dist_opt}:
    \begin{equation}\label{eq:phi_consensus}
        \phi(\bfx^{(\infty)}) = 0
    \end{equation}
    Note that \eqref{eq:phi_convergence} and \eqref{eq:phi_consensus} hold even for a disconnected graph $\calG$. However, for the global asymptotic consensus, $\calG$ should be a connected graph; otherwise, consensus occurs separately for each connected component of $\calG$.
\end{step}

\begin{step}\label{step:optimality}
    In the last step, we show the optimality properties of Alg.~\ref{alg:dist_opt}. We utilize the Riemannian manifold version of the law of cosines, which can be expressed for a geodesic triangle with side lengths $a$, $b$, and $c$ as follows (Lemma 5, \cite{zhang_sra}):
    \begin{equation}
        a^2 \leq \frac{c\sqrt{|\kappa_{\text{min}}|}}{\tanh{(c\sqrt{|\kappa_{\text{min}}|})}}b^2 + c^2 - 2 b c \cos{(\angle{bc})},
    \label{eq:law_of_cosines}
    \end{equation}
    where $\kappa_{\text{min}}$ is a lower bound on the sectional curvature of the manifold. Now, consider a geodesic triangle specified by ${x^i}^{(k+1)}$, ${\Tilde{x}^i}^{(k)}$, and $x^*$, where $x^* \in \calM$ is an element of the centralized optimal solution of \eqref{eq:dist_opt}, denoted by $\bfx^* \in \calM^{|\calV|}$:
    \begin{equation}
    \begin{aligned}
        d^2({x^i}^{(k+1)}, x^*) \leq d^2&({\Tilde{x}^i}^{(k)}, x^*) + \zeta {\alpha^{(k)}}^2 \|g_{f^i}({\Tilde{x}^i}^{(k)})\|^2_{{\Tilde{x}^i}^{(k)}}\\
        &- 2 \alpha^{(k)} \langle g_{f^i}({\Tilde{x}^i}^{(k)}), \Exp^{-1}_{{\Tilde{x}^i}^{(k)}}{(x^*)} \rangle_{{\Tilde{x}^i}^{(k)}},\nonumber
    \end{aligned}
    \end{equation}
    where $\zeta = \frac{d_{\text{max}}\sqrt{|\kappa_{\text{min}}|}}{\tanh{(d_{\text{max}}\sqrt{|\kappa_{\text{min}}|})}}$ with $d_{\text{max}} \geq \max_{x,y \in \calM} d(x, y)$. Since $\calM$ is compact, $d_{\text{max}}$ is well-defined. Also, the law of cosines still holds using $d_{\text{max}}$ instead of side length $d^2({\Tilde{x}^i}^{(k)}, x^*)$ due to the strict monotonicity of the function $\frac{y}{\tanh{(y)}}$ for $y \in \bbR_{\geq 0}$. As a result of the local objective function concavity and gradient boundedness, we have:
    \begin{equation}
    \begin{aligned}
        2 \alpha^{(k)} (f^i(x^*) - f^i({\Tilde{x}^i}^{(k)})) \leq d^2({\Tilde{x}^i}^{(k)}, x^*) &- d^2({x^i}^{(k+1)}, x^*)\\
        &+ \zeta C^2 {\alpha^{(k)}}^2.\nonumber
    \end{aligned}
    \end{equation}
    Summing over all agents in $\calV$ yields:
    \begin{equation}
    \begin{aligned}
        2 \alpha^{(k)} |\calV| &(F(\bfx^*) - F(\Tilde{\bfx}^{(k)})) \leq\\
        &d^2(\Tilde{\bfx}^{(k)}, \bfx^*) - d^2(\bfx^{(k+1)}, \bfx^*) + \zeta |\calV| C^2 {\alpha^{(k)}}^2.
    \label{eq:F_inequality_alone}
    \end{aligned}
    \end{equation}
    Now, we repeat the same steps for the geodesic triangle specified by $\Tilde{\bfx}^{(k)}$, $\bfx^{(k)}$, and $\bfx^*$:
    \begin{equation}
    \begin{aligned}
        d^2(\Tilde{\bfx}^{(k)}, \bfx^*) \leq d^2(\bfx^{(k)}, \bfx^*) &+ \zeta \epsilon^2 \|g_{\phi}(\bfx^{(k)})\|^2_{\bfx^{(k)}}\\
        &+ 2 \epsilon \langle g_{\phi}(\bfx^{(k)}), \Exp^{-1}_{\bfx^{(k)}}{(\bfx^*)} \rangle_{\bfx^{(k)}}.\nonumber
    \end{aligned}
    \end{equation}
    Using the convexity of $\phi(\cdot)$, the fact that $\phi(\bfx^*)$ is zero by definition, and $\phi(\bfx^{(k)})$ is always non-negative, we have:
    \begin{equation}
        d^2(\Tilde{\bfx}^{(k)}, \bfx^*) \leq d^2(\bfx^{(k)}, \bfx^*) + \zeta \epsilon^2 \|g_{\phi}(\bfx^{(k)})\|^2_{\bfx^{(k)}}.\nonumber
    \end{equation}
    Plugging \eqref{eq:per_step_inequality} into the above inequality yields:
    \begin{equation}
    \begin{aligned}
        d^2(\Tilde{\bfx}^{(k)}, \bfx^*) \leq d^2(\bfx^{(k)}, \bfx^*) &+ \frac{2 \zeta \epsilon}{1 - \frac{L \epsilon}{2}} \Big[\phi(\bfx^{(k)}) - \phi(\bfx^{(k+1)})\\
        &+ \frac{{\alpha^{(k)}}^2 C^2}{|\calV|} [\frac{1 + \epsilon^2 L^2}{\epsilon (1 - \frac{L \epsilon}{2})} + \frac{L}{2}] \Big].\nonumber
    \end{aligned}
    \end{equation}
    Plugging once more into \eqref{eq:F_inequality_alone} leads to:
    \begin{equation}
    \begin{aligned}
        2 \alpha^{(k)} |\calV| (F(\bfx^*) - F(\Tilde{\bfx}^{(k)})) \leq d^2(\bfx^{(k)}, \bfx^*) - d^2(\bfx^{(k+1)}, \bfx^*)&\nonumber\\
        + \zeta |\calV| C^2 {\alpha^{(k)}}^2 + \frac{2 \zeta \epsilon}{1 - \frac{L \epsilon}{2}} \Big[\phi(\bfx^{(k)}) - \phi(\bfx^{(k+1)})&\nonumber\\
        + \frac{{\alpha^{(k)}}^2 C^2}{|\calV|} [\frac{1 + \epsilon^2 L^2}{\epsilon (1 - \frac{L \epsilon}{2})} + \frac{L}{2}] \Big]&.\nonumber
    \end{aligned}
    \end{equation}
    Summing over $k_{\text{max}}$ first iterations of Alg.~\ref{alg:dist_opt} yields:
    \begin{equation}
    \begin{aligned}
        2 |\calV| &\sum_{k=0}^{k_{\text{max}}} \alpha^{(k)} (F(\bfx^*) - F(\Tilde{\bfx}^{(k)})) \leq d^2(\bfx^{(0)}, \bfx^*) + \frac{4 \zeta \phi(\bfx^{(0)})}{2/\epsilon - L}\\
        &+ \zeta C^2 \Big[\frac{2 \epsilon}{|\calV|(1 - \frac{L \epsilon}{2})} [\frac{1 + \epsilon^2 L^2}{\epsilon (1 - \frac{L \epsilon}{2})} + \frac{L}{2}] + |\calV|\Big] \sum_{k=0}^{k_{\text{max}}} {\alpha^{(k)}}^2.\nonumber
    \end{aligned}
    \end{equation}
    It is straightforward to show:
    \begin{equation}
    \begin{aligned}
        \min_{0 \leq k' \leq k_{\text{max}}} \{F(\bfx^*) - F(\Tilde{\bfx}^{(k')})\} &\sum_{k=0}^{k_{\text{max}}} \alpha^{(k)} \leq\\
        &\sum_{k=0}^{k_{\text{max}}} \alpha^{(k)} (F(\bfx^*) - F(\Tilde{\bfx}^{(k)})).\nonumber
    \end{aligned}
    \end{equation}
    Therefore, we have:
    \begin{equation}
    \begin{aligned}
        F(&\bfx^*) \leq \max_{0 \leq k' \leq k_{\text{max}}} \{F(\Tilde{\bfx}^{(k')})\}\\
        &+ \frac{1}{2 |\calV| \sum_{k=0}^{k_{\text{max}}} \alpha^{(k)}} \Big[d^2(\bfx^{(0)}, \bfx^*) + \frac{4 \zeta \phi(\bfx^{(0)})}{2/\epsilon - L}\\
        &+ \zeta C^2 \Big(\frac{2 \epsilon}{|\calV|(1 - \frac{L \epsilon}{2})} (\frac{1 + \epsilon^2 L^2}{\epsilon (1 - \frac{L \epsilon}{2})} + \frac{L}{2}) + |\calV|\Big) \sum_{k=0}^{k_{\text{max}}} {\alpha^{(k)}}^2\Big].\nonumber
    \end{aligned}
    \end{equation}
    Since $d^2(\bfx^{(0)}, \bfx^*)$, $\phi(\bfx^{(0)})$, and $\sum_{k=0}^{\infty} {\alpha^{(k)}}^2$ are bounded and $\sum_{k=0}^{\infty} \alpha^{(k)} = \infty$, the term inside the brackets in the right hand side of the above inequality vanishes as $k_{\text{max}} \rightarrow \infty$. Therefore, $\max_{0 \leq k' \leq k_{\text{max}}} \{F(\Tilde{\bfx}^{(k')})\}$ will be asymptotically lower-bounded by $F(\bfx^*)$. Moreover, since $\|g_{\phi}(\bfx^{(k)})\|_{\bfx^{(k)}}$ and $\alpha^{(k)}$ both shrink to zero as $k \rightarrow \infty$, we derive the same asymptotic lower bound for $F(\bfx^{(k)})$:
    \begin{equation}\label{eq:optimality_prop}
        F(\bfx^*) \leq \lim_{k_{\text{max}} \rightarrow \infty} \; \max_{0 \leq k \leq k_{\text{max}}} \{F(\bfx^{(k)})\}.
    \end{equation}
\end{step}

In summary, the relations in \eqref{eq:phi_convergence} and \eqref{eq:phi_consensus} show convergence to a consensus configuration for step size $\epsilon \in (0, 2/L)$, where $L = 4 (1 + \rho)$. Furthermore, \eqref{eq:optimality_prop} expresses the optimality of Alg.~\ref{alg:dist_opt} with respect to a centralized solution.\qed

\section{Proof of Lemma~\ref{lemma:log_likelihood_obj_decomp}}
\label{app:log_likelihood_obj_decomp}

The conditional independence of observations given the map $\bfm$ allows decomposing the observation model $q^i(\bfz_{1:t}^i|\bfm)$ into a product of single observation models $q^i(\bfz_{\tau}^i|\bfm)$, $\tau \in \crl{1, \ldots, t}$. By applying Bayes rule to the decomposed observation model, the objective function in \eqref{eq:log_likelihood_obj} can be written as:
\begin{equation}
    \sum_{i \in \calV} \sum_{\tau = 1}^{t} \Big( \log{q^i(\bfz_{\tau}^i)} + \bbE_{\bfm \sim p} \big[\log{\frac{q^i(\bfm | \bfz_{\tau}^i)}{p(\bfm)}}\big] \Big).\nonumber
\end{equation}
Using the map independence assumption, the log term inside the expectation can be expressed as the sum of log terms with respect to single cells. The additivity of expected value yields:
\begin{equation}
    \sum_{i \in \calV} \sum_{\tau = 1}^{t} \Big( \log{q^i(\bfz_{\tau}^i)} + \sum_{n = 1}^N \bbE_{\bfm \sim p} \big[\log{\frac{q^i(m_n | \bfz_{\tau}^i)}{p_n(m_n)}}\big] \Big).\nonumber
\end{equation}
Since every term inside the expectation only depends on a single cell $m_n \sim p_n$, $n \in \crl{1, \ldots, N}$, the expectation can thus be simplified to only incorporate one cell instead of the joint distribution $\bfm \sim p$. This leads to the expression in \eqref{eq:log_likelihood_obj_decomp}.\qed



\ifCLASSOPTIONcaptionsoff
  \newpage
\fi


{\small
\bibliographystyle{cls/IEEEtran}
\bibliography{bib/IEEEabrv.bib, bib/IEEEexample.bib}
}

%








\begin{IEEEbiography}
[{\includegraphics[width=1in,height=1.25in,clip,keepaspectratio]{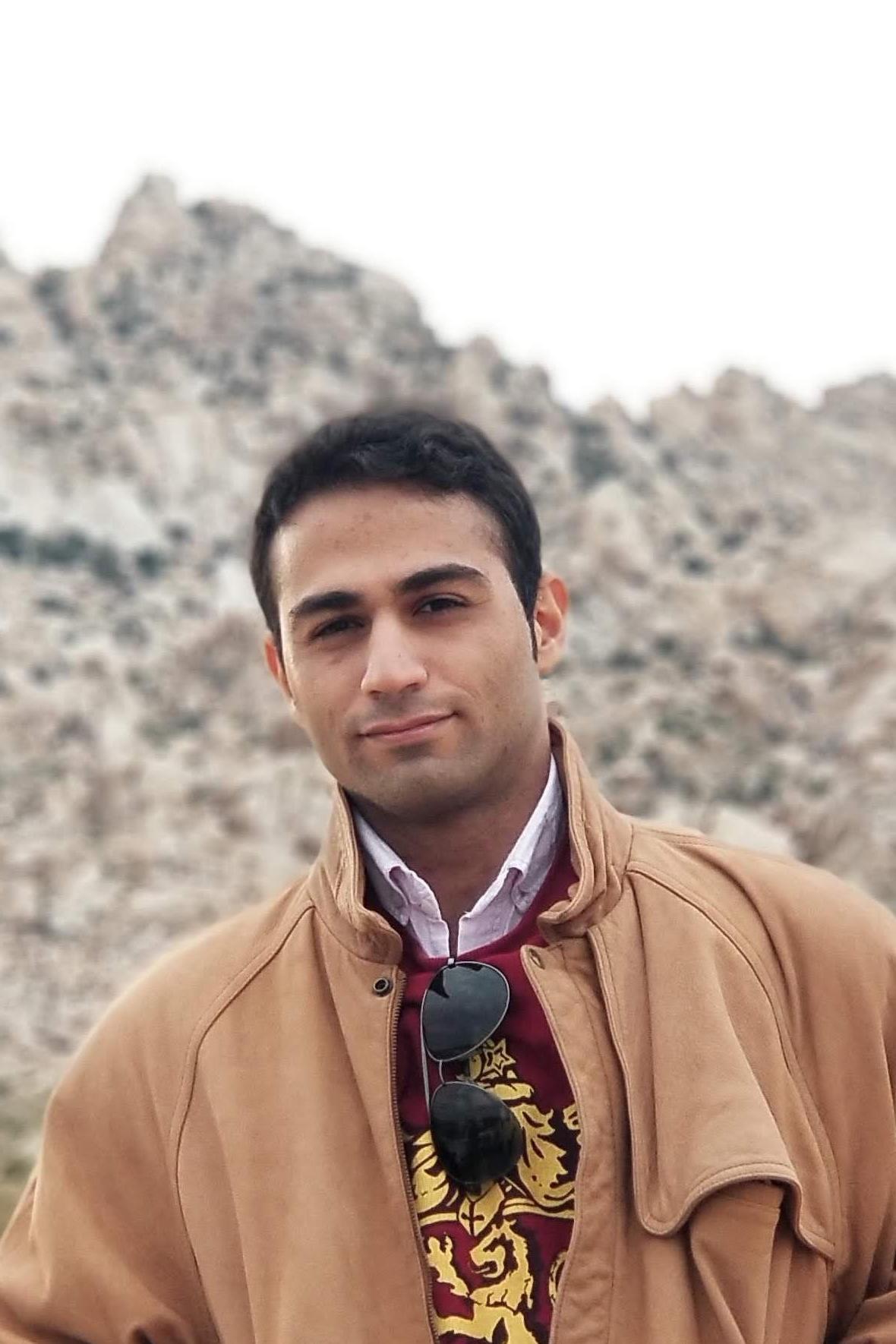}}]{Arash Asgharivaskasi}
(S'22) is a Ph.D. student of Electrical and Computer Engineering at the University of California San Diego, La Jolla, CA. He obtained a B.S. degree in Electrical Engineering from Sharif University of Technology, Tehran, Iran, and an M.S. degree in Electrical and Computer Engineering from the University of California San Diego, La Jolla, CA, USA. His research focuses on active information acquisition using mobile robots with applications to mapping, security, and environmental monitoring.
\end{IEEEbiography}
\begin{IEEEbiography}
[{\includegraphics[width=1in,height=1.25in,clip,keepaspectratio]{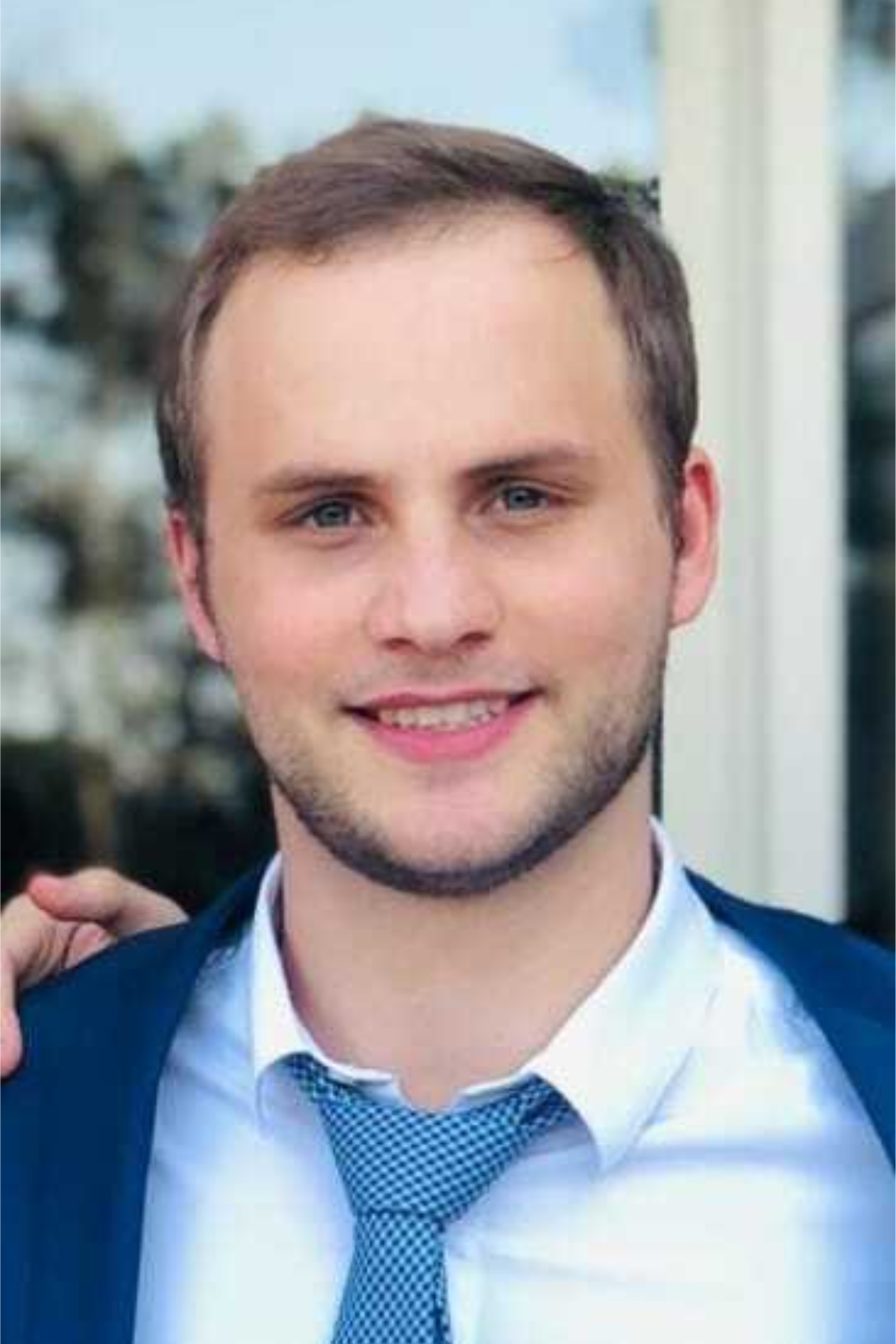}}]{Fritz Girke}
is pursuing an M.S. degree in Electrical Engineering and Information Technology at the Technical University of Munich (TUM), Munich, Germany. They commenced their B.S. degree in Electrical Engineering and Information Technology at TUM in 2017 and completed it in 2021. Their research interests lie in the realm of robotics, specifically focusing on point cloud registration, semantic segmentation, odometry, and simultaneous localization and mapping (SLAM) in mobile robots.
\end{IEEEbiography}
\begin{IEEEbiography}
[{\includegraphics[width=1in,height=1.25in,clip,keepaspectratio]{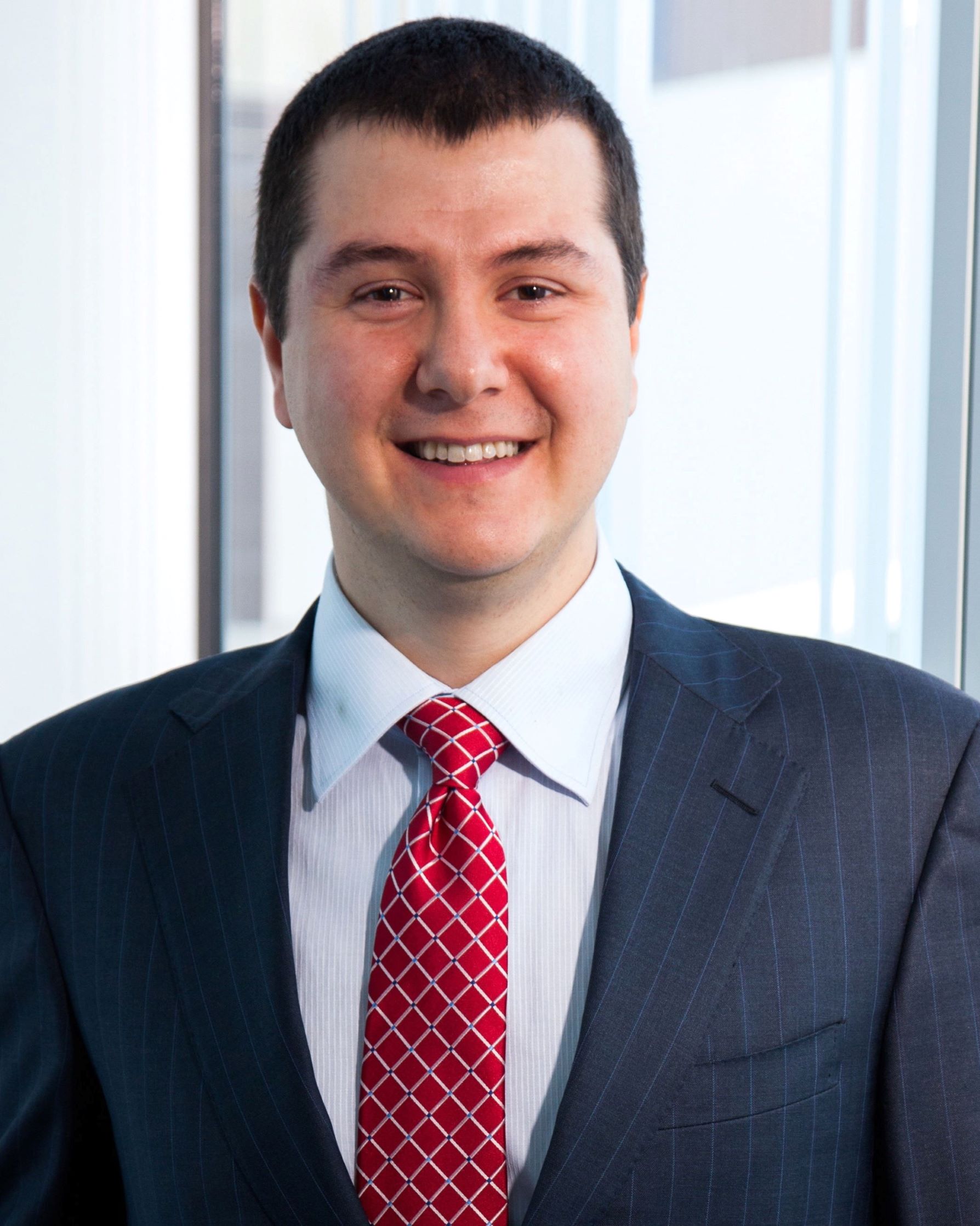}}]{Nikolay Atanasov}
(S'07-M'16-SM'23) is an Assistant Professor of Electrical and Computer Engineering at the University of California San Diego, La Jolla, CA, USA. He obtained a B.S. degree in Electrical Engineering from Trinity College, Hartford, CT, USA in 2008, and M.S. and Ph.D. degrees in Electrical and Systems Engineering from University of Pennsylvania, Philadelphia, PA, USA in 2012 and 2015, respectively. Dr. Atanasov's research focuses on robotics, control theory, and machine learning with emphasis on active perception problems for autonomous mobile robots. He works on probabilistic models that unify geometric and semantic information in simultaneous localization and mapping (SLAM) and on optimal control and reinforcement learning algorithms for minimizing probabilistic model uncertainty. Dr. Atanasov's work has been recognized by the Joseph and Rosaline Wolf award for the best Ph.D. dissertation in Electrical and Systems Engineering at the University of Pennsylvania in 2015, the Best Conference Paper Award at the IEEE International Conference on Robotics and Automation (ICRA) in 2017, the NSF CAREER Award in 2021, and the IEEE RAS Early Academic Career Award in Robotics and Automation in 2023.
\end{IEEEbiography}
\end{document}